\newcommand{\emphblockoption}{drop shadow,
    colframe=black!60,
    colback=black!10,
    coltitle=white!, 
    left=.2pt,
    right=.2pt,
    boxrule=1pt,
    arc=1pt}
\newtheorem{Lemma}{Lemma}
\newtheorem{Theorem}{Theorem}
\newtheorem{Assumption}{Assumption}
\newtheorem{Remark}{Remark}
\newtheorem{Corollary}{Corollary}
\newcommand{\E}{\mathbb{E}}
\newcommand\footnoteref[1]{\protected@xdef\@thefnmark{\ref{#1}}\@footnotemark}
\title[A Single-Timescale Analysis for Stochastic Approximation]{A Single-Timescale Analysis For Stochastic Approximation\\ With Multiple Coupled Sequences}
\begin{document}

\maketitle

\begin{abstract}
Stochastic approximation (SA) with multiple coupled sequences has found broad applications in machine learning such as bilevel learning and reinforcement learning (RL). 
In this paper, we study the finite-time convergence of nonlinear SA with multiple coupled sequences. 
Different from existing multi-timescale analysis, we seek for scenarios where a fine-grained analysis can provide the tight performance guarantee for multi-sequence single-timescale SA (STSA). 
At the heart of our analysis is the smoothness property of the fixed points in multi-sequence SA that holds in many applications. When all sequences have strongly monotone increments, we establish the iteration complexity of $\mathcal{O}(\epsilon^{-1})$ to achieve $\epsilon$-accuracy, which improves the existing $\mathcal{O}(\epsilon^{-1.5})$ complexity for two coupled sequences.
When all but the main sequence have strongly monotone increments, we establish the iteration complexity of $\mathcal{O}(\epsilon^{-2})$. 
The merit of our results lies in that applying them to stochastic bilevel and compositional optimization problems, as well as RL problems leads to either relaxed assumptions or improvements over their existing performance guarantees.
\end{abstract}

\section{Introduction}
 Stochastic approximation (SA) is an iterative procedure used to find the zero of a function when only the noisy estimate of the function is observed. Specifically, with the mapping $v:\mathbb{R}^{d}\mapsto \mathbb{R}^{d}$, the single-sequence SA seeks to solve for $v(x)=0$ with the following iterative update:
 \begin{align}
    x_{k+1}=x_k + \alpha_k (v(x_k)+\xi_k),
 \end{align}
 where $\alpha_k$ is the step size and $\xi_k$ is a random variable.
 Since its introduction in \citep{robbins1951stochastic}, the single-sequence SA has received great interests because of its broad range of applications to areas including stochastic optimization and reinforcement learning (RL) \citep{bottou2018optimization,sutton2009fast}. The asymptotic convergence of single-sequence SA can be established by the ordinary differential equation method; see e.g., \citep{borkar2009stochastic}. To gain more insights into the performance difference of various stochastic optimization algorithms, the finite-time convergence of SA has been widely studied in recent years; see e.g., \citep{nemirovski2009robust,moulines2011nonasympSA,karimi2019nonasymptotic,srikant2019finitetime,mou2020linear,durmus2021stability}.

While most of the SA studies focus on the single-sequence case, the double-sequence SA was introduced in \citep{borkar1999stochastic}, which has been extensively applied to the stochastic optimization and RL methods involving a double-sequence stochastic update structure \citep{sutton2009fast, KondaAC, dalal2020tale}. With mappings $v:\mathbb{R}^{d_0}\times\mathbb{R}^{d_1}\mapsto \mathbb{R}^{d_0}$ and $h:\mathbb{R}^{d_0}\times\mathbb{R}^{d_1} \mapsto \mathbb{R}^{d_1}$, the double-sequence SA seeks to solve $v(x,y)=h(x,y)=0$ with the following update:
\begin{subequations}\label{eq:double-sequence SA}
 \begin{align}
    x_{k+1}&=x_k + \alpha_k (v(x_k,y_k)+\xi_k), \\
    y_{k+1}&=y_k + \beta_k (h(x_k,y_k)+\psi_k),
 \end{align}
 \end{subequations}
 where $\alpha_k,\beta_k$ are the step sizes, and $\xi_k,\psi_k$ are random variables. In \eqref{eq:double-sequence SA}, the update of $x_k$ and that of $y_k$ depend on each other and thus the sequences are \emph{coupled}.
Due to this coupling, the double-sequence SA is more challenging to analyze than its single-sequence counterpart. 

\textbf{Prior art on double-sequence SA.}
Many recent analyses of the double-sequence SA focus on the linear case where $v(x,y)$ and $h(x,y)$ are linear mappings; see e.g., \citep{konda2004convergence,dalal2018finite,gupta2019finitetime,kaledin2020finite}. The key idea there is to use the so-called two-time-scale (TTS) step sizes: \emph{One sequence is updated in the faster time scale while the other is updated in the slower time scale; that is $\lim_{k\rightarrow \infty}\alpha_k/\beta_k=0$.} By doing so, the two sequences are shown to decouple asymptotically, which allows us to leverage the analysis of the single-sequence SA. In particular, \citep{kaledin2020finite} proves an iteration complexity of $\mathcal{O}(\epsilon^{-1})$ to achieve $\epsilon$-accuracy for the TTS linear SA, which is shown to be tight. With similar choice of the step sizes, the TTS nonlinear SA was analyzed in \citep{mokkadem2006convergence,doan2021nonlinear}. In \citep{mokkadem2006convergence}, the finite-time convergence rate of TTS nonlinear SA was established under an assumption that the two sequences converge asymptotically. Later, \citep{doan2021nonlinear} alleviates this assumption and shows that TTS nonlinear SA achieves an iteration complexity of $\mathcal{O}(\epsilon^{-1.5})$. However, this iteration complexity is larger than $\mathcal{O}(\epsilon^{-1})$ of the TTS linear SA.
% In contrast, this is not the case in single-sequence SA, where the nonlinear and linear cases share the same rate. 

The gap between the complexities of nonlinear and linear SA motivates an interesting question:
\begin{center}
    \textbf{Q1: Is it possible to prove a faster rate for the nonlinear SA with two coupled sequences?}
\end{center}

 \begin{wrapfigure}{R}{0.4\textwidth}
     \vspace*{-.6cm}
    \hspace{-.1cm}
    \includegraphics[width=.98\linewidth]{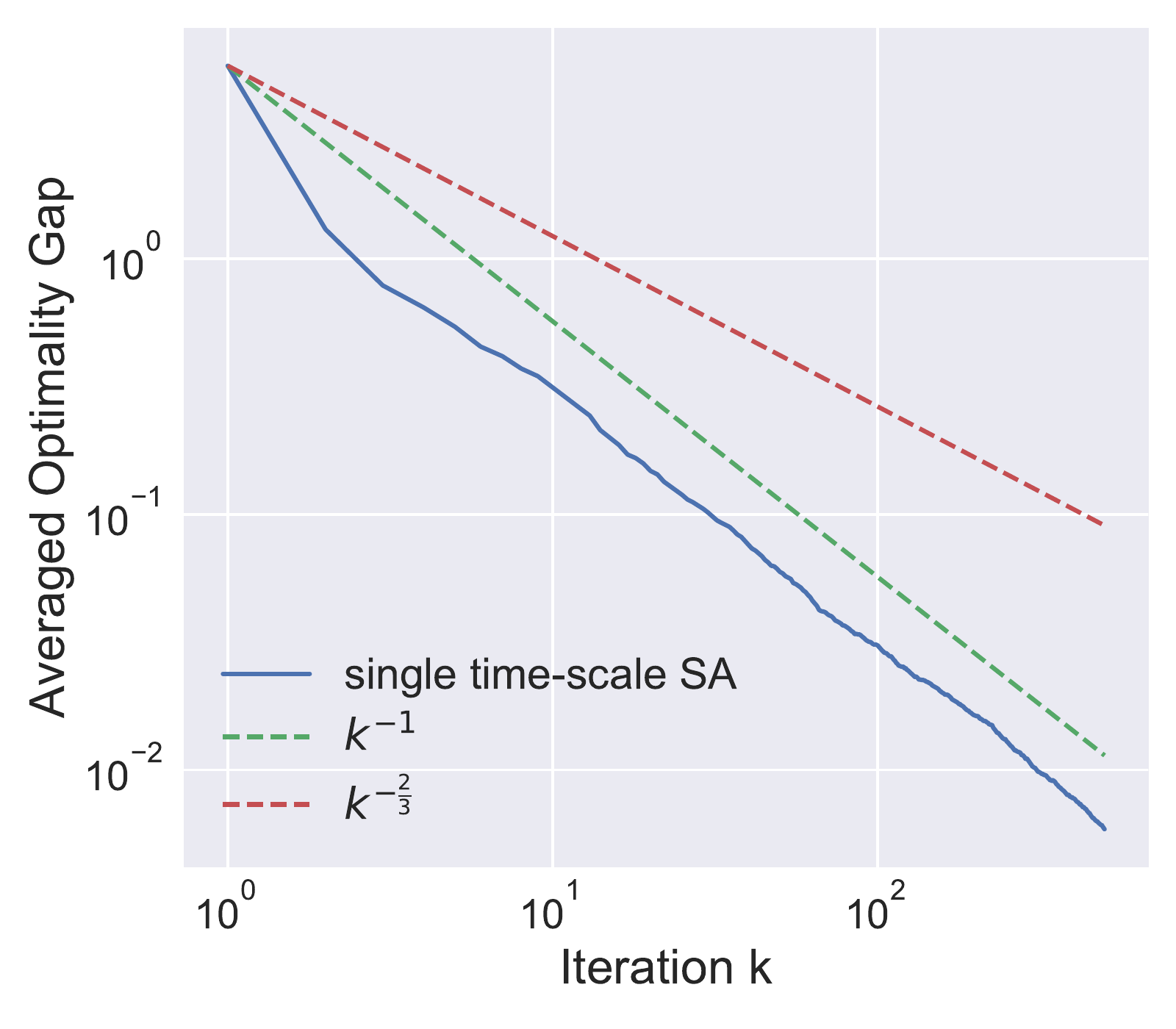}
        % \hspace{-0.1cm}
    % \includegraphics[width=0.4\textwidth]{norm.png}
        \vspace{-.4cm}
   \captionsetup{format=plain}    
    \caption{Solving \eqref{eq:toy problem} via double-sequence nonlinear SA \eqref{eq:double-sequence SA}. The single time-scale nonlinear SA converges with rate $\mathcal{O}(k^{-1})$, which is faster than the theoretical $\mathcal{O}(k^{-\frac{2}{3}})$ rate in \citep{doan2021nonlinear}.}
    \label{fig:toy example}
    \vspace*{-1.6cm}
\end{wrapfigure}

\noindent\textbf{Experiment.}
We first conduct an experiment to examine the possibility. Figure \ref{fig:toy example} shows the performance of using the double-sequence SA \eqref{eq:double-sequence SA} to solve the following problem
\begin{align}\label{eq:toy problem}
   \max_{x\in \mathbb{R}} &~~~-\frac{1}{2}\Big(x^2+\frac{1}{1+e^{-y^*(x)}}\Big)\nonumber\\
    &{\rm s.t.}~~y^*(x) = \arg \min_{y \in \mathbb{R}} \frac{1}{2}(y-x)^2. 
\end{align}

We use the double-sequence SA  \eqref{eq:double-sequence SA} to solve \eqref{eq:toy problem}, where
\begin{align}\label{eq:toy update}
    v(x,y) = -x -  \frac{e^{-y}}{(1+e^{-y})^2},~~~h(x,y) = x-y
\end{align}
and $\zeta_k$, $\xi_k$ are independent Gaussian random variables with zero mean and standard deviations of $0.15$. It is easy to check that \eqref{eq:toy update} satisfies the assumptions in the existing TTS-SA analysis \citep{doan2021nonlinear}. Therefore, we can use the two time-scale step sizes and achieve the iteration complexity of $\mathcal{O}(\epsilon^{-1.5})$. However, as suggested by Figure \ref{fig:toy example}, the iterates still converge with step sizes in a single time-scale ($\alpha_k=\Theta(\frac{1}{k})$, $\beta_k=\Theta(\frac{1}{k})$). In this case, the iteration complexity is $\mathcal{O}(\epsilon^{-1})$, which is the same as that of double-sequence linear SA \citep{kaledin2020finite}. This suggests that existing analysis of double-sequence SA might not be tight, at least for the class of updates similar to \eqref{eq:toy update}. Indeed, as we will show later, the iterates generated by \eqref{eq:toy update} will converge with the iteration complexity of $\mathcal{O}(\epsilon^{-1})$.
% , which is consistent with our experiment.

\begin{table}[t]
 \small
 \centering
	\begin{tabular}{c||c|c||c | c|c| c||c|c|c}
		\hline
		\hline
		&\multicolumn{2}{|c||}{General result} & \multicolumn{4}{|c||}{Application to SBO} & \multicolumn{3}{|c}{Application to multi-level  SCO}\\
		\hline
 	   &\!\!\!\!{Ours}\!\!\!\!  & \!\!\!\!{TTS SA} \!\!\!\!&\!\!\!\! {Ours} \!\!\!\!&\!\!\!\!\!\! {TTSA}\!\!\!\!\!\! &\!\!\!\!\!\!\! {ALSET}\!\!\!\! &\!\!\!\! {ALSET-AC} \!\!\!\!&\!\!\!\! {Ours} \!\!\!\!&\!\!\!\! {$\alpha$-TSCGD} \!\!\!\!&\!\!\!\! {SG-MRL}\!\! \!\!\\
 	   \hline
 	   \!\!\!\!{SM}\!\!\!\! & \!\!\!\!{$\mathcal{O}(\epsilon^{-1})$}\!\!\!\!&\!\!\!\!\! {$\mathcal{O}(\epsilon^{-1.5})$} \!\!\!\!\!&\!\!\!\! {$\tilde{\mathcal{O}}(\epsilon^{-1})$} & \!\!\!{$\tilde{\mathcal{O}}(\epsilon^{-1.5})$} \!\!\!\!&\!\!\!\! {\char`\~} \!\!\!\!&\!\!\!\! {\char`\~} \!\!\!\!&\!\!\!\! {$\mathcal{O}(\epsilon^{-1})$} \!\!\!\!&\!\!\!\! {$\mathcal{O}(\epsilon^{-\frac{N+5}{4}})$} \!\!\!\!&\!\!\!\! {\char`\~}\\
 	   \hline
 	   \!\!\!\!{N-SM } \!\!\!\!& {$\mathcal{O}(\epsilon^{-2})$} & {\char`\~} &\!\!\!  {$\tilde{\mathcal{O}}(\epsilon^{-2})$} \!\!\! &\!\!\!  {$\tilde{\mathcal{O}}(\epsilon^{-2.5})$} \!\!\! &\!\!\!  {$\tilde{\mathcal{O}}(\epsilon^{-2})$}\!\!\!  & \!\!\! {$\mathcal{O}(\epsilon^{-2})$}\!\!\!  &\!\! {$\mathcal{O}(\epsilon^{-2})$} & \!\!{$\mathcal{O}(\epsilon^{-\frac{N+8}{4}})$} &\!\! {$\mathcal{O}(\epsilon^{-4})$}\\
 	   \hline
 	   \!\!\!\!{Merit} \!\!\!\!&\!\!\!\! {\char`\~} \!\!\!\!&\!\!\!\! {Rate $\uparrow$} \!\!\!\!&\!\!\!\! {\char`\~} \!\!\!\!&\!\!\!\! {Rate $\uparrow$} \!\!\!\!&\!\!\!\! {Relax} \!\!\!\!&\!\!\!\! { Relax} \!\!\!\!&\!\!\!\! {\char`\~} \!\!\!\!&\!\!\!\! {Rate $\uparrow$} & {Rate $\uparrow$}\\
 	   \hline
 	   \hline
		\end{tabular}
		\vspace{5pt}
  \caption{Comparisons with TTS SA \citep{doan2021nonlinear}, TTSA \citep{hong2020ac}, ALSET and ALSET-AC \citep{chen2021tighter}, $\alpha$-TSCGD \citep{yang2018multilevel} and SG-MRL \citep{fallah2021convergence}. Strongly-monotone (SM) and non-strongly-monotone (N-SM) respectively represents the case where the main sequence has strongly-monotone and non-strongly-monotone increments. Rows of SM/N-SM are for the complexity and the row of Merit is for the improvements of this work over the existing work (``Rate $\uparrow$'' stands for faster rate; ``Relax'' for relaxed assumptions).}		\label{table:comp1}
  \vspace{-0.2cm}
\end{table}

Furthermore, existing works on TTS SA mainly focus on the double-sequence case. While in cases such as the multi-level stochastic optimization; see e.g., \citep{yang2018multilevel,sato2021gradient}, more than two sequences are involved. This necessitates the use of the multi-sequence SA. 
Specifically, with mappings $v:\mathbb{R}^{d_0}\times \mathbb{R}^{d_1} \dots \times \mathbb{R}^{d_N} \mapsto \mathbb{R}^{d_0}$, $h^n:\mathbb{R}^{d_{n-1}}\times \mathbb{R}^{d_n} \mapsto \mathbb{R}^{d_n}$, we consider the following multi-sequence Single-Timescale SA (STSA) update in this work:
\begin{tcolorbox}[emphblock]
\vspace{-0.4cm}
\begin{subequations}\label{eq:multiSA}
\begin{align}
\label{eq:updatey2}
\hspace{-2cm}{\rm\bf (STSA)}~~~~~~~~~~~~    y_{k+1}^n &= y_k^n + \beta_{k,n}\big(h^n(y_k^{n-1},y_k^n) + \psi_k^n\big), ~~ n =1,2,...,N\\
    \label{eq:updatex2}
    x_{k+1} &= x_k + \alpha_k \big(v(x_k,y_k^1,y_k^2,\dots,y_k^N)+\xi_k \big)
    \end{align}
\end{subequations}
\end{tcolorbox}
\hspace{-0.7cm}
where $\alpha_k, \beta_{k,1},...,\beta_{k,N}$ are the step sizes, and $\xi_k, \psi_k^1, \dots, \psi_k^N$ are random variables. For conciseness, we have used $y_k^0\coloneqq x_k$ here. 
We call the update \eqref{eq:multiSA} a \emph{single-timescale} update since we will show the two sequences are updated in the same time scale; that is $\lim_{k\rightarrow \infty}\alpha_k/\beta_k=c$ for some constant $c>0$. 
Our goal is to find the unique fixed-points $x^*,y^{1,*},\dots,y^{N,*}$ such that
\begin{gather}\label{eq:goal}
    v(x^*,y^{1,*},\dots,y^{N,*})=h^1(x^*,y^{1,*})=\dots=h^N(y^{N-1,*},y^{N,*})=0.
\end{gather}
Observing that in \eqref{eq:multiSA}, for every $n$, the sequence of $y_k^n$ is coupled with that of $y_k^{n-1}$ and is ultimately coupled with the main sequence $x_k$. Meanwhile, the update of $x_k$ also depends on $\{y_k^n\}_{n=1}^N$. Since all sequences in \eqref{eq:multiSA} are coupled, \eqref{eq:multiSA} is more challenging to analyze than the double-sequence SA.

\textbf{Prior art related to multi-sequence SA.} The multilevel optimization problem \citep{sato2021gradient} and the multilevel SCO problem \citep{balasubramanian2021stochastic,xiao2022projection,zhang2021multilevel,ruszczynski2021stochastic,zhang2020optimal} are closely related to the multi-sequence SA. To tackle the multi-level structure, these recent advances have modified the vanilla multi-sequence SA update to achieve the state-of-the-art complexity and thus their updates are not exactly in the form of \eqref{eq:multiSA}. In particular, the recent work \cite{sato2021gradient} mainly focuses on the deterministic multilevel optimization, where it approximates the desired increment of $x$ by a nested algorithm, and thus its update is not in the form of \eqref{eq:multiSA}. 
In contrast, we focus on the multi-sequence STSA update in \eqref{eq:multiSA}. 
To the best of our knowledge, the only analysis for the exact update \eqref{eq:multiSA} is \citep{yang2018multilevel}, where the TTS technique is generalized to multi-time-scale. 
However, in \citep{yang2018multilevel}, the iteration complexity will get worse as the number of sequences $N$ increases.

This gives rise to another interesting question: 
% \TC{Add prior art on multi-SA here. \citep{balasubramanian2021stochastic,xiao2022projection,zhang2021multilevel,chen2021solving,ruszczynski2021stochastic,zhang2020optimal}}
\begin{center}
    \textbf{Q2: Is it possible to establish convergence rate of the update \eqref{eq:multiSA} that is independent of $N$?}
\end{center}
In this work, we give affirmative answers to both Questions \textbf{Q1} and \textbf{Q2}. 

\textbf{Our contributions.}
By exploiting the \emph{smooth assumption} of the fixed points that can be satisfied in many applications, we show that the vanilla nonlinear STSA can run in a \emph{single time scale}! We further prove that the order of the convergence rate is \emph{independent of} the number of sequences $N$.  
Intuitively, this is possible because when the fixed point $y^{n,*}$ is smooth in $x$, the $y_k^n$-update \emph{converges fast enough} such that its fixed-point residual after one update is at the same order as the drift of $y^{n,*}$.

In the context of prior art, our  contributions can be summarized as follows (see Table \ref{table:comp1}).

\textbf{C1) Single-timescale analysis for multi-sequence SA.} 
Different from existing two-timescale analysis \citep{mokkadem2006convergence,borkar1997actor}, we establish a unifying \textbf{S}ingle-\textbf{T}imescale analysis for \textbf{SA} with multiple coupled sequences that we term \textbf{STSA}. 
When all the sequences have strongly-monotone increments, we improve the $\mathcal{O}(\epsilon^{-1.5})$ iteration complexity for multi-sequence TTS-SA in \citep{doan2021nonlinear} to $\mathcal{O}(\epsilon^{-1})$. When all but the main sequence have strongly monotone increments, we provide the $\mathcal{O}(\epsilon^{-2})$ iteration complexity.

\textbf{C2) STSA for stochastic bilevel optimization (SBO).} When applying our generic results to the SBO problem with double-sequence SA, for  strongly-concave objective functions, we improve the best-known sample complexity $\tilde{\mathcal{O}}(\epsilon^{-1.5})$  of TTSA in \citep{hong2020ac} to $\tilde{\mathcal{O}}(\epsilon^{-1})$. For the non-concave objective function, we achieve the same sample complexity $\mathcal{O}(\epsilon^{-2})$ of ALSET while relaxing the bounded upper-level gradient assumption made in \citep{chen2021tighter}.

\textbf{C3) STSA for stochastic compositional optimization (SCO).} When applying our results to the multi-level SCO problems, we improve the level-dependent sample complexities $\mathcal{O}(\epsilon^{-\frac{N+5}{4}})$ and $\mathcal{O}(\epsilon^{-\frac{N+8}{4}})$ of multi-sequence SA based $\alpha$-TSCGD method in \citep{yang2018multilevel} to the level-independent iteration complexities $\tilde{\mathcal{O}}(\epsilon^{-1})$ and $\mathcal{O}(\epsilon^{-2})$, under the strongly-concave and non-concave objective functions, respectively.

\textbf{C4) STSA for policy optimization in RL problems.} Moreover, applying our results to the actor-critic method achieves the same $\mathcal{O}(\epsilon^{-2})$ sample complexity of ALSET-AC in \citep{chen2021tighter} while relaxing the unverifiable assumption on the stationary distribution of Markov chains; applying our results to the meta policy gradient  improves the $\mathcal{O}(\epsilon^{-4})$ sample complexity of SG-MRL in \citep{fallah2021convergence} to $\mathcal{O}(\epsilon^{-2})$. 

% We compare our results with existing results in Table \ref{table:comp1}.
% and will be discussed in each section.

\textbf{Roadmap.} We organize the results as follows. With the multi-sequence SA update introduced in this section, we first present our main results in Section \ref{section:result}. In Section \ref{section:bilevel}, we apply our results to the SBO problem which also encompasses the actor-critic method. In Section \ref{section:sco}, we apply our result to the SCO problem which also encompasses the model-agnostic meta policy gradient method.

\section{Main Results: Convergence of Single-timescale Multi-sequence SA}\label{section:result}
Before introducing the main results, we will first make some standard assumptions.
Throughout the discussion, we define $[N]\coloneqq \{1,2,...,N\}$, $[K]\coloneqq \{1,2,...,K\}$ and $y^0 \coloneqq x$ for conciseness.
\begin{Assumption}[Smoothness of the fixed points]\label{assumption:y*}
For any $n \in [N]$ and $y^{n-1} \in \mathbb{R}^{d_{n-1}}$, there exists a unique $y^{n,*}(y^{n-1}) \in \mathbb{R}^{d_n}$ such that $h^n(y^{n-1},y^{n,*}(y^{n-1}))=0$. Moreover, there exist constants $L_{y,n}$ and $L_{y',n}$ such that for any $y^{n-1},\Bar{y}^{n-1}\in \mathbb{R}^{d_{n-1}}$, the following inequalities hold
\begin{subequations}
\begin{align}\label{eq:y*lip}
    \|y^{n,*}(y^{n-1}) - y^{n,*}(\Bar{y}^{n-1})\| &\leq L_{y,n} \|y^{n-1}-\Bar{y}^{n-1}\|,  \\
    \label{eq:y*smooth}
    \|\nabla y^{n,*}(y^{n-1}) - \nabla y^{n,*}(\Bar{y}^{n-1})\| &\leq L_{y',n} \|y^{n-1}-\Bar{y}^{n-1}\|.
\end{align}
\end{subequations}
\end{Assumption}
Due to the change of $y_k^{n-1}$ at each iteration, the solution of $h^n(y_k^{n-1},y^n)=0$ with respect to (w.r.t.) $y^n$, that is $y^{n,*}(y_k^{n-1})$, is drifting over consecutive iterations. Given $y_k^{n-1}$, since only one-step of $y_k^n$ update is performed at each iteration, one can only hope to establish convergence of $y_k^n$ if the drift of its optimal solution is controlled in some sense. Assumption \ref{assumption:y*} ensures both the zeroth-order and first-order drifts are controlled in the same scale of the change of $y_k^{n-1}$. This assumption is satisfied in linear SA \citep{kaledin2020finite} and other applications which will be shown later.

Define $v(x) \coloneqq v\big(x,y^{1,*}(x), y^{2,*}(y^{1,*}(x)),\dots,y^{N,*}(\dots y^{2,*}(y^{1,*}(x))\dots)\big)$. With $y^{1:N}$ as a concise notation for $(y^1,...,y^N )$, we make the following assumption.
\begin{Assumption}[Lipschitz continuity of increments]\label{assumption:vglip}
For any $n \in [N]$, $x,\Bar{x} \in \mathbb{R}^{d_0}$ and $y^n, \Bar{y}^n \in \mathbb{R}^{d_n}$, there exist constants $L_v$, $L_{v,y}$ and $L_{h,n}$ such that the following inequalities hold
\begin{subequations}
\begin{align}
\label{eq:v lip}    \|v(x) - v(\Bar{x})\|&\!\leq\! L_v \|x-\Bar{x}\|,\\
    \|v(x,y^{1:N})\!-\!v(x,\Bar{y}^{1:N})\| &\!\leq\! L_{v,y} \!\sum_{n=1}^N\!\|y^n \!-\!\Bar{y}^n\|,\! \\
    \|h^n(y^{n-1},y^n)\!-\!h^n(y^{n-1},\Bar{y}^n)\| &\!\leq\! L_{h,n} \|y^n -\Bar{y}^n\|.
\end{align}
\end{subequations}
\end{Assumption}
% This assumption is standard in the literature, see e.g., \citep{doan2021nonlinear,hong2020ac}.

Define $\mathcal{F}_k$ as the $\sigma$-algebra generated by the random variables in $\{x_i,y_i^{1:N}\}_{i=1}^{k}$ and $\mathcal{F}_k^n$ as the $\sigma$-algebra generated by $\{x_i,y_i^{1:N}\}_{i=1}^{k}\cup\{y_{k+1}^n\}$. We make the following assumption on the noises.
\begin{Assumption}[Bias and variance]\label{assumption:noise}
There exist constants $\{c_n,\sigma_n\}_{n=0}^N$ such that $\forall k, n$, $\|\E[\xi_k | \mathcal{F}_k^1]\|^2 \leq c_0^2\alpha_k$ and $\|\E[\psi_k^n | \mathcal{F}_k^{n+1}]\|^2\leq c_n^2\beta_{k,n}$; $\E[\|\xi_k\|^2 | \mathcal{F}_k^1] \leq \sigma_0^2$ and $\E[\|\psi_k^n\| | \mathcal{F}_k^{n+1}] \leq \sigma_n^2$.
\end{Assumption}
Here we define $\mathcal{F}_k^{N+1}\coloneqq \mathcal{F}_k$. Assumption \ref{assumption:noise} is a generalized version of the standard zero-mean noise with bounded moments assumption \citep{konda2004convergence}. Given $\mathcal{F}_k$, if $\psi_k^n$ and $\xi_k$ are independent of each other and have zero-mean, Assumption \ref{assumption:noise} holds with $c_n=0$ for any $n$.
\begin{Assumption}[Monotonicity of $h$]\label{assumption:strongly g}
For $ n \in [N]$, $h^n(y^{n-1},y^n)$ is one-point strongly monotone on $y^{n,*}(y^{n-1})$ given any $y^{n-1}$; that is, there exists constant $\lambda_n>0$ such that (cf. {\small$h^n(y^{n-1}, y^{n,*})=0$})
\begin{equation}
    \big\langle y^n\!-\!y^{n,*}(y^{n-1}),h^n(y^{n-1},y^n) \big\rangle \leq -\lambda_n \|y^n\!-\!y^{n,*}(y^{n-1})\|^2.
\end{equation}
\end{Assumption}
Assumption \ref{assumption:strongly g} is implied by the standard regularity assumptions in the previous works on TTS linear SA \citep{konda2004convergence,kaledin2020finite}, and has also been exploited in the TTS nonlinear SA works; see e.g., \citep{mokkadem2006convergence,doan2021nonlinear}.

\subsection{The strongly-monotone case}
We first consider the case when the main sequence $x_k$ has strongly-monotone increment.
\begin{Assumption}[Monotonicity of $v$]\label{assumption:strongly v}
Suppose $v(x)$ is one-point strongly monotone on $x^*$; that is, there exists a positive constant $\lambda_0$ such that (cf. $v(x^*)=0$)
\begin{equation}
    \big\langle x-x^*,v(x) \big\rangle \leq -\lambda_0 \|x-x^*\|^2.
\end{equation}
\end{Assumption}
Same as Assumption \ref{assumption:strongly g}, Assumption \ref{assumption:strongly v} is standard in the previous works on TTS SA \citep{mokkadem2006convergence,doan2021nonlinear}. This assumption is a regularity assumption in the case of TTS linear SA; see e.g., \cite[Assumption 2.3]{konda2004convergence}. Or in the case of bilevel optimization which will be discussed later, this assumption is satisfied when the objective function is strongly-concave.

Due to space limitation, we directly present the  result below and defer the proof to Appendix \ref{section:strongly monotone appendix}.
\begin{tcolorbox}[emphblock]
\vspace{-0.1cm}
\begin{Theorem}\label{theorem:strongly monotone}
Consider the STSA sequences generated by  \eqref{eq:multiSA} for $k=[K]$. Suppose \textbf{Assumptions \ref{assumption:y*}--\ref{assumption:strongly v}} hold. Selecting step sizes $\alpha_k = \Theta(\frac{1}{k})$ and $\beta_{k,n}=\Theta(\frac{1}{k})$, then it holds for any $k$ that
\begin{equation}
    \E\|x_k-x^*\|^2 + \sum_{n=1}^N\E\|y_k^n-y^{n,*}(y_k^{n-1})\|^2 = \mathcal{O}\Big(\frac{1}{k} \Big)
\end{equation}
where $\mathcal{O}(\cdot)$ hides constants in the polynomial of $N$. 
% and we have used $y_k^0 = x_k$ for convenience. 
Moreover, for any $n \in [N]$, we have
% \begin{subequations}
\begin{equation}
   \lim_{k\xrightarrow[]{}\infty}\|x_k- x^*\|^2=0~~~\text{almost surely (a.s.)},~~\lim_{k\xrightarrow[]{}\infty}\|y_k^n- y^{n,*}(y_k^{n-1})\|^2 = 0~~~\text{a.s.}
\end{equation}
% \end{subequations}
\end{Theorem}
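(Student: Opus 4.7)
The plan is to introduce a weighted Lyapunov function
\[
V_k = \|x_k - x^*\|^2 + \sum_{n=1}^{N} \omega_n \|y_k^n - y^{n,*}(y_k^{n-1})\|^2,
\]
with positive weights $\omega_n$ to be chosen, and to derive a one-step inequality of the form $\E[V_{k+1}\mid \mathcal{F}_k] \le (1-c\alpha_k) V_k + C\alpha_k^2$. Given such a recursion, the $\mathcal{O}(1/k)$ mean-square rate follows by the standard induction for contractive stochastic recursions when $\alpha_k,\beta_{k,n}=\Theta(1/k)$, and the almost-sure statement follows by applying the Robbins--Siegmund supermartingale convergence theorem to the same inequality.

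For each auxiliary level $n$, I would expand
\[
y_{k+1}^n - y^{n,*}(y_{k+1}^{n-1}) = \bigl(y_k^n - y^{n,*}(y_k^{n-1})\bigr) + \beta_{k,n}\bigl(h^n(y_k^{n-1},y_k^n) + \psi_k^n\bigr) + \bigl(y^{n,*}(y_k^{n-1}) - y^{n,*}(y_{k+1}^{n-1})\bigr),
\]
square, and take conditional expectation with respect to the nested filtration $\mathcal{F}_k^{n+1}$. The inner product between the old residual and $h^n$ contributes $-2\lambda_n\beta_{k,n}\|y_k^n - y^{n,*}(y_k^{n-1})\|^2$ via Assumption~\ref{assumption:strongly g}; the squared increment $\|h^n\|^2$ is of order $\beta_{k,n}^2$ after using $h^n(y_k^{n-1},y^{n,*}(y_k^{n-1}))=0$ and the Lipschitz bound in Assumption~\ref{assumption:vglip}; the noise terms are handled by Assumption~\ref{assumption:noise}.

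For the main sequence, the analogous expansion
\[
x_{k+1} - x^* = (x_k - x^*) + \alpha_k\bigl(v(x_k) + \xi_k\bigr) + \alpha_k\bigl(v(x_k,y_k^{1:N}) - v(x_k)\bigr)
\]
yields, via Assumption~\ref{assumption:strongly v}, the contractive term $-2\lambda_0\alpha_k\|x_k-x^*\|^2$, while the mismatch $\|v(x_k,y_k^{1:N}) - v(x_k)\|$ is bounded through Assumption~\ref{assumption:vglip} together with the Lipschitz bounds on the nested fixed-point maps (Assumption~\ref{assumption:y*}) by a linear combination of the residuals $\|y_k^n - y^{n,*}(y_k^{n-1})\|$. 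Young's inequality then splits the resulting cross term into a piece absorbed by the contraction in $x$ and pieces that merge into the $\omega_n$-weighted auxiliary residuals. Choosing the weights $\omega_n$ large (growing with $n$ to reflect the cascade) and the step-size ratios $\beta_{k,n}/\alpha_k$ sufficiently large gives the desired net contraction on $V_k$.

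The main obstacle is the drift term $y^{n,*}(y_k^{n-1}) - y^{n,*}(y_{k+1}^{n-1})$. The naive bound $L_{y,n}\|y_{k+1}^{n-1}-y_k^{n-1}\|=O(\beta_{k,n-1})$, squared, already gives the correct $O(\beta_{k,n-1}^2)$ order, but the cross term between this drift and the old residual is only $O(\beta_{k,n-1})\cdot\|y_k^n - y^{n,*}(y_k^{n-1})\|$ and, if treated by raw Young's inequality, produces an $O(\beta_{k,n-1})\cdot\|y_k^n - y^{n,*}(y_k^{n-1})\|^2$ term of the same order as the $-2\lambda_n\beta_{k,n}$ contraction --- precisely the obstruction that previously forced two-timescale analyses. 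Here the second-order smoothness $L_{y',n}$ in Assumption~\ref{assumption:y*} becomes essential: writing $y^{n,*}(y_{k+1}^{n-1}) - y^{n,*}(y_k^{n-1}) = \nabla y^{n,*}(y_k^{n-1})\,(y_{k+1}^{n-1}-y_k^{n-1}) + O(\|y_{k+1}^{n-1}-y_k^{n-1}\|^2)$ isolates a linear piece whose conditional expectation can be controlled via Assumption~\ref{assumption:noise}, while the quadratic remainder lives at order $\beta_{k,n-1}^2$. Propagating this refinement through all $N$ levels, with careful bookkeeping so that the constants remain polynomial in $N$, is the most delicate part of the argument.
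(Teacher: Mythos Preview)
Your proposal is correct and follows essentially the same route as the paper: the paper uses the unweighted Lyapunov $\mathcal{J}_k=\|x_k-x^*\|^2+\sum_{n}\|y_k^n-y_k^{n,*}\|^2$ and lets carefully tuned step-size constants (see conditions~\eqref{eq:step size a}--\eqref{eq:step size c} and Lemma~\ref{lemma:step sizes}) play the role of your weights $\omega_n$ in absorbing the cascade, after which the Robbins--Siegmund argument is applied exactly as you describe. One point worth sharpening in your account of the drift term: the second-order smoothness $L_{y',n}$ is needed specifically for the \emph{noise} portion of the linearized drift---it renders the factor $\nabla y^{n,*}(y_k^{n-1})$ measurable so that the bias bound in Assumption~\ref{assumption:noise} converts the otherwise $O(\beta_{k,n-1})$ additive contribution into $O(\beta_{k,n-1}^2)$---whereas the deterministic portion $\nabla y^{n,*}(y_k^{n-1})\,\beta_{k,n-1}h^{n-1}(y_k^{n-2},y_k^{n-1})$ is \emph{not} small in conditional expectation and instead produces a coupling term of order $\beta_{k,n-1}\|y_k^{n-1}-y_k^{n-1,*}\|^2$ that must be fed into level $n{-}1$'s contraction via the weight/step-size balancing you allude to.
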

\end{tcolorbox}
 It is worth noting that with \eqref{eq:y*lip}, Theorem \ref{theorem:strongly monotone} also implies the same convergence result for the error metric $\|x_k-x^*\|^2+\sum_{n=1}^N\|y_k^n-y^{n,*}\|^2$, the formal justification of which is deferred to the proof of Theorem \ref{theorem:strongly monotone}. In addition, the order of convergence rate in Theorem \ref{theorem:strongly monotone} is independent of $N$, which is in contrast to the convergence rate that gets worse as $N$ increases \citep{doan2021nonlinear,yang2018multilevel}.
% Additionally Theorem \ref{theorem:strongly monotone} can be further generalized to the update that alternates between multiple steps of $y^n$ updates and one-step $x$ update. The step size choice range will increase as the steps of $y^n$ updates increase.

\begin{Remark}[Comparison with prior art in multi-sequence SA]
\emph{Theorem \ref{theorem:strongly monotone} fills the gap between the convergence rate of double-sequence linear SA and that of double-sequence nonlinear SA by improving over the $\mathcal{O}(k^{-\frac{2}{3}})$ rate shown in \citep{doan2021nonlinear}. While this is done under the additional assumption \eqref{eq:y*smooth}, as will be shown later, this assumption is satisfied in various applications.
Theorem \ref{theorem:strongly monotone} also generalizes the $\mathcal{O}(\frac{1}{k})$ convergence rate in the double-sequence linear SA analysis (e.g., \citep{kaledin2020finite}) to the multi-sequence nonlinear SA case.}
\end{Remark}

\subsection{The non-strongly-monotone case}
Some applications of multi-sequence nonlinear SA such as the actor-critic method \citep{KondaAC}, Assumption \ref{assumption:strongly v} does not hold. This motivates us to consider a more general setting in this subsection where $v(x)$ is non-strongly-monotone.

Throughout this subsection, we make the following assumption.
\vspace{-0.15cm}
\begin{Assumption}\label{assumption:integral}
Suppose there exists a mapping $F:\mathbb{R}^{d_0}\mapsto \mathbb{R}$ such that $\nabla F(x)=v(x)$. The sequence of $\{x_k\}$ is contained in an open set over which $F(x)$ is upper bounded; e.g., $F(x) \leq C_F$.
\end{Assumption}
Assumption \ref{assumption:integral} is standard in SA; see e.g., \citep{karimi2019nonasymptotic}. 
As will be shown later, $F(x)$ can be chosen as the objective function when applying SA to maximization problems. 

The following theorem gives the general finite-time convergence result of the nonlinear SA when the main sequence has the non-strongly-monotone increment. The proof is deferred to Appendix \ref{section:non strongly monotone appendix}.
% \begin{Theorem}\label{theorem:non-strongly-monotone}
% Consider the sequences generated by the scheme \eqref{eq:multiSA} for $k=[K]$. Suppose Assumption \ref{assumption:y*}--\ref{assumption:strongly g} and \ref{assumption:integral} hold. Select constant step sizes $\alpha_k = \Theta(\frac{1}{\sqrt{K}})$ and $\beta_k=\Theta(\frac{1}{\sqrt{K}})$ that are properly small, then it holds that
% \begin{align}\label{eq:theoremresult2}
%     &\frac{1}{K}\sum_{k=1}^K \E\|v(x_k)\|^2 = \mathcal{O}\Big(\frac{1}{\sqrt{K}} \Big), \nonumber\\
%     &\frac{1}{K}\sum_{k=1}^K \sum_{n=1}^N \E\|y_k^n - y^{n,*}(y_k^{n-1})\|^2 = \mathcal{O}\Big(\frac{1}{\sqrt{K}} \Big).
% \end{align}
% \end{Theorem}
% Theorem \ref{theorem:non-strongly-monotone} implies the convergence of $\E\|x_k-x^*\|^2$ with an additional assumption that the so-called error bound inequality holds, i.e. there exists a positive constant $c_v$ such that $\|x_k-x^*\|$ is upper bounded by $c_v\|v(x)\|$. The error metric $\|v(x)\|$ used in Theorem \ref{theorem:non-strongly-monotone} is still of interest since it is a general measure of the convergence of $x_k$ widely adopted in many applications of SA, especially when the increment does not satisfy any error bound inequality.
% The constant step size chosen in Theorem \ref{theorem:non-strongly-monotone} does not guarantee the asymptotic convergence. In the following theorem, we establish an almost sure asymptotic convergence with diminishing step sizes.
\begin{tcolorbox}[emphblock]
\vspace{-0.1cm}
\begin{Theorem}\label{theorem:non-strongly-monotone-2}
    Consider the STSA sequences generated by \eqref{eq:multiSA} for $k=[K]$. Suppose \textbf{Assumptions \ref{assumption:y*}--\ref{assumption:strongly g} and \ref{assumption:integral}} hold. Selecting $\alpha_k = \Theta(\frac{1}{\sqrt{K }})$, $\beta_{k,n}=\Theta(\frac{1}{\sqrt{K}})$, then it holds that
\begin{align}\label{eq:theoremresult3}
    &\frac{1}{K}\sum_{k=1}^K  \Big(\E\|\nabla F(x_k)\|^2 + \sum_{n=1}^N \E\|y_k^n - y^{n,*}(y_k^{n-1})\|^2\Big) = \mathcal{O}\Big(\frac{1}{\sqrt{K}} \Big)
\end{align}
where $\mathcal{O}(\cdot)$ hides problem dependent constants of a polynomial of $N$, and we have used $y_k^0=x_k$.
\end{Theorem}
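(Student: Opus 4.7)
The plan is to build a potential function $L_k := -F(x_k) + \sum_{n=1}^{N} w_n \|y_k^n - y^{n,*}(y_k^{n-1})\|^2$ with carefully chosen positive weights $w_n$, show a one-step inequality of the form $\E[L_{k+1} - L_k \mid \mathcal{F}_k] \leq -c_0 \alpha_k \|\nabla F(x_k)\|^2 - \sum_n c_n \beta_{k,n}\E[(e_k^n)^2\mid\mathcal{F}_k] + O(\alpha_k^2 + \sum_n \beta_{k,n}^2)$, where $e_k^n := \|y_k^n - y^{n,*}(y_k^{n-1})\|$, then telescope over $k=1,\dots,K$ and divide by $K\alpha_k = \sqrt{K}$. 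Throughout I will write $\tilde y_k^{n,*}$ for the \emph{chained} fixed point $y^{n,*}(y^{n-1,*}(\cdots y^{1,*}(x_k)\cdots))$, which satisfies $\nabla F(x_k) = v(x_k) = v(x_k, \tilde y_k^{1,*},\dots,\tilde y_k^{N,*})$.

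First I would prove a descent inequality for $-F$. Since $\nabla F = v$ is $L_v$-Lipschitz (Assumption \ref{assumption:vglip}), the standard smoothness lemma applied to the $x$-update gives $-F(x_{k+1}) \leq -F(x_k) - \alpha_k\langle \nabla F(x_k), v(x_k,y_k^{1:N})+\xi_k\rangle + \tfrac{L_v\alpha_k^2}{2}\|v(x_k,y_k^{1:N})+\xi_k\|^2$. I would take $\mathcal{F}_k$-conditional expectation, use the bias bound $\|\E[\xi_k\mid\mathcal{F}_k]\|^2\le c_0^2\alpha_k$ (which follows from Assumption \ref{assumption:noise} by Jensen), split $v(x_k,y_k^{1:N}) = v(x_k) + (v(x_k,y_k^{1:N})-v(x_k))$, and bound the perturbation by $L_{v,y}\sum_n \|y_k^n - \tilde y_k^{n,*}\|$. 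Unrolling $\|y_k^n - \tilde y_k^{n,*}\| \leq e_k^n + L_{y,n}\|y_k^{n-1}-\tilde y_k^{n-1,*}\|$ via Assumption \ref{assumption:y*} expresses everything in terms of $e_k^m$, after which Young's inequality yields $\E[-F(x_{k+1})\mid\mathcal{F}_k] + F(x_k) \leq -\tfrac{\alpha_k}{4}\|\nabla F(x_k)\|^2 + C_1\alpha_k \sum_n (e_k^n)^2 + C_2\alpha_k^2$, where the residual $\tfrac{L_v\alpha_k^2}{2}\|v\|^2$ is absorbed via $\|v(x_k,y_k^{1:N})\|^2\leq 2\|\nabla F(x_k)\|^2 + O(\sum_n(e_k^n)^2)$ once $\alpha_k$ is small enough.

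Next I would derive a contraction recursion for $(e_k^n)^2$. Young's inequality with parameter $\lambda_n\beta_{k,n}/2$ splits $(e_{k+1}^n)^2$ into $(1+\tfrac{\lambda_n\beta_{k,n}}{2})\|y_{k+1}^n - y^{n,*}(y_k^{n-1})\|^2$ plus $(1+\tfrac{2}{\lambda_n\beta_{k,n}})L_{y,n}^2\|y_{k+1}^{n-1}-y_k^{n-1}\|^2$. For the SA piece, Assumption \ref{assumption:strongly g} and the moment/bias bounds of Assumption \ref{assumption:noise} give $\E[\|y_{k+1}^n-y^{n,*}(y_k^{n-1})\|^2\mid\mathcal{F}_k]\leq (1-\lambda_n\beta_{k,n})(e_k^n)^2 + O(\beta_{k,n}^2)$. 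For the drift piece, $\|h^{n-1}(y_k^{n-2},y_k^{n-1})\|^2 \leq 2L_{h,n-1}^2(e_k^{n-1})^2$ (since $h^{n-1}$ vanishes at its fixed point), so $\E[\|y_{k+1}^{n-1}-y_k^{n-1}\|^2\mid\mathcal{F}_k]$ is $O(\beta_{k,n-1}^2)\cdot((e_k^{n-1})^2+1)$ for $n\geq 2$, and, for $n=1$, $O(\alpha_k^2)\cdot(\|\nabla F(x_k)\|^2+\sum_m(e_k^m)^2+1)$. With the single time-scale choice $\alpha_k=\beta_{k,n}=\Theta(1/\sqrt{K})$, the $1/\beta_{k,n}$ factor is cancelled so that $\E[(e_{k+1}^n)^2\mid\mathcal{F}_k] \leq (1-\tfrac{\lambda_n\beta_{k,n}}{2})(e_k^n)^2 + A_n\beta(e_k^{n-1})^2\mathbf{1}[n\ge 2] + B_1\beta(\|\nabla F\|^2 + \sum_m(e_k^m)^2)\mathbf{1}[n=1] + O(\beta^2)$.

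Finally I would aggregate. Choose the $w_n$ backward from $n=N$ so that $w_n \lambda_n > 2(C_1 + w_{n+1}A_{n+1})$ and, in addition, $w_1 B_1$ is small enough that the $n=1$ drift shaves only a constant fraction off the descent coefficient $\alpha_k/4$; because each coupling carries an explicit factor of $\beta=\Theta(1/\sqrt{K})$, these conditions are consistent for all $K\geq K_0$. Combining the two inequalities above produces $\E[L_{k+1}-L_k\mid\mathcal{F}_k] \leq -\tfrac{\alpha_k}{8}\|\nabla F(x_k)\|^2 - \sum_n \tfrac{w_n\lambda_n\beta_{k,n}}{4}\E[(e_k^n)^2] + O(\alpha_k^2+\beta_{k,n}^2)$. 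Telescoping from $k=1$ to $K$, using that $L_{K+1}$ is bounded below by $-C_F$ (Assumption \ref{assumption:integral}), and dividing through by $K\alpha_k = \Theta(\sqrt{K})$ delivers the claimed $\mathcal{O}(1/\sqrt{K})$ bound on $\tfrac{1}{K}\sum_k (\E\|\nabla F(x_k)\|^2 + \sum_n \E(e_k^n)^2)$.

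The main obstacle is the weight selection and the $n=1$ coupling. The Young step on the $y^{n,*}$-drift naively produces a $1/\beta_{k,n}$ factor that threatens to promote the cross coupling from $O(\beta)$ to $O(1)$; the resolution hinges critically on the single time-scale $\beta_{k,n-1}^2/\beta_{k,n} = \Theta(\beta)$, which is precisely what the smoothness-of-fixed-points Assumption \ref{assumption:y*} and Assumption \ref{assumption:strongly g} together enable. The $n=1$ recursion additionally pumps $\|\nabla F(x_k)\|^2$ back into the Lyapunov via the $x$-update drift, so $w_1$ must be chosen small while the backward recursion forces $w_1 \geq 2(C_1 + w_2 A_2)/\lambda_1$; both constraints are simultaneously satisfiable because the former carries an extra $\alpha_k$ factor that vanishes as $K\to\infty$.
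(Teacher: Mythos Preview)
There is a genuine gap in your handling of the drift term, and it sits exactly where you flag the ``main obstacle.'' After your Young split
\[
(e_{k+1}^n)^2 \;\leq\; \bigl(1+\tfrac{\lambda_n\beta_{k,n}}{2}\bigr)\|y_{k+1}^n-y_k^{n,*}\|^2 \;+\; \bigl(1+\tfrac{2}{\lambda_n\beta_{k,n}}\bigr)L_{y,n}^2\,\|y_{k+1}^{n-1}-y_k^{n-1}\|^2,
\]
the second piece carries the noise of the $(n{-}1)$-update: $\E_k\|y_{k+1}^{n-1}-y_k^{n-1}\|^2 = \beta_{k,n-1}^2\bigl(\|h^{n-1}\|^2+\E_k\|\psi_k^{n-1}\|^2\bigr)$, and the variance term $\sigma_{n-1}^2\beta_{k,n-1}^2$ is a \emph{pure additive constant} times $\beta^2$. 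Multiplied by the $(1+2/(\lambda_n\beta_{k,n}))$ Young factor it becomes $\Theta(\beta_{k,n-1}^2/\beta_{k,n})=\Theta(\beta)$, not $\Theta(\beta^2)$. So your recursion is actually
\[
\E_k[(e_{k+1}^n)^2]\;\le\;(1-\tfrac{\lambda_n\beta}{2})(e_k^n)^2+A_n\beta(e_k^{n-1})^2+C\beta,
\]
and after telescoping the $C\beta$ terms accumulate to $CK\beta=\Theta(\sqrt K)$, which matches the left side $\sum_k\alpha_k\|\nabla F(x_k)\|^2$ and yields only $\tfrac1K\sum_k\E\|\nabla F(x_k)\|^2=O(1)$, not $O(1/\sqrt K)$. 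The same issue bites for $n=1$ with $\sigma_0^2$. In short, your route uses only the \emph{Lipschitz} part \eqref{eq:y*lip} of Assumption~\ref{assumption:y*}; the smoothness part \eqref{eq:y*smooth}, which you mention but never invoke, is exactly what is missing.

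The paper avoids the blow-up by replacing the Young split with the exact decomposition $\|y_{k+1}^n-y_{k+1}^{n,*}\|^2=\|y_{k+1}^n-y_k^{n,*}\|^2+2\langle y_k^{n,*}-y_{k+1}^n,\,y_{k+1}^{n,*}-y_k^{n,*}\rangle+\|y_k^{n,*}-y_{k+1}^{n,*}\|^2$ and then linearizing the drift via the mean-value theorem, $y_{k+1}^{n,*}-y_k^{n,*}=\nabla y^{n,*}(\hat y)^\top(y_{k+1}^{n-1}-y_k^{n-1})$. The noise part of the cross term is split as $\nabla y^{n,*}(\hat y)=\nabla y^{n,*}(y_k^{n-1})+\bigl(\nabla y^{n,*}(\hat y)-\nabla y^{n,*}(y_k^{n-1})\bigr)$: for the first piece one conditions on $\mathcal F_k^n$ and uses the bias bound $\|\E[\psi_k^{n-1}\mid\mathcal F_k^n]\|^2\le c_{n-1}^2\beta_{k,n-1}$, while the second piece is $O(\beta_{k,n-1}^2)$ precisely because of the smoothness bound $\|\nabla y^{n,*}(\hat y)-\nabla y^{n,*}(y_k^{n-1})\|\le L_{y',n}\|y_{k+1}^{n-1}-y_k^{n-1}\|$. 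This is what kills the $O(\beta)$ additive error and makes the single-timescale rate go through. Your argument can be repaired by swapping the Young step for this decomposition; the rest of your Lyapunov strategy (descent on $-F$, weighted sum, telescope) then matches the paper's.
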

\end{tcolorbox}
% Theorem \ref{theorem:non-strongly-monotone-2} implies the convergence of $\|x_k-x^*\|^2+\sum_{n=1}^N\|y_k^n-y^{n,*}\|^2$ with an additional assumption that the so-called error bound inequality holds, i.e. there exists a positive constant $c_v$ such that $\|x_k-x^*\| \leq c_v\|v(x)\|$. 
% The error metric $\|v(x)\|$ used in Theorem \ref{theorem:non-strongly-monotone-2} is still of interest since it is a general measure of the convergence of $x_k$ widely adopted in many applications of SA, 
% especially when the increment does not satisfy any error bound inequality. 
Theorem \ref{theorem:non-strongly-monotone-2} implies a finite-time convergence rate of $\mathcal{O}(K^{-2})$, which is independent of the number of sequences $N$.
The error metric $\|\nabla F(x_k)\|$ used in Theorem \ref{theorem:non-strongly-monotone-2} is of interest since it is a general measure of the convergence of $x_k$ widely adopted in many applications of SA, 
especially when the increment of $x_k$ is not strongly-monotone.
Moreover, although we have assumed the existence and uniqueness of $x^*$ in \eqref{eq:goal}, the proof of Theorem \ref{theorem:non-strongly-monotone-2} does not utilize this fact and thus the theorem applies to the more general case where $x^*$ is not unique or even does not exist.

Given the generic results in Theorems \ref{theorem:strongly monotone} and \ref{theorem:non-strongly-monotone-2}, next we will showcase that many existing SA-type algorithms for stochastic optimization and RL problems indeed satisfy either \textbf{Assumptions \ref{assumption:y*}--\ref{assumption:strongly v}} or \textbf{Assumptions \ref{assumption:y*}--\ref{assumption:strongly g} and \ref{assumption:integral}}, and therefore, their convergence results can be automatically deduced from the generic STSA results in Theorems \ref{theorem:strongly monotone} and \ref{theorem:non-strongly-monotone-2}.

\section{Application to the stochastic bilevel optimization method}\label{section:bilevel}
In this section, we apply our generic STSA results to the stochastic bilevel optimization problems and its applications to the actor-critic algorithm in RL.

With mappings $f:\mathbb{R}^{d_0}\times \mathbb{R}^{d_1}\mapsto \mathbb{R}$ and $g:\mathbb{R}^{d_0}\times \mathbb{R}^{d_1}\mapsto \mathbb{R}$, consider the following formulation of the bilevel optimization problem:
\begin{subequations}\label{eq:bilevel}
\begin{align}
    \max_{x \in \mathbb{R}^{d_0}} F(x) &\coloneqq f(x,y^*(x)) \coloneqq \E_{\zeta}\big[f(x,y^*(x);\zeta)\big]\\
    {\rm s.t.}~y^*(x) &\coloneqq \arg\min_{y \in \mathbb{R}^{d_1}} g(x,y) \coloneqq \E_{\varphi}\big[g(x,y;\varphi)\big]
\end{align}
\end{subequations}
where $\zeta$ and $\varphi$ are two random variables.

\subsection{Reduction from the generic results}
A popular approach to solving \eqref{eq:bilevel} is the gradient-based method \citep{ghadimi2018approximation,hong2020ac,chen2021tighter}. 
Under some conditions that will be specified later, it has been shown in \citep{ghadimi2018approximation} that the gradient of $F(x)$ takes the following form:
\begin{align}\label{eq:nabla F(x)}
    \nabla F(x) = \nabla_x f(x,y^*(x)) 
    - \nabla_{xy}^2 g(x,y^*(x)) [\nabla_{yy}^2 g(x,y^*(x))]^{-1} \nabla_y f(x,y^*(x)).
\end{align}
Note that here $\nabla_x f(x,y^*(x)) = \frac{\partial f(x,y)}{\partial x}|_{y=y^*(x)}$, $\nabla_{xy}^2 g(x,y^*(x)) = \frac{\partial g(x,y)}{\partial x \partial y}|_{y=y^*(x)}$ and the same thing applies to $\nabla_y f(x,y^*(x))$ and $\nabla_{yy}^2 g(x,y^*(x))$. It is clear that computing \eqref{eq:nabla F(x)} requires $y^*(x)$, which is often unknown in practice. Instead,  one can iteratively update $y_k$ to approach $y^*(x_k)$ while using $y_k$ in place of $y^*(x_k)$ during the computation of \eqref{eq:nabla F(x)} \citep{hong2020ac,chen2021tighter}. This leads to an update same as that in \eqref{eq:multiSA} with $N\!=\!1$, where the generic mapping is defined as
\begin{subequations}\label{eq:bilevel update}
\begin{align}
    h(x,y) &= -\nabla_y g(x,y), \\
    \psi_k &\!=\! -h(x_k,y_k) \!-\! \nabla_y g(x_k,y_k;\varphi_k), \\
    v(x,y) &= \nabla_x f(x,y) -\nabla_{xy}^2 g(x,y)[\nabla_{yy}g(x,y)]^{-1} \nabla_y f(x,y), \\
    \xi_k &= -v(x_k,y_k) +  \nabla_x f(x_k,y_k;\zeta_k) -\nabla_{xy}^2 g(x_k,y_k;\varphi_k') H^{yy}_{k} \nabla_y f(x_k,y_k;\zeta_k).
\end{align}
\end{subequations}
Since here we only have two sequences, that is $N\!=\!1$, we omit the index $n$ to simplify notations. In \eqref{eq:bilevel update}, $\zeta_k$ is a random variable with the same distribution as that of $\zeta$, and $\varphi_k$, $\varphi_k'$ have the same distribution as that of $\varphi$. Here $H_k^{yy}$ is a stochastic approximation of the Hessian inverse $[\nabla_{yy}g(x_k,y_k)]^{-1}$. Given $x_k$, when $y_k$ reaches the optimal solution $y^*(x_k)$, it follows from \eqref{eq:nabla F(x)} that $v(x_k,y^*(x_k)) = \nabla F(x_k)$.

As being discussed below Assumption \ref{assumption:y*}, the lower-level optimal solution $y^*(x_k)$ is drifting at each iteration. Under the Lipschitz continuity assumption of $y^*(x)$, the drifting $\|y^*(x_{k+1})-y^*(x_k)\|$ scales with $\|x_{k+1}-x_k\|$ which ultimately scales with $\|\nabla F(x_k)\|$. To control the drift scale, former analysis heavily relies on the condition that $\|\nabla F(x_k)\|$ is bounded for any $k$. In SBO, this means to either make a strong assumption on the Lipschitz continuity of $f(x,y)$ w.r.t. $(x,y)$, which leads to the Lipschitz continuity of $F(x)$ and the boundedness of $\|\nabla F(x_k)\|$ \citep{chen2021tighter}; or to introduce projection in \eqref{eq:bilevel update} to confine $x_k$ in a compact set \citep{hong2020ac}, all of which greatly narrow the range of application. We will show that neither of these the conditions is not needed by applying our generic results to SBO.
% We first summarize part of the standard assumptions adopted in the previous analysis of the bilevel update \citep{ghadimi2018approximation,hong2020ac,chen2021tighter} and then show that they are sufficient conditions of our Assumptions. 
% In the following Lemma, we use $a\xrightarrow[]{}b$ to indicate that $a$ is a sufficient condition of $b$.
\begin{Lemma}[Verifying assumptions of STSA]\label{lemma:bilevel}
Consider the following conditions
\begin{enumerate}[label=(\alph*)]
\setlength\itemsep{-0.15em}
    \item\label{item:strong convex} For any $x \in \mathbb{R}^{d_1}$, $g(x,y)$ is strongly convex w.r.t. $y$ with modulus $\lambda_1 > 0$.
    \item\label{item:g lip 2} There exist constants $L_{xy}, l_{xy}, l_{yy}$ such that $\nabla_y g(x,y)$ is $L_{xy}$-Lipschitz continuous w.r.t. x; $\nabla_y g(x,y)$ is $L_h$-Lipschitz continuous w.r.t. $y$. $\nabla_{xy}g(x,y)$, $\nabla_{yy} g(x,y)$ are respectively $l_{xy}$-Lipschitz and $l_{yy}$-Lipschitz continuous w.r.t. $(x,y)$.
    % \item\label{item:g lip}For any $x \in \mathbb{R}^{d_1}$, $\nabla_y g(x,y)$ is $L_h$-Lipschitz continuous w.r.t. $y$.
    \item\label{item:f lip} There exist constants $l_{fx}, l_{fy}, l_{fy}', l_y$ such that $\nabla_x f(x,y)$ and $\nabla_y f(x,y)$ are respectively $l_{fx}$ and $l_{fy}$ Lipschitz continuous w.r.t. $y$; $\nabla_y f(x,y)$ is $l_{fy}'$-Lipschitz continuous w.r.t. $x$; $f(x,y)$ is $l_y$-Lipschitz continuous w.r.t. $y$.
    % \item\label{item:F smooth} There exists a constant $L_v$ such that $F(x)$ is $L_v$-smooth w.r.t. $x$.
     \item\label{item:F RSI} $F(x)$ satisfies the restricted secant inequality: There exists a constant $\lambda_0 > 0$ such that $\langle \nabla F(x), x-x^*\rangle \leq -\lambda_0 \|x-x^*\|^2$, where $x^*\coloneqq \arg\max_{x\in\mathbb{R}^{d_1}}F(x)$.
     \item\label{item:noise bilevel} For any $k$, there exist constants $c_0,c_1$ such that $\|\E[\xi_k | \mathcal{F}_k^1]\|^2 \leq c_0^2\alpha_k$ and $\|\E[\psi_k | \mathcal{F}_k]\|^2 \leq c_1^2\beta_k$. Additionally, there exist constants $\sigma_0,\sigma_1$ such that $\E[\|\xi_k\|^2 | \mathcal{F}_k^1] \leq \sigma_0^2$ and $\E[\|\psi_k\|^2 | \mathcal{F}_k] \leq \sigma_1^2$.
    %  \item\label{item:noise 1} Given $\mathcal{F}_k$, $\nabla_y g(x_k,y_k;\varphi_k)$, $\nabla_x f(x_k,y_k;\zeta_k)$, $\nabla_{xy}^2 g(x_k,y_k;\varphi_k')$ and $\nabla_y f(x_k,y_k;\zeta_k)$ are respectively unbiased estimators of  $\nabla_y g(x_k,y_k)$, $\nabla_x f(x_k,y_k)$, $\nabla_{xy}^2 g(x_k,y_k)$ and $\nabla_y f(x_k,y_k)$ with bounded variances.
    %  \item\label{item:noise 2} $H_k^{yy}$ is an almost unbiased estimator of the Hessian inverse $[\nabla_{yy}g(x_k,y_k)]^{-1}$ in a sense that $\|\E[H_k^{yy} |\mathcal{F}_k]-[\nabla_{yy}g(x_k,y_k)]^{-1}\| = \mathcal{O}(\alpha_k)$. The variance is bounded.
    %  \item\label{item:bilevel ind} $\varphi_k$, $\varphi_k'$, $\zeta_k$ and the samples used to generate $H_k^{yy}$ are conditionally independent of each other given $\mathcal{F}_k$. 
    %  and are independent across iterations $k=[K]$.
     \item\label{item:F} There exists a constant $C_F$ such that $F(x) \leq C_F$.
\end{enumerate}
We use $a\xrightarrow[]{}b$ to indicate that $a$ is a sufficient condition of $b$. Then we have
\begin{gather}
    \ref{item:strong convex}~\textit{and}~\ref{item:g lip 2} \xrightarrow[]{} \textit{ Assumption \ref{assumption:y*}};~~\ref{item:strong convex}-\ref{item:f lip} \xrightarrow[]{} \textit{ Assumption \ref{assumption:vglip}};~~\ref{item:noise bilevel} \xrightarrow[]{} \textit{Assumption \ref{assumption:noise}};\nonumber\\
    \ref{item:strong convex}\xrightarrow[]{}\textit{ Assumption \ref{assumption:strongly g}};~~
    \ref{item:F RSI}\xrightarrow[]{}\textit{ Assumption \ref{assumption:strongly v}};~~
    \ref{item:F}\xrightarrow[]{}\textit{ Assumption \ref{assumption:integral}}.
\end{gather}
\end{Lemma}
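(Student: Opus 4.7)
The plan is to verify each of the six implications of Lemma \ref{lemma:bilevel} individually. The technical core is the derivation of the smoothness of $y^*(x)$ from \ref{item:strong convex} and \ref{item:g lip 2}; the other five implications reduce to routine bookkeeping or direct restatements of the corresponding conditions.

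For Assumption \ref{assumption:y*}, strong convexity of $g(x, \cdot)$ from \ref{item:strong convex} gives the existence and uniqueness of $y^*(x)$ through the optimality condition $\nabla_y g(x, y^*(x)) = 0$. Implicit differentiation of this identity yields
\begin{equation*}
    \nabla y^*(x) = -[\nabla_{yy}^2 g(x, y^*(x))]^{-1} \nabla_{yx}^2 g(x, y^*(x)).
\end{equation*}
Strong convexity provides $\|[\nabla_{yy}^2 g(x, y^*(x))]^{-1}\| \leq 1/\lambda_1$, and the $L_{xy}$-Lipschitz continuity of $\nabla_y g$ in $x$ from \ref{item:g lip 2} gives $\|\nabla_{yx}^2 g\| \leq L_{xy}$, which delivers \eqref{eq:y*lip} with $L_{y,1} = L_{xy}/\lambda_1$. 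For \eqref{eq:y*smooth} I would apply the resolvent identity $A^{-1} - B^{-1} = A^{-1}(B - A)B^{-1}$ to the difference of the two Hessian inverses evaluated at $(x, y^*(x))$ and $(\bar x, y^*(\bar x))$, and combine the $l_{yy}$ and $l_{xy}$ Lipschitz estimates from \ref{item:g lip 2} with the already-proven Lipschitz property of $y^*$; this yields a constant $L_{y',1}$ polynomial in $1/\lambda_1$, $L_{xy}$, $l_{xy}$, and $l_{yy}$.

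For Assumption \ref{assumption:vglip}, the bound on $h(x,y) - h(x, \bar y)$ is immediate from the Lipschitz continuity of $\nabla_y g$ in $y$ asserted in \ref{item:g lip 2}. The bound on $v(x, y) - v(x, \bar y)$ in $y$ follows by inserting the definition of $v$ from \eqref{eq:bilevel update} and chaining term-by-term Lipschitz bounds using \ref{item:g lip 2} for the Hessian factors and \ref{item:f lip} for $\nabla_x f$ and $\nabla_y f$; the boundedness of $\nabla_y f$ needed to handle the product terms is supplied by the $l_y$-Lipschitzness of $f$ in $y$ in \ref{item:f lip}. The Lipschitz estimate on the composed map $v(x) = v(x, y^*(x))$ then follows by inserting $\pm\, v(\bar x, y^*(x))$, establishing joint $(x, y)$ Lipschitz continuity of $v$ in the same manner, and invoking \eqref{eq:y*lip}.

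The remaining four implications are essentially transparent: \ref{item:noise bilevel} is exactly Assumption \ref{assumption:noise} specialized to $N = 1$; \ref{item:strong convex} $\Rightarrow$ Assumption \ref{assumption:strongly g} because $h = -\nabla_y g$ and strong convexity give $\langle y - y^*(x), h(x, y) \rangle = -\langle y - y^*(x), \nabla_y g(x, y) - \nabla_y g(x, y^*(x)) \rangle \leq -\lambda_1 \|y - y^*(x)\|^2$; \ref{item:F RSI} is literally Assumption \ref{assumption:strongly v} once we note that \eqref{eq:nabla F(x)} identifies $v(x)$ with $\nabla F(x)$; and \ref{item:F} matches Assumption \ref{assumption:integral} with the same $F$ and $C_F$. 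The main obstacle is the smoothness bound \eqref{eq:y*smooth}: the Lipschitz constant of $x \mapsto [\nabla_{yy}^2 g(x, y^*(x))]^{-1}$ must be extracted from the resolvent identity and depends on the condition number $L_h/\lambda_1$ in a nontrivial way. Every other estimate reduces to a routine composition of scalar Lipschitz bounds.
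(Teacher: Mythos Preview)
Your proposal is correct and follows essentially the same route as the paper. The only difference is cosmetic: where the paper cites \cite[Lemma 2.2]{ghadimi2018approximation} and \cite[Lemma 2]{chen2021tighter} for the Lipschitz and smoothness bounds on $y^*(x)$ and $v$, you sketch the underlying arguments (implicit differentiation plus the resolvent identity $A^{-1}-B^{-1}=A^{-1}(B-A)B^{-1}$, and a product-rule Lipschitz estimate), which is exactly what those cited lemmas contain.
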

% Throughout the paper, $a\xrightarrow[]{}b$ indicates that $a$ is a sufficient condition of $b$. The conditions listed above are standard in the literature \citep{ghadimi2018approximation,hong2020ac}.
% As compared to the assumptions in \citep{chen2021tighter}, Lemma \ref{lemma:bilevel} does not need the Lipschitz continuity condition of $f(x,y)$ w.r.t. $(x,y)$.
 The conditions listed above are standard in the literature \citep{ghadimi2018approximation,hong2020ac,chen2021tighter}.
It is worth noting that Lemma \ref{lemma:bilevel} does not need the $L_{xy}$-Lipschitz continuity condition of $f(x,y)$ w.r.t. $(x,y)$. This Lipschitz condition, along with the $L_y$-Lipschitz continuity of $y^*(x)$ implied by the  conditions in Lemma \ref{lemma:bilevel}, further leads to the Lipschitz continuity of $F(x)$:
\begin{align}
    |F(x)-F(x')| \leq L_{xy}(\|x-x'\|+\|y^*(x)-y^*(x')\|)\leq L_{xy}(L_y+1)\|x-x'\|.
\end{align}
Although it is rather restrictive, this condition has been used in the previous work when $F(x)$ is not strongly-concave/convex.
Lastly, condition $\ref{item:noise bilevel}$ can be guaranteed by using \citep[Algorithm 3]{ghadimi2018approximation} to obtain a good enough $H_k^{yy}$, which requires extra $\Omega(\log  \epsilon^{-1})$ samples per iteration.
With Lemma \ref{lemma:bilevel}, we have the following corollary regarding the convergence of \eqref{eq:bilevel update}.
\begin{Corollary}[STSA for SBO]\label{corollary:bilevel}
Consider the STSA sequences with the update in \eqref{eq:bilevel update}. Under Conditions \ref{item:strong convex}--\ref{item:noise bilevel}, Theorem \ref{theorem:strongly monotone} holds; that is, with $\alpha_k=\Theta(\frac{1}{k})$ and $\beta_k=\Theta(\frac{1}{k})$ we have
\begin{subequations}
\begin{align}
     &\E\|x_k-x^*\|^2 + \E\|y_k-y^*(x_k)\|^2 \!=\! \mathcal{O}\Big(\frac{1}{k} \Big),\\
    &\lim_{k\xrightarrow[]{}\infty}\|x_k- x^*\|^2=0~~~{\rm and}~~~
    \lim_{k\xrightarrow[]{}\infty}\|y_k- y^*(x_k)\|^2 = 0~~a.s.
\end{align}
\end{subequations}
Under Conditions \ref{item:strong convex}--\ref{item:f lip}, \ref{item:noise bilevel} and \ref{item:F}, Theorem \ref{theorem:non-strongly-monotone-2} holds; i.e., with {\small$\alpha_k\!=\!\Theta(\frac{1}{\sqrt{K}})$, $\beta_k\!=\!\Theta(\frac{1}{\sqrt{K}})$}, we have
\begin{align}
     &\frac{1}{K}\sum_{k=1}^K \Big(\E\|\nabla F(x_k)\|^2 + \E\|y_k - y^*(x_k)\|^2\Big) = \mathcal{O}\Big(\frac{1}{\sqrt{K}} \Big).
\end{align}
% \begin{align}
%      &\frac{1}{S_K}\sum_{k=1}^K \alpha_k \Big(\E\|\nabla F(x_k)\|^2 + \E\|y_k - y^*(x_k)\|^2\Big) = \mathcal{O}\Big(\frac{\ln K}{\sqrt{K}} \Big), \nonumber\\
%     &\lim_{k\xrightarrow[]{}\infty}\|\nabla F(x_k)\|^2=0~and~
%     \lim_{k\xrightarrow[]{}\infty}\|y_k- y^*(x_k)\|^2 = 0~~a.s.
% \end{align}
% \footnote{\label{note1}In this case, we have $N=1$ and $\|v(x_k)\|^2 + \sum_{n=1}^N \|y_k^n - y^{n,*}(y_k^{n-1})\|^2=\|\nabla F(x_k)\|^2 + \|y_k - y^*(x_k)\|^2$.}.
\end{Corollary}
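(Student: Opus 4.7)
The plan is to reduce Corollary \ref{corollary:bilevel} to the generic convergence results Theorem \ref{theorem:strongly monotone} and Theorem \ref{theorem:non-strongly-monotone-2}, using Lemma \ref{lemma:bilevel} as the bridge. Once Lemma \ref{lemma:bilevel} is established, the rest is immediate: conditions \ref{item:strong convex}--\ref{item:noise bilevel} imply Assumptions \ref{assumption:y*}--\ref{assumption:strongly v}, so invoking Theorem \ref{theorem:strongly monotone} with $N=1$ yields the strongly-monotone bound and the a.s.\ limit; analogously, conditions \ref{item:strong convex}--\ref{item:f lip}, \ref{item:noise bilevel}, and \ref{item:F} yield Assumptions \ref{assumption:y*}--\ref{assumption:strongly g} and \ref{assumption:integral}, so Theorem \ref{theorem:non-strongly-monotone-2} applies verbatim. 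The real content is therefore the six implications packaged in Lemma \ref{lemma:bilevel}.

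To verify Assumption \ref{assumption:y*} from \ref{item:strong convex} and \ref{item:g lip 2}, I would first use strong convexity of $g(x,\cdot)$ to conclude existence and uniqueness of $y^*(x)$ as the root of $\nabla_y g(x,\cdot)=0$, and then apply the implicit function theorem to obtain $\nabla y^*(x) = -[\nabla_{yy}^2 g(x,y^*(x))]^{-1}\nabla_{yx}^2 g(x,y^*(x))$. The uniform bounds $\|\nabla_{xy}^2 g\|\le L_{xy}$ from \ref{item:g lip 2} and $\|[\nabla_{yy}^2 g]^{-1}\|\le 1/\lambda_1$ from \ref{item:strong convex} immediately give \eqref{eq:y*lip} with $L_{y,1}=L_{xy}/\lambda_1$. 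For \eqref{eq:y*smooth}, the Lipschitz continuity of $\nabla_{xy}^2 g$ and $\nabla_{yy}^2 g$ in \ref{item:g lip 2}, combined with the matrix perturbation identity $A^{-1}-B^{-1}=A^{-1}(B-A)B^{-1}$ and the just-derived Lipschitzness of $y^*(x)$, yields the required smoothness of $\nabla y^*(x)$.

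The remaining implications are comparatively routine. Assumption \ref{assumption:vglip} follows by decomposing $v(x,y)$ into the three terms of \eqref{eq:bilevel update} and combining \ref{item:g lip 2} and \ref{item:f lip} with the uniform bound on $[\nabla_{yy}^2 g]^{-1}$; Lipschitzness of the composition $v(x)=v(x,y^*(x))$ then follows by the chain rule applied together with \eqref{eq:y*lip}. Assumption \ref{assumption:strongly g} is an immediate consequence of \ref{item:strong convex}: for $h(x,y)=-\nabla_y g(x,y)$ and $y^*=y^*(x)$, strong convexity gives $\langle y-y^*, -\nabla_y g(x,y)+\nabla_y g(x,y^*)\rangle\le-\lambda_1\|y-y^*\|^2$, and $\nabla_y g(x,y^*)=0$ absorbs the second term. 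Assumption \ref{assumption:strongly v} is exactly the restatement \ref{item:F RSI} once one recalls $v(x)=\nabla F(x)$ from \eqref{eq:nabla F(x)}. Assumption \ref{assumption:noise} is a direct transcription of \ref{item:noise bilevel}, and Assumption \ref{assumption:integral} follows from \ref{item:F} together with $v=\nabla F$.

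The main obstacle is the second-order smoothness \eqref{eq:y*smooth} of $y^*$ underlying Assumption \ref{assumption:y*}: one must control the variation of $[\nabla_{yy}^2 g(x,y^*(x))]^{-1}\nabla_{yx}^2 g(x,y^*(x))$ along $x$, which requires the Hessian-Lipschitz conditions in \ref{item:g lip 2} evaluated at the moving point $y^*(x)$, and is handled via a careful $A^{-1}-B^{-1}$ perturbation bookkeeping. A secondary subtlety is the bias condition $\|\E[\xi_k\mid \mathcal{F}_k^1]\|^2\le c_0^2\alpha_k$ in \ref{item:noise bilevel}: since $H_k^{yy}$ is a biased estimator of $[\nabla_{yy}^2 g(x_k,y_k)]^{-1}$, one must invoke a Neumann-series / truncated geometric-series estimator as in \citep{ghadimi2018approximation}, using $\Omega(\log\epsilon^{-1})$ inner samples to drive the bias below $\sqrt{\alpha_k}$; once these two points are settled, both halves of the corollary follow mechanically from Theorems \ref{theorem:strongly monotone} and \ref{theorem:non-strongly-monotone-2}.
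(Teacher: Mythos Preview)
Your proposal is correct and follows essentially the same approach as the paper: reduce the corollary to Lemma \ref{lemma:bilevel} and then invoke Theorems \ref{theorem:strongly monotone} and \ref{theorem:non-strongly-monotone-2} with $N=1$. The only cosmetic difference is that the paper dispatches the regularity of $y^*(x)$ (Lipschitzness and smoothness) by citing \cite[Lemma 2.2]{ghadimi2018approximation} and \cite[Lemma 2]{chen2021tighter}, whereas you sketch the underlying implicit-function-theorem and $A^{-1}-B^{-1}$ perturbation argument directly; these are the same computations.
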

\begin{Remark}[Comparison with prior art in single-loop SBO] \emph{If $F(x)$ is strongly concave, Corollary \ref{corollary:bilevel} implies the sample complexity of $\mathcal{O}(\epsilon^{-1}\!\log \epsilon^{-1})$, which improves over the best-known sample complexity $\mathcal{O}(\epsilon^{-1.5}\log \epsilon^{-1})$ in \citep{hong2020ac}.  Different from \citep{hong2020ac}, we do not need the projection of $x_k$ to a compact set. When $F(x)$ is non-concave, corollary \ref{corollary:bilevel} suggests a sample complexity of $\mathcal{O}(\epsilon^{-2}\log \epsilon^{-1} )$, which is the same as the state-of-art complexity established in \citep{chen2021tighter}. Corollary \ref{corollary:bilevel} improves the result in \citep{chen2021tighter} in two major aspects: 1) By carefully utilizing the smoothness of $y^*(x)$, we avoid directly bounding the term $\|x_{k+1}-x_k\|^2$ in the proof, which was originally bounded using the bounded gradient assumption in \citep{chen2021tighter}. Thus we relax the Lipschitz continuity assumption on $f(x,y)$ on $x$; and, 2) An alternating update is adopted in \citep{chen2021tighter} to ensure stability, while some applications of SBO only allow simultaneous updates. Corollary \ref{corollary:bilevel} applies to those cases and thus has a broader range of application.}
\end{Remark}

\subsection{Application to advantage actor-critic}
The advantage actor-critic (AC) is one of the most celebrated methods in RL. We will apply our new nonlinear SA results to AC by starting with the basic concepts in RL.

RL problems are often modeled as a MDP described by $\mathcal{M}=\{ \mathcal{S}, \mathcal{A}, \mathcal{P}, r, \gamma \}$, where $\mathcal{S}$ is the state space, $\mathcal{A}$ is the action space; $\mathcal{P}(s'|s,a)$ is the probability of transitioning to $s'\in \mathcal{S}$ given $(s,a)\!\in\!\mathcal{S}\!\times\!\mathcal{A}$; $r(s,a)\in[0,1]$ is the reward associated with $(s,a)$; and $\gamma \in (0,1)$ is a discount factor. A policy $\pi$ maps $\mathcal{S}$ to a probability distribution over $\mathcal{A}$, and we use $\pi(a|s)$ to denote the probability of choosing $a$ under $s$.  
Given a policy $\pi$, we define the value functions as
$V_{\pi}(s) \coloneqq  \mathbb{E}_{\pi}\!\Big[ \sum_{t=0}^\infty \gamma^t r(s_t, a_t) \mid s_0\!=\!s\Big]$,
where $\E_\pi$ is taken over the trajectory $(s_0,a_0,s_1,a_1,\ldots)$ generated under policy $\pi$ and transition kernel $\mathcal{P}$. 
% With the above definitions, we define $V_{\pi}(s) \coloneqq \E_{a \sim \pi(\cdot|s)}[Q_\pi (s,a)]$ and the advantage function $A_{\pi}(s,a):=Q_{\pi}(s,a)-V_{\pi}(s)$. 
With $\rho$ denoting the initial state distribution, the discounted visitation distribution induced by policy $\pi$ is defined via $d_{\pi}(s,a) = (1-\gamma) \sum_{t=0}^\infty \gamma^t \mathbf{Pr}_{\pi}(s_t = s\mid s_0\sim \rho)\pi(a|s)$. 
To overcome the difficulty of
learning a function, we parameterize the policy with $x \in \mathbb{R}^{d_0}$, and solve
\begin{align}\label{eq:Jpi}
    \max_{x \in \mathbb{R}^{d_0}} ~F(x) \coloneqq (1-\gamma)\E_{s \sim \rho} [V_{\pi_x}(s)].
\end{align}
The policy gradient method seeks to solve this optimization problem with gradient ascent \citep{SuttonPG}, which takes the following form:
\begin{align}\label{eq:pg}
    % \nabla F(x) \!=\! \E_{\substack{s,a \sim d_{\pi_x} \\ s'\!\sim\!\mathcal{P}}}[(r(s,a)\!+\!\gamma V_{\pi_x}(s')\!-\!V_{\pi_x}(s))\nabla \log \pi_x(a|s)].
    \nabla F(x) \!=\! \E_{s,a \sim d_{\pi_x}, s'\!\sim\!\mathcal{P}}[(r(s,a)\!+\!\gamma V_{\pi_x}(s')\!-\!V_{\pi_x}(s))\nabla \log \pi_x(a|s)].
\end{align}
In order to compute \eqref{eq:pg}, one will need to know the value function $V_{\pi_x}$ which can be difficult to compute in practice. A popular method is to estimate the value function with $\hat{V}_y$ parameterized by $y \in \mathbb{R}^{d_1}$. In this work, we focus on the linear function approximation case, i.e. $\hat{V}_y (s) = \phi(s)^\top y$ where $\phi:\mathcal{S}\mapsto\mathbb{R}^{d_1}$ is the feature vector. We update $y$ with the temporal difference (TD) learning method \citep{sutton1988td} which takes the same form as that of \eqref{eq:updatey2} with $N\!=\!1$, given by
\begin{align}\label{eq:tdupdate}
    h(x,y) &= \E_{s\sim \mu_{\pi_x},a\sim\pi_x,s'\sim\mathcal{P}}[\phi(s)(\gamma \phi(s') -\phi(s))^\top] y + \E_{s\sim \mu_{\pi_x},a\sim \pi_x}[r(s,a) \phi(s)],\nonumber\\
    \psi_k &= -h(x_k,y_k) + \phi(s_k)(\gamma \phi(s_k') -\phi(s_k))^\top y + r(s_k,a_k) \phi(s_k)
\end{align}
where $\mu_{\pi_x}$ is the stationary distribution of the Markov chain induced by policy $\pi_x$. Sample $(s_k,a_k,s_k')$ is returned by some sampling protocol.
Under some regularity conditions, it is known that there exists a unique $y^*(x)$ such that $h(x,y^*(x))=0$ \citep{JBTD}. For simplicity, we focus on the case where $V_{\pi_x}(s)$ is linear-realizable; that is, $V_{\pi_x}(s)=\hat{V}_{y^*(x)} (s)$ for any $s$. 

With the estimated value function $\hat{V}_{y_k}$, we can then perform the policy gradient update by replacing $V_{\pi_\theta}$ with $\hat{V}_{y_k}$ in \eqref{eq:pg}. This update is a special case of \eqref{eq:updatex2} by defining
\begin{subequations}\label{eq:actorupdate}
\begin{align}
    % &v(x,y) \!=\! \E_{\substack{s,a \sim d_{\pi_x} \\ s'\sim\mathcal{P}}}[(r(s,a)\!+\! (\gamma\phi(s')\!-\!\phi(s))^\top y)\nabla \log \pi_x(a|s)],\! \nonumber\\
    v(x,y) & =  \E_{s,a \sim d_{\pi_x},s'\sim\mathcal{P}}[(r(s,a)\!+\! (\gamma\phi(s')\!-\!\phi(s))^\top y)\nabla \log \pi_x(a|s)], \\
    \xi_k &= -v(x_k,y_k) + r(\Bar{s}_k,\Bar{a}_k) +\gamma (\phi(\Bar{s}_k')-\phi(\Bar{s}_k))^\top y_k\nabla \log \pi_{x_k}(\Bar{a}_k|\Bar{s}_k).
\end{align}
\end{subequations}

With the AC update \eqref{eq:tdupdate} and \eqref{eq:actorupdate} written in the form of STSA, we next verify the assumptions of STSA. With $A_x \coloneqq \E_{s\sim \mu_{\pi_x},a\sim\pi_x,s'\sim\mathcal{P}}[\phi(s)(\gamma \phi(s') -\phi(s))^\top], b_x \coloneqq \E_{s\sim \mu_{\pi_x},a\sim \pi_x}[r(s,a) \phi(s)]$, we have the following lemma.
\begin{Lemma}[Verifying the assumptions of STSA]\label{lemma:ac}
Consider the following conditions
\begin{enumerate}[label=(\alph*)]
\setcounter{enumi}{14}
    \item\label{item:A ac} For any $s\in\mathcal{S}$, $\|\phi(s)\|\leq 1$. For any $x \in \mathbb{R}^{d_0}$, there exists a constant $\lambda_1 > 0$ such that $\langle y-y', A_x (y-y') \rangle \leq -\lambda_1 \|y-y'\|^2$ for any $y,y'\in \mathbb{R}^{d_1}$. The smallest singular value of $A_x$ is lower bounded by $\sigma>0$. 
    \item \label{item:policy ac} There exist constants $L_\pi$,$L_\pi'$ and $C_\pi$ such that for any $s\in\mathcal{S}$ and $a\in\mathcal{A}$ and $x,x'\in \mathbb{R}^{d_0}$, the following inequalities hold: i) $\|\pi_{x}(a|s)-\pi_{x'}(a|s)\|\leq L_\pi \|x-x'\|$. ii) $\|\nabla\log\pi_{x}(a|s)-\nabla\log\pi_{x'}(a|s)\|\leq L'_\pi \|x-x'\|$. iii) $\|\nabla\log\pi_x(a|s)\| \leq C_\pi$.
    \item\label{item:ergodic} For any $x\in\mathbb{R}^{d_0}$, the Markov chain induced by the policy $\pi_x$ and transition kernel $\mathcal{P}$ is ergodic. There exist positive constants $\kappa$ and $\rho<1$ such that
    \begin{align}
        \|\mathbb{P}_{\pi_x}(s_t\in\cdot|s_0=s,a_0=a)-\mu_{\pi_x}(\cdot)\|_{TV} \leq \kappa \rho^t,~\forall (s,a)\in \mathcal{S}\times\mathcal{A}
    \end{align}
    where $\mathbb{P}_{\pi_x}(s_t\in \cdot|s_0,a_0)$ is the probability measure of the $t$th state $s_t$ on the Markov chain induced by policy $\pi_x$ and transition kernel $\mathcal{P}$, given the initial state and action $s_0,a_0$.
    \item\label{item:noise ac} We sample by $s_k,a_k \!\sim\! d_{\pi_x}$, $s_k\!\sim\!\mathcal{P}(\cdot|s_k,a_k)$; $\Bar{s}_k\!\sim\!\mu_x$, $\Bar{a}_k \!\sim\! \pi_x(\cdot|\Bar{s}_k)$, $\Bar{s}_k' \!\sim\!\mathcal{P}(\cdot|\bar{s}_k,\bar{a}_k)$.
\end{enumerate}
Then we have
\begin{gather}
    \textit{\ref{item:A ac}--\ref{item:ergodic}} \xrightarrow[]{} \textit{Assumption \ref{assumption:y*}};~~~\textit{\ref{item:A ac}~and~\ref{item:policy ac}}\xrightarrow[]{} \textit{Assumption \ref{assumption:vglip}~and~\ref{assumption:strongly g}};\nonumber\\
    \textit{Assumption \ref{assumption:integral} holds with bounded reward}.
\end{gather}
Moreover, a slightly modified version of Assumption \ref{assumption:noise} holds under condition \ref{item:A ac}~and~\ref{item:noise ac}:
\begin{align}\label{eq:asm 3 ac}
    &\E[\xi_k|\mathcal{F}_k^{1}]=0,~\E[\psi_k|\mathcal{F}_k]=0\nonumber\\
    &\|\xi_k\|^2 \leq \sigma_0^2+\bar{\sigma}_0^2\|y_k-y^*(x_k)\|^2,\|\psi_k\|^2 \leq \sigma_1^2 + \bar{\sigma}_1^2\|y_k-y^*(x_k)\|^2
\end{align}
where $\sigma_0^2 = 8 C_\pi^2(1+4\sigma^{-2})$, $\bar{\sigma}_0^2 = 32 C_\pi^2$, $\sigma_1^2 = 32 \sigma^{-2}+8$ and $\bar{\sigma}_1^2 = 32$.
\end{Lemma}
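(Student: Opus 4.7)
}
My plan is to verify each assumption one by one, starting from the easiest. First, observe that the TD mapping is affine in $y$: $h(x,y)=A_xy+b_x$, so $y^*(x)=-A_x^{-1}b_x$ is well defined because condition \ref{item:A ac} guarantees $\sigma_{\min}(A_x)\ge\sigma>0$. Assumption \ref{assumption:strongly g} is then immediate: since $h(x,y^*(x))=0$, we get $h(x,y)=A_x(y-y^*(x))$ and condition \ref{item:A ac} yields $\langle y-y^*(x),h(x,y)\rangle\le -\lambda_1\|y-y^*(x)\|^2$. For the Lipschitz part of Assumption \ref{assumption:vglip}, boundedness of $A_x$, of the features $\phi$, and of $\nabla\log\pi_x$ (condition \ref{item:policy ac} iii) gives $h$ Lipschitz in $y$ and $v$ Lipschitz in $y$ with explicit constants; the Lipschitz continuity of $v(x)$ in $x$ will be handled together with the smoothness of $y^*$ below. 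For Assumption \ref{assumption:integral}, I will appeal to $r(s,a)\in[0,1]$ so that $V_{\pi_x}(s)\le 1/(1-\gamma)$ and therefore $F(x)\le 1$, giving $C_F=1$.

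Next I verify Assumption \ref{assumption:y*}, which is the heart of the lemma. The key intermediate step is to show that the stationary distribution $\mu_{\pi_x}$ is Lipschitz and smooth in $x$. I will use the standard perturbation argument for ergodic Markov chains: condition \ref{item:ergodic} supplies a uniform geometric mixing rate, and condition \ref{item:policy ac} i)--ii) gives Lipschitz continuity of the transition kernel $P_x(s'|s)=\sum_a\pi_x(a|s)\mathcal{P}(s'|s,a)$ together with its gradient in $x$. Combining geometric mixing with kernel perturbation (e.g., via the resolvent identity $\mu_{\pi_x}-\mu_{\pi_{x'}}=\mu_{\pi_x}(P_{x'}-P_x)(I-P_{x'}+\mathbf{1}\mu_{\pi_{x'}})^{-1}$) yields $\|\mu_{\pi_x}-\mu_{\pi_{x'}}\|_{TV}=O(\|x-x'\|)$ and an analogous bound on $\|\nabla\mu_{\pi_x}-\nabla\mu_{\pi_{x'}}\|$. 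Because $A_x$ and $b_x$ are integrals of bounded quantities against $\mu_{\pi_x}\otimes\pi_x\otimes\mathcal{P}$, this propagates to Lipschitzness and gradient-Lipschitzness of $A_x,b_x$. Writing $y^*(x)=-A_x^{-1}b_x$ and $\nabla y^*(x)=-A_x^{-1}\bigl(\nabla b_x+(\nabla A_x)y^*(x)\bigr)$ (using $\nabla A_x^{-1}=-A_x^{-1}(\nabla A_x)A_x^{-1}$), the uniform lower bound $\sigma_{\min}(A_x)\ge\sigma$ then yields the two bounds \eqref{eq:y*lip} and \eqref{eq:y*smooth}. The same chain of bounds gives Lipschitzness of $v(x)=v(x,y^*(x))$ in $x$, finishing Assumption \ref{assumption:vglip}.

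Finally I verify the noise bound \eqref{eq:asm 3 ac}. Under the sampling protocol \ref{item:noise ac}, the samples are drawn exactly from the distributions defining $h$ and $v$, so $\E[\xi_k\mid\mathcal{F}_k^1]=0$ and $\E[\psi_k\mid\mathcal{F}_k]=0$ with no Markovian bias. For the second moments, I will add and subtract $y^*(x_k)$ inside the stochastic TD update to split $\psi_k$ into a part evaluated at $y^*(x_k)$ (whose variance is bounded by a constant depending on $\|\phi\|_\infty$, the reward bound, and $\|y^*(x_k)\|\le\|A_{x_k}^{-1}b_{x_k}\|\le\sigma^{-1}$) plus a part proportional to $y_k-y^*(x_k)$ (bounded by $\|\phi\|_\infty^2\|y_k-y^*(x_k)\|^2$). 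The same decomposition applied to $\xi_k$, using in addition the bound $\|\nabla\log\pi_x\|\le C_\pi$, gives the form $\sigma_\ast^2+\bar\sigma_\ast^2\|y_k-y^*(x_k)\|^2$ with the constants advertised.

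The main obstacle is the smoothness argument for $y^*$: one must leverage geometric ergodicity in a way that is uniform in $x$ and commutes with one more differentiation in $x$, because Assumption \ref{assumption:y*} demands not just Lipschitz continuity but Lipschitz continuity of $\nabla y^*$. Once that ingredient is in place, every other piece of the lemma follows by straightforward algebraic bounds using conditions \ref{item:A ac}--\ref{item:noise ac}.
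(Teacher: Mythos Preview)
Your proposal is correct and mirrors the paper's overall structure: the easy pieces (Assumptions~\ref{assumption:strongly g}, \ref{assumption:vglip} in $y$, \ref{assumption:integral}, and the noise bound \eqref{eq:asm 3 ac} via $\|y_k\|^2\le 2\|y^*(x_k)\|^2+2\|y_k-y^*(x_k)\|^2$ with $\|y^*(x_k)\|\le\sigma^{-1}$) are done exactly as the paper does them, and you correctly single out the Lipschitzness of $\nabla y^*$ as the only substantive step.

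The one tactical difference is in how that step is executed. You propose to first prove $\mu_{\pi_x}$ is smooth in $x$ by differentiating the resolvent identity, then push this smoothness through to $A_x,b_x$ and hence to $\nabla y^*(x)=-A_x^{-1}(\nabla b_x+(\nabla A_x)y^*(x))$. The paper instead never isolates $\nabla\mu_{\pi_x}$: it uses the score-function representation $\nabla_i b_x=\E_{s\sim\mu_{\pi_x},a\sim\pi_x}\bigl[\nabla_i\log\pi_x(a|s)\,G_x(s,a)\bigr]$ with $G_x(s,a)=\E_{\pi_x}\bigl[\sum_{t\ge 0}(\hat r_t-\bar r_x)\mid s_0=s,a_0=a\bigr]$ the Poisson-equation solution (and the analogous formula for $\nabla_i A_x$), and then shows $G_x$ is bounded and Lipschitz in $x$ via the deviation matrix $Y_x=\sum_{t\ge0}(P_{\pi_x}^t-P_{\pi_x}^\infty)$ and the identity $Y_x+P_{\pi_x}^\infty=(I-P_{\pi_x}+P_{\pi_x}^\infty)^{-1}$. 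Both routes bottom out in the same fundamental-matrix bound---your $(I-P_{x'}+\mathbf 1\mu_{\pi_{x'}})^{-1}$ is precisely the paper's object---so neither is more elementary; the paper's version just keeps everything as expectations under $\mu_{\pi_x}\otimes\pi_x$ and avoids carrying $\nabla\mu_{\pi_x}$ around as a standalone quantity. One minor note: for the Lipschitzness of $v(x)=\nabla F(x)$ the paper simply invokes the standard policy-gradient smoothness lemma (which concerns $d_{\pi_x}$, not $\mu_{\pi_x}$), rather than deriving it from the same chain; your plan works too, but be aware that $d_{\pi_x}$ has its own (simpler, $\gamma$-geometric) perturbation bound independent of the ergodicity argument.
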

The conditions listed above are standard in the literature \citep{wu2020finite,chen2021tighter}.
For the AC update in \eqref{eq:tdupdate} and \eqref{eq:actorupdate}, Assumption \ref{assumption:vglip}--\ref{assumption:strongly g} and \ref{assumption:integral} or their sufficient conditions have been explored in the RL context by previous works \citep{wu2020finite}. However, the smoothness of $y^*(x)$ in Assumption \ref{assumption:y*}, which is the key condition leading to a faster convergence rate, has yet been proven and was directly assumed in \citep{chen2021tighter}. While in the proof of Lemma \ref{lemma:ac}, we provide a formal justification for the smoothness of $y^*(x)$. Lastly, note that Assumption \ref{assumption:noise} is modified to \eqref{eq:asm 3 ac} in AC. As $y_k$ converges to $y^*(x_k)$, \eqref{eq:asm 3 ac} reduces to Assumption \ref{assumption:noise}. As a result, though \eqref{eq:asm 3 ac} is a special property for AC, the convergence analysis is still within the framework of STSA.

 We then directly present the following theorem with the proof deferred to Appendix \ref{section:ac appendix}.
\begin{Theorem}\label{theorem:ac}
Consider the sequences generated by \eqref{eq:tdupdate} and \eqref{eq:actorupdate} for $k=[K]$. Under conditions \ref{item:A ac}--\ref{item:noise ac}, Theorem \ref{theorem:non-strongly-monotone-2} holds; that is, with $\alpha_k\!=\!\Theta(\frac{1}{\sqrt{K}})$ and $\beta_k\!=\!\Theta(\frac{1}{\sqrt{K}})$, we have
\begin{align}
     &\frac{1}{K}\sum_{k=1}^K \Big(\E\|\nabla F(x_k)\|^2 + \E\|y_k - y^*(x_k)\|^2\Big) = \mathcal{O}\Big(\frac{1}{\sqrt{K}} \Big).
\end{align}
% \begin{align}
%      &\frac{1}{S_K}\sum_{k=1}^K \alpha_k \Big(\E\|\nabla F(x_k)\|^2 + \E\|y_k - y^*(x_k)\|^2\Big) = \mathcal{O}\Big(\frac{\ln K}{\sqrt{K}} \Big), \nonumber\\
%     &\lim_{k\xrightarrow[]{}\infty}\|\nabla F(x_k)\|^2=0~and~
%     \lim_{k\xrightarrow[]{}\infty}\|y_k- y^*(x_k)\|^2 = 0~~a.s.
% \end{align}
\end{Theorem}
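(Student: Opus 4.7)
The plan is to derive Theorem \ref{theorem:ac} as a corollary of Theorem \ref{theorem:non-strongly-monotone-2} applied to the STSA reformulation \eqref{eq:tdupdate}--\eqref{eq:actorupdate} of advantage actor-critic. By Lemma \ref{lemma:ac}, Assumptions \ref{assumption:y*}, \ref{assumption:vglip}, and \ref{assumption:strongly g} are already verified under conditions \ref{item:A ac}--\ref{item:ergodic}. Assumption \ref{assumption:integral} is immediate from $F(x) = (1-\gamma)\E_{s\sim\rho}[V_{\pi_x}(s)]$ being uniformly bounded on $\mathbb{R}^{d_0}$ via $r(s,a)\in[0,1]$ and $\gamma\in(0,1)$, which yields $F(x) \leq 1$. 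The only mismatch with the hypotheses of Theorem \ref{theorem:non-strongly-monotone-2} is that the noise moments in \eqref{eq:asm 3 ac} contain extra state-dependent terms $\bar{\sigma}_n^2 \|y_k - y^*(x_k)\|^2$, so Assumption \ref{assumption:noise} is not directly satisfied, and a small modification of the proof is required.

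To handle the additional state-dependent variance, I would revisit the two descent-type estimates that underlie Theorem \ref{theorem:non-strongly-monotone-2}. First, applying the smoothness of $F$ gives an ascent inequality of the form $\E F(x_{k+1}) \geq \E F(x_k) + \alpha_k c_1 \E\|\nabla F(x_k)\|^2 - \alpha_k c_2 \E\|y_k - y^*(x_k)\|^2 - \alpha_k^2 c_3 \E\|\xi_k\|^2$, while Assumption \ref{assumption:strongly g} combined with the Lipschitz drift of $y^*(x_k)$ from Assumption \ref{assumption:y*} yields a contraction inequality of the form $\E\|y_{k+1} - y^*(x_{k+1})\|^2 \leq (1 - 2\lambda_1\beta_k)\E\|y_k - y^*(x_k)\|^2 + c_4 \alpha_k^2 \E\|\nabla F(x_k)\|^2 + \beta_k^2 \E\|\psi_k\|^2 + \text{lower order}$. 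Substituting the bounds from \eqref{eq:asm 3 ac} introduces additional terms $\alpha_k^2 \bar{\sigma}_0^2 \E\|y_k - y^*(x_k)\|^2$ and $\beta_k^2 \bar{\sigma}_1^2 \E\|y_k - y^*(x_k)\|^2$, both of which are of order $K^{-1}\E\|y_k - y^*(x_k)\|^2$ once $\alpha_k,\beta_k = \Theta(K^{-1/2})$, and are therefore dominated by the negative drift $-2\lambda_1\beta_k \E\|y_k - y^*(x_k)\|^2 = \Theta(K^{-1/2})\E\|y_k - y^*(x_k)\|^2$ for sufficiently large $K$.

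The main obstacle is that Theorem \ref{theorem:non-strongly-monotone-2} cannot be invoked as a pure black box, since its statement assumes Assumption \ref{assumption:noise} rather than the modified bound \eqref{eq:asm 3 ac}. However, the modification only changes constants in the two descent estimates above, and the remaining telescoping and Lyapunov arguments proceed verbatim. Concretely, I would form a Lyapunov function of the shape $\Phi_k = C_F - \E F(x_k) + \nu\, \E\|y_k - y^*(x_k)\|^2$ for a suitably tuned weight $\nu>0$, combine the modified ascent and contraction inequalities, and telescope over $k=1,\dots,K$. After dividing by $K$ and using $\alpha_k,\beta_k = \Theta(K^{-1/2})$, this produces the desired bound
\begin{equation*}
\frac{1}{K}\sum_{k=1}^K \Big(\E\|\nabla F(x_k)\|^2 + \E\|y_k - y^*(x_k)\|^2\Big) = \mathcal{O}\Big(\frac{1}{\sqrt{K}}\Big),
\end{equation*}
establishing Theorem \ref{theorem:ac}.
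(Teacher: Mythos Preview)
Your proposal is essentially the paper's own proof: invoke Lemma \ref{lemma:ac} to verify Assumptions \ref{assumption:y*}, \ref{assumption:vglip}, \ref{assumption:strongly g}, \ref{assumption:integral}, then redo the descent/ascent estimates of Theorem \ref{theorem:non-strongly-monotone-2} with the state-dependent variance \eqref{eq:asm 3 ac} and combine them via the Lyapunov function $-F(x_k)+\|y_k-y^*(x_k)\|^2$. One point to correct when you carry this out: the drift contribution of $\|\nabla F(x_k)\|^2$ to the contraction inequality is of order $\alpha_k$, not $\alpha_k^2$ (it arises from the cross term $\langle y_k^*-y_{k+1},\,y_{k+1}^*-y_k^*\rangle$ after applying the smoothness part of Assumption \ref{assumption:y*}, cf.\ \eqref{eq:driftcI1 2} and \eqref{eq:yk+1-yk+1*cfinal}), so it is cancelled only because its coefficient (roughly $\tfrac14$) is strictly smaller than the $\tfrac12\alpha_k$ gain from the ascent inequality, not because it is a lower-order term.
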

In \citep{chen2021tighter}, the projection step is adopted in the $y_k$ update to ensure that $\|y_k\|< \infty,\forall k$. Since the projection radius is unknown in practice, adopting the projection is essentially assuming that $\|y_k\|$ is bounded for any $k$, which is strong. Theorem \ref{theorem:ac} holds without this projection.

\section{Application to the stochastic compositional optimization method}\label{section:sco}
In this section, we apply our generic STSA results to the stochastic compositional optimization problems and its applications to the meta policy gradient algorithm in RL.

Define mappings $f^n : \mathbb{R}^{d_n} \mapsto \mathbb{R}^{d_{n+1}}$ for $n=0,1,...,N$ with $d_{N+1}=1$. The stochastic compositional problem can be formulated as
\begin{align}\label{eq:sc}
    \max_{x\in \mathbb{R}^{d_0}}F(x) &\coloneqq f^{N}(f^{N-1}(\dots f^0(x)\dots) \nonumber\\
    {\rm with}~~~ f^n (x) &\coloneqq \E_{\zeta^n}[f^n (x;\zeta^n)],~ n=0,1,...,N
\end{align}
where $\zeta^0,\zeta^1,\dots,\zeta^N$ are random variables. 
Here we slightly overload the notation and use $f^n(x;\zeta^n)$ to represent the stochastic version of the mapping. 

\subsection{Reduction from the generic STSA results}\label{section:sc}
To solve the problem in \eqref{eq:sc}, a natural scheme is to use the stochastic gradient descent method with the gradient given by
\begin{align}\label{eq:grad sc}
    \nabla F(x) \!=\! \nabla f^0 (x) \nabla f^1 (f^0(x))\cdots \nabla f^N(f^{N-1}(\cdots f^0(x)\cdots))
\end{align}
where we use $\nabla f^n(f^{n-1}(\dots f^0(x)\dots)) = \nabla f^n(x)|_{x=f^{n-1}(\dots f^0(x)\dots)}$.
To obtain a stochastic estimator of $\nabla F(x)$, we will need to obtain the stochastic estimators for $\nabla f^n (f^{n-1}(... f^0(x)...))$ for each $n$. For example, when $n=1$, one need the estimator of $\nabla f^1 (\E_{\zeta^0}[f^0(x;\zeta^0)])$. However, due to the possible non-linearity of $\nabla f^1(\cdot)$, the natural candidate $\nabla f^1 (f^0(x;\zeta^0))$ is not an unbiased estimator of $\nabla f^1 (\E_{\zeta^0}[f^0(x;\zeta^0)])$. To tackle this issue, a popular method is to directly track  $\E_{\zeta^n}[f^n(\cdot;\zeta^n)]$ by variable $y^n, n=0,1,...,N$. The update takes the following form \citep{yang2018multilevel}:
\begin{subequations}\label{eq:scupdate}
\begin{align}
    y_{k+1}^n &= y_k^n - \beta_{k,n} (y_k^n- f^{n-1} (y_k^{n-1};\zeta_k^{n-1})),~n \in [N]\label{eq:scupdate-y}\\
    x_{k+1} & =  x_k + \alpha_k \nabla f^0 (x_k;\hat{\zeta}_k^0) \nabla f^1 (y_k^1;\zeta_k^1)\cdots \nabla f^N(y_k^N;\zeta_k^N)\label{eq:scupdate-x}
\end{align}
\end{subequations}
where $\hat{\zeta_k^0}$ and $\zeta_k^0$ have the same distribution as that of $\zeta^0$, and $\zeta_k^1,\dots,\zeta_k^N$ have the same distributions as that of $\zeta^1,\dots,\zeta^N$ respectively. When every $y_k^n$ reaches its fixed-point $y_k^{n,*}$, i.e. $y_k^n = y_k^{n,*} \coloneqq f^{n-1}(y_k^{n-1})$ for any $n \in [N]$, then it follows from \eqref{eq:grad sc} that $\nabla f^0 (x) \nabla f^1 (y_k^{1,*})\cdots \nabla f^N(y_k^{N,*}) \!=\! \nabla F(x_k)$,
which indicates that the expected update direction of $x_k$ in \eqref{eq:scupdate} is $\nabla F(x_k)$.

Therefore, the update of $y^n$ in \eqref{eq:scupdate-y} is a special case of the STSA update in \eqref{eq:updatey2} by defining
\begin{subequations}\label{eq:sch}
\begin{align}
\label{eq:scha}
    h^n(y^{n-1},y^n) &= f^{n-1}(y^{n-1}) - y^{n}, \\
    \psi_k^n &= -h^n(y_k^{n-1},y_k^n) + f^{n-1}(y_k^{n-1};\zeta_k^{n-1}) - y_k^n.
\end{align}
\end{subequations}
Likewise, the update of $x$ in \eqref{eq:scupdate-x} is a special case of the STSA update in \eqref{eq:updatex2} by defining
\begin{subequations}
\begin{align}
    v(x,y^1,y^2,\dots,y^N) &= \nabla f^0 (x) \nabla f^1 (y^1)... \nabla f^N(y^N), \\
    \xi_k &\!=\! -v(x_k,y_k^1,\dots,y_k^N) + \nabla f^0 (x_k;\hat{\zeta}_k^0) \cdots \nabla f^N(y_k^N;\zeta_k^N).
\end{align}
\end{subequations}
Next we provide a lemma that summarizes the sufficient conditions of Assumption \ref{assumption:y*}--\ref{assumption:integral}. The listed conditions are standard in the literature \citep{yang2018multilevel,chen2021solving}.
\begin{Lemma}[Verifying assumptions of STSA]\label{lemma:sc}
Consider the following conditions
\begin{enumerate}[label=(\alph*)]
\setlength\itemsep{-0.15em}
\setcounter{enumi}{6}
\item \label{item:f sc} Given any $n \in \{0,1,\dots,N\}$, there exist positive constants $L_{y,n}$ and $L_{y',n}$ such that the mapping $f^n(\cdot)$ is $L_{y,n}$-Lipschitz continuous and $L_{y',n}$-smooth.
\item \label{item:noise sc} Given $\mathcal{F}_k$, for any $n \in [N]$: $f^n (y_k^{n-1};\zeta_k^n)$ and $\nabla f^n (y_k^{n-1};\zeta_k^n)$ are respectively the unbiased estimators of $f^n (y_k^{n-1})$ and $\nabla f^n (y_k^{n-1})$ with bounded variance; $ f^0 (x_k;\hat{\zeta}_k^0)$ and $\nabla f^0 (x_k;\hat{\zeta}_k^0)$ are respectively the unbiased estimators of $f^0 (x_k)$ and $\nabla f^0 (x_k)$ with bounded variance.
\item\label{item:ind} At each iteration $k$, $\hat{\zeta}_k^0,\zeta_k^0,\zeta_k^1,\dots,\zeta_k^N$ are conditionally independent of each other given $\mathcal{F}_k$. 
\item \label{item:v monotone sc} Function $F(x)$ satisfies the restricted secant inequality: There exists a constant $\lambda_0 > 0$ such that $\langle \nabla F(x), x-x^*\rangle \leq - \lambda_0 \|x-x^*\|^2$, where $x^*\coloneqq \arg\max_{x\in\mathbb{R}^{d_1}}F(x)$.
\item\label{item:F sc} There exists a constant $C_F$ such that $F(x) \leq C_F$.
\end{enumerate}
We use $a\xrightarrow[]{}b$ to indicate that $a$ is a sufficient condition of $b$. Then we have
\begin{gather}
    \ref{item:f sc} \xrightarrow[]{} \textit{Assumption \ref{assumption:y*}~and~\ref{assumption:vglip}};~~~ \ref{item:noise sc} \textit{~and~} \ref{item:ind}\xrightarrow[]{} \textit{Assumption \ref{assumption:noise}};~~~ \ref{item:v monotone sc} \xrightarrow[]{} \textit{Assumption \textit{\ref{assumption:strongly v}}};\nonumber\\
    \ref{item:F sc} \xrightarrow[]{} \textit{Assumption \ref{assumption:integral}};~~~
    \textit{Assumption \ref{assumption:strongly g} holds naturally for \eqref{eq:scha}}.
\end{gather}
\end{Lemma}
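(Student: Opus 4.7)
The proof splits naturally into six independent verifications, one for each implication stated in the lemma. I would dispatch them in an order that reuses intermediate facts about the composite mapping $f^N\circ f^{N-1}\circ\cdots\circ f^0$.

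\textbf{Step 1 (Assumptions \ref{assumption:y*}, \ref{assumption:strongly g} from the form of $h^n$).} The key observation is that for $h^n(y^{n-1},y^n)=f^{n-1}(y^{n-1})-y^n$ the fixed point in $y^n$ is explicit: $y^{n,*}(y^{n-1})=f^{n-1}(y^{n-1})$. Hence \eqref{eq:y*lip} is exactly the $L_{y,n-1}$-Lipschitz continuity of $f^{n-1}$ in \ref{item:f sc}, and \eqref{eq:y*smooth} follows because $\nabla y^{n,*}=\nabla f^{n-1}$, which is Lipschitz by the smoothness half of \ref{item:f sc}. For Assumption \ref{assumption:strongly g}, substitute $y^{n,*}=f^{n-1}(y^{n-1})$ and $h^n=f^{n-1}(y^{n-1})-y^n$ to get $\langle y^n-y^{n,*},h^n\rangle = -\|y^n-y^{n,*}\|^2$, so the inequality holds with $\lambda_n=1$ with no further conditions.

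\textbf{Step 2 (Assumption \ref{assumption:vglip} from \ref{item:f sc}).} The Lipschitz continuity of $h^n$ in $y^n$ is immediate because the $y^n$-dependence is linear. For $v(x,y^{1:N})=\nabla f^0(x)\nabla f^1(y^1)\cdots\nabla f^N(y^N)$, I would bound the operator-norm product by a telescoping sum: replace the $y^n$-arguments one at a time, use $\|\nabla f^n(y^n)-\nabla f^n(\bar y^n)\|\le L_{y',n}\|y^n-\bar y^n\|$ for the factor being changed, and the uniform bound $\|\nabla f^m\|\le L_{y,m}$ on the other factors. This gives $\|v(x,y^{1:N})-v(x,\bar y^{1:N})\|\le L_{v,y}\sum_n\|y^n-\bar y^n\|$ with $L_{v,y}$ a product of the $L_{y,m}$'s and $L_{y',n}$'s. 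The Lipschitz continuity of $v(x)=\nabla F(x)$ in \eqref{eq:v lip} is then the classical smoothness of a composition of Lipschitz and smooth maps; I would prove it by induction on $N$ using the chain rule $\nabla F = \nabla f^0\cdot(\nabla f^1\circ f^0)\cdots(\nabla f^N\circ f^{N-1}\circ\cdots\circ f^0)$, together with the fact that each inner function $f^{n-1}\circ\cdots\circ f^0$ is itself Lipschitz (products of the $L_{y,m}$'s).

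\textbf{Step 3 (Assumption \ref{assumption:noise} from \ref{item:noise sc} and \ref{item:ind}).} The crucial point is the filtration structure. Note that $\psi_k^n$ depends on the outer randomness $\zeta_k^{n-1}$ only, while $\mathcal{F}_k^{n+1}$ adds the information of $y_{k+1}^{n+1}$, which is measurable with respect to $\mathcal{F}_k$ together with $\zeta_k^n$. By \ref{item:ind}, $\zeta_k^{n-1}$ and $\zeta_k^n$ are conditionally independent given $\mathcal{F}_k$, so $\E[\psi_k^n\mid\mathcal{F}_k^{n+1}]=\E[\psi_k^n\mid\mathcal{F}_k]=0$ by the unbiasedness in \ref{item:noise sc}; similarly for $\xi_k$, which uses $\hat\zeta_k^0,\zeta_k^1,\dots,\zeta_k^N$ and is independent of $\zeta_k^0$ (the only new randomness in $\mathcal{F}_k^1$), and whose expectation factorizes across the independent estimators into $v(x_k,y_k^{1:N})$. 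Hence Assumption \ref{assumption:noise} is satisfied with $c_n=0$. The second-moment bounds follow from the bounded-variance conditions in \ref{item:noise sc} plus the bounds on $\|\nabla f^n\|$ supplied by \ref{item:f sc}.

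\textbf{Step 4 (Assumptions \ref{assumption:strongly v} and \ref{assumption:integral}).} Both reduce to the identity $v(x)=\nabla F(x)$, which is the chain rule already used in Step 2 (evaluating $v$ at $y^{n,*}(\cdot)=f^{n-1}\circ\cdots\circ f^0$). Then \ref{item:v monotone sc} gives Assumption \ref{assumption:strongly v} verbatim, and \ref{item:F sc} together with $\nabla F=v$ gives Assumption \ref{assumption:integral}.

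The only genuinely non-mechanical piece is Step 2: keeping track of how the Lipschitz constants of $v$ scale through $N$ nested compositions. I would handle this by defining $\Phi_n(x):=f^{n-1}\circ\cdots\circ f^0(x)$, noting $\Phi_n$ is $(\prod_{m<n}L_{y,m})$-Lipschitz and $L_{\Phi',n}$-smooth for a constant $L_{\Phi',n}$ obtained recursively from \ref{item:f sc}, and then writing $\nabla F = \prod_{n=0}^N (\nabla f^n\circ\Phi_n)$ and bounding differences by the same telescoping argument as above. Everything else is routine once the fixed-point identity $y^{n,*}(y^{n-1})=f^{n-1}(y^{n-1})$ is noticed.
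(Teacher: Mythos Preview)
Your proposal is correct and follows essentially the same route as the paper's proof: the paper also identifies $y^{n,*}(y^{n-1})=f^{n-1}(y^{n-1})$ to dispatch Assumptions \ref{assumption:y*} and \ref{assumption:strongly g} (with $\lambda_n=1$), invokes a product-Lipschitz lemma (its Lemma~\ref{lemma:product lip}, which is exactly your telescoping argument) for Assumption \ref{assumption:vglip}, and uses the conditional-independence structure of the $\zeta_k^n$'s to get $c_n=0$ in Assumption \ref{assumption:noise}. Your filtration bookkeeping in Step~3 is slightly more explicit than the paper's, but the arguments are otherwise identical.
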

With Lemma \ref{lemma:sc}, we can directly arrive at the following corollary on the convergence of the stochastic compositional optimization method.
\begin{Corollary}[STSA for multi-level SCO]\label{corollary:sc}
Consider the STSA sequences generated by update \eqref{eq:scupdate}. Under Conditions \ref{item:f sc}--\ref{item:v monotone sc}, Theorem \ref{theorem:strongly monotone} holds. Under Conditions \ref{item:f sc}--\ref{item:ind} and \ref{item:F sc}, Theorem \ref{theorem:non-strongly-monotone-2} holds.
\end{Corollary}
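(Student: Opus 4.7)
The plan is to obtain the corollary as a direct reduction to the generic STSA theorems via Lemma \ref{lemma:sc}. The first step is to re-express the SCO update \eqref{eq:scupdate} in the STSA form \eqref{eq:multiSA}, using the identifications already spelled out before the lemma: namely, $h^n(y^{n-1},y^n) = f^{n-1}(y^{n-1})-y^n$ with noise $\psi_k^n = f^{n-1}(y_k^{n-1};\zeta_k^{n-1})-f^{n-1}(y_k^{n-1})$, and $v(x,y^{1:N})=\nabla f^0(x)\nabla f^1(y^1)\cdots \nabla f^N(y^N)$ with the corresponding compositional noise $\xi_k$. With this alignment in place, the fixed point of $h^n$ is $y^{n,*}(y^{n-1})=f^{n-1}(y^{n-1})$, so $F(x)$ in \eqref{eq:sc} coincides with the composed mapping in \textbf{Assumption \ref{assumption:integral}} and $\nabla F(x)=v(x,y^{1,*}(x),\dots)$, matching the scalar case of $v(x)$ used in Theorems \ref{theorem:strongly monotone} and \ref{theorem:non-strongly-monotone-2}.

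The second step is to invoke Lemma \ref{lemma:sc} to translate the SCO-level conditions into the STSA-level assumptions. Under Conditions \ref{item:f sc}--\ref{item:v monotone sc}, the lemma certifies \textbf{Assumptions \ref{assumption:y*}--\ref{assumption:strongly v}}, which are exactly the hypotheses of Theorem \ref{theorem:strongly monotone}. Under Conditions \ref{item:f sc}--\ref{item:ind} together with \ref{item:F sc}, the lemma certifies \textbf{Assumptions \ref{assumption:y*}--\ref{assumption:strongly g} and \ref{assumption:integral}}, which are the hypotheses of Theorem \ref{theorem:non-strongly-monotone-2}. In both cases \textbf{Assumption \ref{assumption:strongly g}} for $h^n$ is automatic because $h^n(y^{n-1},y^n)=f^{n-1}(y^{n-1})-y^n$ yields $\langle y^n-y^{n,*}(y^{n-1}), h^n(y^{n-1},y^n)\rangle = -\|y^n-y^{n,*}(y^{n-1})\|^2$, i.e. $\lambda_n=1$.

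The third step is to apply the generic theorems. With the step-size choices $\alpha_k,\beta_{k,n}=\Theta(1/k)$, Theorem \ref{theorem:strongly monotone} immediately gives the $\mathcal{O}(1/k)$ rate and the almost-sure convergence statement for the strongly-monotone branch of Corollary \ref{corollary:sc}; with $\alpha_k,\beta_{k,n}=\Theta(1/\sqrt{K})$, Theorem \ref{theorem:non-strongly-monotone-2} yields the $\mathcal{O}(1/\sqrt{K})$ averaged gradient bound for the non-monotone branch.

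The only non-routine point is sanity-checking the noise assumption: \textbf{Assumption \ref{assumption:noise}} requires conditional-on-$\mathcal{F}_k^{n+1}$ bias and variance bounds, not just on $\mathcal{F}_k$. This is precisely what Condition \ref{item:ind} is there for, since conditional independence of $\hat\zeta_k^0,\zeta_k^0,\dots,\zeta_k^N$ given $\mathcal{F}_k$ lets one enlarge the conditioning $\sigma$-algebra from $\mathcal{F}_k$ to $\mathcal{F}_k^{n+1}$ (which further conditions on the innovations used by the other sequences at step $k$) without altering the zero-mean and bounded-variance properties established in Condition \ref{item:noise sc}. This verification, carried out in Lemma \ref{lemma:sc}, is the only place where any calculation is needed; the rest of the corollary is a bookkeeping step that transports the conclusions of the generic theorems back to the SCO iterates.
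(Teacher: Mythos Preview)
Your proposal is correct and follows essentially the same route as the paper: the corollary is obtained by invoking Lemma~\ref{lemma:sc} to verify Assumptions~\ref{assumption:y*}--\ref{assumption:strongly v} (respectively, Assumptions~\ref{assumption:y*}--\ref{assumption:strongly g} and~\ref{assumption:integral}) from the SCO-level conditions, after which Theorems~\ref{theorem:strongly monotone} and~\ref{theorem:non-strongly-monotone-2} apply directly. Your observation that the conditional-independence hypothesis~\ref{item:ind} is precisely what lets one upgrade the conditioning from $\mathcal{F}_k$ to $\mathcal{F}_k^{n+1}$ in the noise verification is exactly the point the paper makes in part (3) of the proof of Lemma~\ref{lemma:sc}.
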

\begin{Remark}[Comparison with prior art in single-loop SCO]
\emph{Corollary \ref{corollary:sc} establishes the sample complexity of $\mathcal{O}(\epsilon^{-1})$ for the strongly monotone case and the complexity of $\mathcal{O}(\epsilon^{-2})$ for the non-monotone case, which are both independent of $N$. This improves over the $\mathcal{O}(\epsilon^{-\frac{N+5}{4}})$ complexity for the strongly concave case and the $\mathcal{O}(\epsilon^{-\frac{N+8}{4}})$ complexity for the non-concave case shown in \citep{yang2018multilevel}. There are other works that establish the same complexity as that in Corollary \ref{corollary:sc}, but they require modification to the basic SA update \eqref{eq:sch} to achieve acceleration; see e.g., \citep{chen2021solving,balasubramanian2021stochastic,ruszczynski2021stochastic}.}
\end{Remark}

\subsection{Application to model-agnostic meta policy gradient}
The model-agnostic meta policy gradient (MAMPG) aims to find a good starting policy that can be generalized to new tasks in a few stochastic policy gradient steps \citep{finn2017maml,fallah2021convergence}.  In this section, we will apply the STSA result in Section \ref{section:result} to MAMPG.

Consider a set of MDPs $\{\mathcal{M}_i\}_{i=1}^M$ with $\mathcal{M}_i=\{ \mathcal{S}, \mathcal{A}, \mathcal{P}_i, r_i, \gamma \}$. The MDPs model a set of RL tasks that share the same state and action space while having different transition kernels $\mathcal{P}_i$ and reward functions $r_i$.
To be better aligned with previous works \citep{fallah2021convergence}, instead of the infinite-horizon objective function defined in \eqref{eq:Jpi}, we consider the finite-horizon one.
Suppose the policy $\pi$ is parametrized by $x \in \mathbb{R}^{d_0}$, then the objective function of task $i$ is defined as 
\begin{equation}
    F_i (x) \coloneqq \E_{\zeta\sim\pi_x}\Big[\sum_{t=0}^H \gamma^t r_i(s_t,a_t) \mid\rho_i,\mathcal{P}_i\Big]
\end{equation}
where $H \in \mathbb{N}^+$ is the horizon, and $\E_{\zeta\sim\pi_x}$ is taken over the trajectory $\zeta \coloneqq (s_0,a_0,s_1,a_1,...,s_H,a_H)$ generated under the policy $\pi_x$, the initial distribution $\rho_i$ and the transition kernels $\mathcal{P}_i$.

The goal of MAMPG is to find an initial policy $\pi_x$ that can achieve good performance in new tasks by performing a few policy gradient steps. In the case where $N$ steps of gradient update are performed, the problem of finding an initial policy parameter $x$ can be formulated as
\begin{align}\label{eq:mampg}
    \max_{x\in\mathbb{R}^{d_0}} F(x) \coloneqq \frac{1}{M}\sum_{i=1}^M F_i (\Tilde{x}_i^N(x))~~~ {\rm with}\,\, \Tilde{x}_i^{n+1} = \Tilde{x}_i^{n}+ \eta \nabla F_i ( \Tilde{x}_i^n),~n=0,1,...,N\!-\!1
\end{align}
where $x$ is the shared initial policy parameter, i.e. $\tilde{x}_i^0 = x$ for any task $i$ and $\Tilde{x}_i^N(x)$ is the parameter after running $N$ steps of gradient ascent with respect to $F_i$ starting from $x$.

\textbf{Solving \eqref{eq:mampg} with SCO method.} The MAMPG problem in \eqref{eq:mampg} can be solved by the stochastic compositional optimization method introduced before. In order to get $\nabla F(x)$, one will need $\nabla F_i(\Tilde{x}_i^N(x))$ for each task $i$. Observe that $F_i(\Tilde{x}_i^N(x))$ can be written as a compositional function:
\begin{align}
    &F_i(\Tilde{x}_i^N(x)) = f_i^N(f_i^{N-1}(\dots f^0_i(x)\dots))~~ {\rm with}~~ f_i^n (x) \coloneqq x + \eta \nabla F_i(x),~n=0,...,N\!-\!1
\end{align}
where $f_i^N(x)=F_i(x)$.
In order to approximate $\nabla F_i(\Tilde{x}_i^N(x))$, we can follow the discussion in Section \ref{section:sc} and introduce tracking variables $y_i^n\in\mathbb{R}^{d_0}$ for $n\in[N]$ which are updated as follows
\begin{align}\label{eq:mampgupdate i}
    y_{k+1,i}^n &= y_{k,i}^n - \beta_{k,n} (y_{k,i}^n- f_i^{n-1} (y_k^{n-1};\zeta_{k,i}^{n-1})),~n=0,1,...,N\!-\!1
\end{align}
where we define $f_i^n(\cdot;\zeta)$ as a stochastic approximation of $f_i^n(\cdot)$ with the random trajectory $\zeta$. Then we estimate $\nabla F_i(\Tilde{x}_i^N(x))$ by $\hat{\nabla} F_{i,k} \!\coloneqq\! \nabla f_i^0 (x;\hat{\zeta}_{k,i}^0) \nabla f_i^1 (y_{k,i}^1;\zeta_{k,i}^1)\cdots \nabla f_i^N(y_{k,i}^N;\zeta_{k,i}^N)$.
Once we obtain $\hat{\nabla} F_{i,k}$ for each task, we update the initial policy as $ x_{k+1} = x_k +  \frac{\alpha_k}{M}\sum_{i=1}^M \hat{\nabla} F_{i,k}$.

Let $y^n \in \mathbb{R}^{d_n M}$ concatenate $y_i^n$ for $i\in[M]$, and $f^n(y_k^n;\zeta_k^n)$   concatenate $f_i^n(y_{k,i}^n;\zeta_{k,i}^n)$ for $i\in[M]$. We can write the MAMPG update as a special case of STSA \eqref{eq:multiSA} by
\begin{subequations}\label{eq:mampg update}
\begin{align}\label{eq:mampg h}
   \!\!\!    h^n(y^{n-1},y^n) &= f^{n-1}(y^{n-1}) - y^{n},~~\psi_k^n = -h^n(y_k^{n-1},y_k^n) + f^{n-1}(y_k^{n-1};\zeta_k^{n-1}) - y_k^n,\\
     \!\! \!\!\! v(x,y^1,\ldots,y^N) &\!=\!\frac{1}{M}\sum_{i=1}^M\nabla f_i^0 (x) \cdots\nabla f_i^N(y_i^N), ~ \xi_k \!=\! -v(x_k,y_k^1,\ldots,y_k^N) \!+\! \frac{1}{M}\sum_{i=1}^M \hat{\nabla} F_{i,k}.   \!
\end{align}
\end{subequations}

With the update \eqref{eq:mampg update}, we summarize the sufficient conditions for the assumptions of Theorem \ref{theorem:non-strongly-monotone-2}.

\begin{Lemma}[Verifying assumptions of STSA]\label{lemma:mampg}
Consider the following conditions:
\vspace{-0.05cm}
\begin{enumerate}[label=(\alph*)]
\setlength\itemsep{-0.16em}
\setcounter{enumi}{11}
\item\label{item:policy} There exist constants $L_\pi$,$L_\pi'$, $L_\pi''$ and $C_\pi$ such that for any $(s,a)\!\in\!\mathcal{S}\!\times\!\mathcal{A}$ and $x,x'\!\in\! \mathbb{R}^{d_0}$, we have: i) $\|\pi_{x}(a|s)\!\!-\!\!\pi_{x'}(a|s)\|\!\!\leq\!\! L_\pi \|x\!\!-\!\!x'\|$; ii) $\|\nabla\log\pi_{x}(a|s)\!-\!\nabla\log\pi_{x'}(a|s)\|\!\leq\! L'_\pi \|x\!-\!x'\|$; iii) $\|\nabla^2\log\pi_{x}(a|s)\!-\!\nabla^2\log\pi_{x'}(a|s)\|\!\leq\! L''_\pi \|x\!-\!x'\|$ and
iv) $\|\nabla\log\pi_x(a|s)\| \!\leq\! C_\pi$.
\item \label{item:noise mampg} Given $\mathcal{F}_k$, we have for any $n \in [N]$ and $i \in [M]$: $f_i^n (y_{k,i}^{n-1};\zeta_{k,i}^n)$ and $\nabla f_i^n (y_{k,i}^n;\zeta_{k,i}^n)$ are respectively the unbiased estimators of $f_i^n (y_{k,i}^{n-1})$ and $\nabla f_i^n (y_{k,i}^n)$ with bounded variance. Likewise, $f_i^0 (x_k;\zeta_{k,i}^0)$ and $\nabla f_i^0 (x_k;\hat{\zeta}_{k,i}^0)$ are respectively unbiased estimators of $f_i^0 (x_k)$ and $\nabla f_i^0 (x_k)$ with bounded variance.
\item\label{item:traj} Given $\mathcal{F}_k$, $\hat{\zeta}_{k,i}^0,\zeta_{k,i}^0,\zeta_{k,i}^1,\dots,\zeta_{k,i}^N$ are conditionally independent of each other for $i\in [M]$.
\end{enumerate}
\vspace{-0.05cm}
We use $a\xrightarrow[]{}b$ to indicate that $a$ is a sufficient condition of $b$. Then we have
\vspace{-0.1cm}
\begin{gather}
    \ref{item:policy}\xrightarrow[]{} \textit{Assumption \ref{assumption:y*}~and~ \ref{assumption:vglip}};~~~\ref{item:noise mampg}\textit{~and~}\ref{item:traj}\xrightarrow[]{} \textit{Assumption \ref{assumption:noise}};\nonumber\\
    \textit{Assumption \ref{assumption:strongly g} holds by plugging in \eqref{eq:mampg h}};~ \textit{Assumption \ref{assumption:integral} holds under the bounded reward}.
\end{gather}
\end{Lemma}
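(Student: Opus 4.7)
The plan is to verify the four implications in the statement one by one by reducing the MAMPG compositional update to standard policy-gradient regularity under condition \ref{item:policy}. The central intermediate fact I would establish first is that, under bounded reward, $F_i(x)$ is three-times differentiable with a $C_{F'}$-Lipschitz gradient and a $C_{F''}$-Lipschitz Hessian, with constants depending only on $C_\pi, L_\pi', L_\pi'', H, \gamma$. This follows from the standard log-derivative representation $\nabla F_i(x) = \E_{\zeta \sim \pi_x}[R_i(\zeta)\sum_{\tau=0}^H \nabla \log \pi_x(a_\tau|s_\tau)]$ and the analogous second- and third-order expansions (where $R_i(\zeta)$ is the discounted cumulative reward), combined with the bounds on $\nabla^k \log \pi_x$ in \ref{item:policy} and the telescoping Lipschitz argument.

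For \ref{item:policy} $\Rightarrow$ Assumptions \ref{assumption:y*} and \ref{assumption:vglip}: reading off the form of $h^n$ in \eqref{eq:mampg h}, the fixed point satisfies $y^{n,*}(y^{n-1}) = f^{n-1}(y^{n-1})$, so the two inequalities in Assumption \ref{assumption:y*} reduce to Lipschitz continuity and Lipschitz Jacobian of $f^{n-1}$. Since $f_i^{n}(x) = x + \eta \nabla F_i(x)$ for $n < N$ and $f_i^N = F_i$, both properties follow from the preceding smoothness of $F_i$. For Assumption \ref{assumption:vglip}, I would apply a product/telescoping rule to $v(x,y^{1:N}) = \tfrac{1}{M}\sum_i \nabla f_i^0(x) \nabla f_i^1(y_i^1) \cdots \nabla f_i^N(y_i^N)$, using that each factor is bounded and Lipschitz in its argument.

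For \ref{item:noise mampg} and \ref{item:traj} $\Rightarrow$ Assumption \ref{assumption:noise}: the noise $\psi_k^n$ has zero conditional mean given $\mathcal{F}_k^{n+1}$ since $\mathcal{F}_k^{n+1}$ does not contain $\zeta_k^{n-1}$ and $f^{n-1}(\cdot;\zeta_k^{n-1})$ is unbiased by \ref{item:noise mampg}. For $\xi_k$, I would expand $\hat\nabla F_{i,k}$ as a product of $N+1$ independent stochastic factors (by \ref{item:traj}), so conditional on $\mathcal{F}_k^1$ the expectation factorizes into a product of unbiased factors and equals $v(x_k, y_k^{1:N})$, giving exact zero conditional bias (hence the $c_0^2 \alpha_k$ slack is trivially satisfied). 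Variance bounds come from the boundedness of each $\nabla f_i^n$ (via $C_\pi$ and $\eta L_{F'}$) together with the variance bound in \ref{item:noise mampg}. The third implication is immediate: $\langle y^n - y^{n,*}, h^n(y^{n-1},y^n)\rangle = -\|y^n-y^{n,*}\|^2$ gives Assumption \ref{assumption:strongly g} with $\lambda_n=1$. The fourth implication is equally direct: bounded reward yields $F(x) \le \tfrac{1-\gamma^{H+1}}{1-\gamma}$.

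The main obstacle is the Lipschitz-Hessian statement for $F_i$, which is needed so that $\nabla f_i^{n-1} = I + \eta \nabla^2 F_i$ is itself Lipschitz (giving the smoothness of $y^{n,*}$). Establishing this requires differentiating the log-derivative formula twice and writing $\nabla^2 F_i(x) - \nabla^2 F_i(x')$ as a sum of terms that each mix a derivative of $\pi_x$ with a derivative of $R_i$-weighted score functions; controlling each piece needs the $L_\pi''$ bound in \ref{item:policy}(iii) together with a change-of-measure argument on the trajectory distribution. Every other step is a routine Lipschitz/boundedness bookkeeping on compositions, but this single ingredient is where all the regularity hypotheses in \ref{item:policy} are actually used.
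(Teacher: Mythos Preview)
Your proposal is correct and follows essentially the same route as the paper: the paper also reduces Assumption~\ref{assumption:y*} to proving that $\nabla^2 F_i$ is Lipschitz, writes out the policy Hessian explicitly as $\nabla^2 F_i(x)=\E_{\zeta\sim p(\cdot|x)}[H(x;\zeta)]$, and bounds $\|\nabla^2 F_i(x)-\nabla^2 F_i(x')\|$ by a change-of-measure term (controlled via $L_\pi$ through the total-variation bound on trajectory distributions) plus an integrand-Lipschitz term (controlled via $L_\pi',L_\pi'',C_\pi$ using the product-Lipschitz lemma), exactly as you outline. The remaining verifications (Assumption~\ref{assumption:vglip} via the product rule on the factors $\nabla f_i^n$, Assumption~\ref{assumption:noise} by reducing to the SCO conditions \ref{item:noise sc}--\ref{item:ind}, Assumption~\ref{assumption:strongly g} with $\lambda_n=1$, and Assumption~\ref{assumption:integral} from $F_i\le (1-\gamma)^{-1}$) match your plan as well.
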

Condition \ref{item:policy} is a standard assumption commonly adopted in the literature; see e.g., \citep{fallah2021convergence,agarwal2019optimality}.
It is satisfied with certain popular policy parameterization such as the softmax policy. Conditions \ref{item:noise mampg}~and~\ref{item:traj} can be satisfied with the certain choice of estimators and sampling protocols which we will specify in the appendix.

\begin{Theorem}[Complexity of MAMPG]\label{theorem:mampg}
Consider the STSA sequences generated by the MAMPG update in \eqref{eq:mampg update} for $k=[K]$. We have Theorem \ref{theorem:non-strongly-monotone-2} holds under conditions \ref{item:policy}--\ref{item:traj}.
\end{Theorem}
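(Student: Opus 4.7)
The plan is to reduce Theorem \ref{theorem:mampg} directly to the generic STSA result of Theorem \ref{theorem:non-strongly-monotone-2} via Lemma \ref{lemma:mampg}. Observe that the update in \eqref{eq:mampg update} is already cast in the STSA form \eqref{eq:multiSA} with the mappings $h^n$, $\psi_k^n$, $v$, $\xi_k$ explicitly identified. Hence, once we verify that conditions \ref{item:policy}--\ref{item:traj} imply Assumptions \ref{assumption:y*}--\ref{assumption:strongly g} and \ref{assumption:integral}, Theorem \ref{theorem:non-strongly-monotone-2} applies verbatim with step sizes $\alpha_k,\beta_{k,n}=\Theta(1/\sqrt{K})$ and gives the desired $\mathcal{O}(1/\sqrt{K})$ average-iterate bound.

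The bulk of the work is therefore in establishing Lemma \ref{lemma:mampg}. I would proceed assumption by assumption. For Assumption \ref{assumption:y*}, note that by \eqref{eq:mampg h} the fixed-point map is $y^{n,*}(y^{n-1}) = f^{n-1}(y^{n-1})$, so its Lipschitz continuity and Lipschitz gradient reduce to showing that each $f_i^{n-1}$ is Lipschitz and smooth. Since $f_i^n(x)=x+\eta\nabla F_i(x)$ for $n<N$ and $f_i^N=F_i$, this amounts to bounding $\nabla F_i$, $\nabla^2 F_i$, and $\nabla^3 F_i$; these bounds follow from the bounded/Lipschitz derivatives of $\log\pi_x$ in condition \ref{item:policy} together with the finite-horizon policy-gradient identity for $\nabla F_i(x)=\E_{\zeta\sim\pi_x}[\sum_{t=0}^H \gamma^t r_i(s_t,a_t)\sum_{t'=0}^t \nabla\log\pi_x(a_{t'}|s_{t'})]$, from which a clean chain-rule argument yields smoothness of $\nabla F_i$ and Lipschitzness of $\nabla^2 F_i$. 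Assumption \ref{assumption:vglip} then follows by applying the product rule to $v(x,y^{1:N})=\frac{1}{M}\sum_i \nabla f_i^0(x)\nabla f_i^1(y_i^1)\cdots \nabla f_i^N(y_i^N)$ and using the just-established Lipschitz and boundedness properties of each $\nabla f_i^n$; the Lipschitzness of $h^n$ in $y^n$ is immediate because $h^n$ is affine in $y^n$ with constant modulus $1$.

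Assumption \ref{assumption:strongly g} holds trivially because $\langle y^n - y^{n,*},h^n(y^{n-1},y^n)\rangle = -\|y^n-y^{n,*}(y^{n-1})\|^2$ from the definition of $h^n$. Assumption \ref{assumption:integral} holds with $F(x)\le\frac{1}{1-\gamma}$ under bounded rewards. Assumption \ref{assumption:noise} follows from conditions \ref{item:noise mampg}--\ref{item:traj}: conditional independence across the trajectories $\hat\zeta_{k,i}^0,\zeta_{k,i}^0,\zeta_{k,i}^1,\dots,\zeta_{k,i}^N$ ensures that the product $\nabla f_i^0(x_k;\hat\zeta_{k,i}^0)\cdots\nabla f_i^N(y_{k,i}^N;\zeta_{k,i}^N)$ is an unbiased estimator of the corresponding product of expectations, so $\E[\xi_k\mid\mathcal{F}_k^1]=0$ (hence the bias condition with $c_0=0$); the bounded-variance claim for $\xi_k$ follows by expanding the product, using the bounded-moment hypotheses on each factor, and invoking the a.s.\ boundedness of $\nabla f_i^n$ established above. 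The noise in $\psi_k^n$ is even simpler because $h^n$ is affine and $f^{n-1}(y_k^{n-1};\zeta_k^{n-1})$ is an unbiased estimator with bounded variance of $f^{n-1}(y_k^{n-1})$.

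The hardest step will be the smoothness half of Assumption \ref{assumption:y*}: expressing $\nabla^2 F_i$ and $\nabla^3 F_i$ as trajectory expectations and carefully bookkeeping the products $\nabla\log\pi_x,\nabla^2\log\pi_x,\nabla^3\log\pi_x$ (controlled via condition \ref{item:policy}) together with the finite reward sum $\sum_{t=0}^H \gamma^t r_i$, so that a uniform Lipschitz bound on $\nabla f_i^n$ is obtained. Once this derivative calculus is in place, the remaining pieces are essentially bookkeeping, and the conclusion of Theorem \ref{theorem:mampg} is an immediate corollary of Theorem \ref{theorem:non-strongly-monotone-2}.
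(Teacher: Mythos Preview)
Your overall strategy is exactly the paper's: reduce Theorem \ref{theorem:mampg} to Theorem \ref{theorem:non-strongly-monotone-2} by verifying Lemma \ref{lemma:mampg} assumption by assumption, exploiting the SCO structure of \eqref{eq:mampg update}. Your treatment of Assumptions \ref{assumption:vglip}, \ref{assumption:noise}, \ref{assumption:strongly g}, and \ref{assumption:integral} is correct and matches the paper's.

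There is, however, one real gap in your plan for the smoothness half of Assumption \ref{assumption:y*}. You propose to obtain the Lipschitz continuity of $\nabla f_i^{n-1}=I+\eta\nabla^2 F_i$ by \emph{bounding} $\nabla^3 F_i$, and you state that the relevant bookkeeping involves $\nabla^3\log\pi_x$ ``controlled via condition \ref{item:policy}.'' But condition \ref{item:policy} only gives Lipschitz continuity of $\nabla^2\log\pi_x$ (the constant $L_\pi''$); it supplies no bound on, nor even guarantees the existence of, $\nabla^3\log\pi_x$. Consequently you cannot write $\nabla^3 F_i$ as a trajectory expectation and control it with the stated hypotheses.

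The paper avoids this by working directly with the increment $\|\nabla^2 F_i(x)-\nabla^2 F_i(x')\|$ rather than through a third derivative. Writing $\nabla^2 F_i(x)=\E_{\zeta\sim p(\cdot\mid x)}[H(x;\zeta)]$ for an explicit trajectory integrand $H$, one decomposes
\[
\|\nabla^2 F_i(x)-\nabla^2 F_i(x')\|\;\le\;\big\|\E_{p(\cdot\mid x)}[H(x;\zeta)]-\E_{p(\cdot\mid x')}[H(x;\zeta)]\big\|\;+\;\E_{p(\cdot\mid x')}\big\|H(x;\zeta)-H(x';\zeta)\big\|.
\]
The first term is controlled by the total-variation distance between trajectory laws (via the Lipschitz continuity of $\pi_x$ in condition \ref{item:policy}), and the second term by the Lipschitz continuity of $\nabla\log\pi_x$ and $\nabla^2\log\pi_x$ that condition \ref{item:policy} does supply. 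This is the one place where your route would not go through as written; replacing the $\nabla^3 F_i$ step by this direct Lipschitz argument on $\nabla^2 F_i$ fixes the proof, and the rest of your outline then yields the theorem exactly as in the paper.
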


Theorem \ref{theorem:mampg} implies a sample complexity of $\mathcal{O}(\epsilon^{-2})$ to achieve $\epsilon$-accuracy, which improves over the $\mathcal{O}(\epsilon^{-4})$ sample complexity in \citep{fallah2021convergence}. Moreover, Theorem \ref{theorem:mampg} holds for any $N \geq 1$  in \eqref{eq:mampg}, while the method in \citep{fallah2021convergence} only applies to the case $N=1$.

\vspace{-0.2cm}
\section{Conclusions}
In this work, we consider the general nonlinear SA with multiple coupled sequences, and study its non-asymptotic performance. Different from the dominating two-timescale SA analysis, we are particularly interested in under which conditions, single-timescale analysis can be applied to nonlinear SA with multiple coupled sequences.  
When all the sequences have strongly monotone increments, we establish the iteration complexity of $\mathcal{O}(\epsilon^{-1})$. When the main sequence is not strongly-monotone, we establish the iteration complexity of $\mathcal{O}(\epsilon^{-2})$. We then apply our generic SA analysis to stochastic bilevel and compositional optimization and improve their existing results. Specifically, we improve the state-of-the-art convergence rate of: 1) the SBO method and its application to the AC method; and, 2) the multi-level SCO method and its application to the MAMPG method.

\bibliography{abrv,bib}

% \newpage
\appendix

\section{Additional related works}
In this section, we review additional prior work on the applications of multi-sequence SA.

\textbf{Gradient-based stochastic bilevel optimization.} 
One recent application of multi-sequence SA is on the stochastic gradient-based bilevel optimization. 
The bilevel optimization was first introduced in \citep{stackelberg}. Recently, the gradient-based bilevel methods have gained growing popularity \citep{sabach2017jopt,franceschi2018bilevel,grazzi2020iteration,liu2020icml}. The finite-time convergence of the double-loop bilevel optimization methods has been studied in some previous works; see e.g., \citep{ghadimi2018approximation,ji2020provably,ji2021lower}. Later, \citep{hong2020ac} proved the finite-time convergence rate for the single-loop two time-scale bilevel optimization method, which was then improved by \citep{chen2021tighter} to the optimal rate with additional assumptions and a more refined analysis. There are also other works that incorporate momentum to accelerate the convergence rate; see e.g., \citep{chen2022single,khanduri2021momentum,guo2021randomized,yang2021provably}. After our initial conference submission, we have also noticed some concurrent works that are relevant to this work; e.g., \citep{dagreou2022framework,grazzi2022bilevel,Li2022fully,hu2022multi}. Specifically, \citep{dagreou2022framework} proposed a SBO method with the variance-reduction technique and achieved optimal rate. And \citep{grazzi2022bilevel} proposed a SBO method that achieves the optimal rate without warm-start. The algorithms in \citep{dagreou2022framework,grazzi2022bilevel} are not a case of the plain-vanilla SA update discussed in this work and thus its analysis is not applicable to our problem. 
In addition, the multi-block min-max bilevel algorithm has been studied in \citep{hu2022multi}. 
Lastly, \citep{Li2022fully} proposed a single-loop SBO method without Hessian inverse, but it required the bounded-gradient assumption not needed in this work.

\textbf{Gradient-based stochastic compositional optimization.}
Stochastic compositional optimization algorithm is also a recent application of multi-sequence SA.
The two time-scale stochastic compositional optimization method has been proposed in \citep{wang2014stochastic,wang2017jmlr}. Due to the two time-scale step sizes choice, the convergence rate of \citep{wang2014stochastic,wang2017jmlr} is slower than that of the SGD. In order to improve the convergence, \citep{ghadimi2020jopt,chen2021solving,ruszczynski2021stochastic,balasubramanian2021stochastic} have modified the basic update in \citep{wang2014stochastic,yang2018multilevel} and successfully established the convergence rate same as that of SGD. Concurrent to this work, \citep{jiang2022optimal} proposed a variance-reduced SCO method that achieved the optimal rate under variance-reduction, but the present work focuses on establishing an optimal rate for the SA update without having diminishing variance. Due to the difference in the update scheme, their analysis is not directly applicable to our SCO case.

\textbf{Actor-critic method with linear function approximation.} After its first introduction in \citep{konda1999actor}, the finite-sample guarantee for the AC algorithm has been established in \citep{yang2018finite,kumar2019sample,fu2020single} with i.i.d. sampling. In \citep{qiu2019finite}, the finite-time convergence rate has been established for the nested-loop AC under the Markovian setting, which was later improved improved by \citep{xu2020improving}. On the other hand, the finite-time convergence of two-timescale AC has been studied in \citep{wu2020finite} under Markovian sampling and \citep{hong2020ac,chen2021tighter} under i.i.d. sampling.
Concurrent to our work, a recent work \citep{olshevsky2022small} has used the small-gain theorem to analyze the single-loop AC algorithm and achieved the sample complexity with the  best-known dependence on the condition number. However, \citep{olshevsky2022small} still requires projection in the critic update.

\section{Proof of Theorem \ref{theorem:strongly monotone}}\label{section:strongly monotone appendix}
\subsection{Analysis of the lower-level sequences}

  For brevity, we define the shorthand notations $y_k^{n,*} \coloneqq y^{n,*}(y^{n-1}_k)$ with $y_k^{1,*} \coloneqq y^{1,*}(x_k)$. Also, we write $\E[\cdot|\mathcal{F}_k]$ as $\E_k[\cdot]$ for brevity. 

\noindent\textbf{One-step contraction of lower-level sequences.} With $y_k^0 = x_k$, it holds for any $n \in [N]$ that
\begin{align}\label{eq:yk+1-yk*start}
    \E_k\|y_{k+1}^n\!-\!y_k^{n,*}\|^2 
    &\!=\! \|y_k^n-y_k^{n,*}\|^2 \!+\! 2\beta_{k,n} \E_k\langle y_k^n\!-\!y_k^{n,*},h^n(y_k^{n-1},y_k^n)\!+\!\psi_k^n\rangle \!+\! \E_k\|y_{k+1}^n\!-\!y_k^n\|^2\!.\!
\end{align}
The second term in \eqref{eq:yk+1-yk*start} can be bounded as
\begin{align}\label{eq:yk+1-yk*start I2}
    \E_k\langle y_k^n-y_k^{n,*},h^n(y_k^{n-1},y_k^n)+\psi_k^n\rangle
    &= \langle y_k^n-y_k^{n,*},h^n(y_k^{n-1},y_k^n))  \rangle+\langle y_k^n-y_k^{n,*},\E_k[\psi_k^n]\rangle \nonumber\\
    &\leq -\lambda_{n} \|y_k^n-y_k^{n,*}\|^2 + \|y_k^n-y_k^{n,*}\|\|\E_k[\psi_k^n]\| \nonumber\\
    &\leq -\lambda_{n} \|y_k^n-y_k^{n,*}\|^2 + \frac{\lambda_{n}}{4}\|y_k^n-y_k^{n,*}\|^2 + \frac{1}{\lambda_{n}}\|\E_k[\psi_k^n]\|^2 \nonumber\\
    &\leq - \frac{3\lambda_{n}}{4}\|y_k^n-y_k^{n,*}\|^2 + \frac{ c_n^2}{\lambda_{n}} \beta_{k,n},
\end{align}
where the first inequality follows from the strong monotonicity of $h(y^{n-1},y^n)$ in Assumption \ref{assumption:strongly g}, the second inequality follows from the Young's inequality, and the last inequality follows from the bias of the increment $\psi_k^n$ in Assumption \ref{assumption:noise}.

The third term in \eqref{eq:yk+1-yk*start} can be bounded as
% \begin{align}\label{eq:yk+1-yk*start I3}
%     \E_k\|y_{k+1}-y_k\|^2 
%     &= \beta_k^2\big(\|h(x_k,y_k)\|^2 + 2 \langle h(x_k,y_k), \E_k[\psi_k] \rangle + \E_k\|\psi_k\|^2 \big) \nonumber\\
%     &\leq \beta_k^2\big(\|h(x_k,y_k)\|^2 + c_\beta \|h(x_k,y_k)\|^2 + \frac{1}{c_\beta}\|\E_k[\psi_k]\|^2 + \sigma_y^2\big) \nonumber\\
%     &\leq (1\!+\!c_\beta)L_h^2\beta_k^2 \|y_k - y_k^*\| + (\sigma_y^2 \!+\! c_\beta \beta_k) \beta_k^2.
% \end{align}
\begin{align}\label{eq:yk+1-yk*start I3}
    \E_k\|y_{k+1}^n-y_k^n\|^2
    &\leq 2\beta_{k,n}^2\big(\|h^n(y_k^{n-1},y_k^n)\|^2 +  \sigma_n^2\big) 
    \leq 2 L_{h,n}^2\beta_{k,n}^2 \|y_k^n - y_k^{n,*}\| + 2\sigma_n^2  \beta_{k,n}^2
\end{align}
where the last inequality follows from Assumption \ref{assumption:vglip} which gives
\begin{align}
   \|h^n(y_k^{n-1},y_k^n)\|=\|h^n(y_k^{n-1},y_k^n)-\underbrace{h^n(y_k^{n-1},y_k^{n,*}(y_k^{n-1}))}_{=0}\|\leq L_{h,n} \|y_k^n-y_k^{n,*}(y_k^{n-1})\|.
\end{align}

Collecting the upper bounds in \eqref{eq:yk+1-yk*start I2} and \eqref{eq:yk+1-yk*start I3} yields
\begin{align}\label{eq:yk+1-yk*}
\E_k\|y_{k+1}^n-y_k^{n,*}\|^2
    &\leq (1-\frac{3}{2}\lambda_{n} \beta_{k,n} + 2 L_{h,n}^2 \beta_{k,n}^2)\|y_k^n-y_k^{n,*}\|^2 + 2(\sigma_n^2 \!+\!   c_n^2 \lambda_{n}^{-1}) \beta_{k,n}^2 \nonumber\\
    &\leq (1-\lambda_{n} \beta_{k,n} )\|y_k^n-y_k^{n,*}\|^2 + 2(\sigma_n^2 \!+\!   c_n^2 \lambda_{n}^{-1}) \beta_{k,n}^2,
\end{align}
where the last inequality is due to the choice of step size that satisfies $2 L_{h,n}^2 \beta_{k,n}^2 \leq \frac{\lambda_{n}}{2} \beta_{k,n}$.

\noindent\textbf{Bounding the drifting optimality gap.} For any $n \geq 1$, we have
\begin{align}\label{eq:yk+1-yk+1*}
    \|y_{k+1}^n-y_{k+1}^{n,*}\|^2 
    &= \|y_{k+1}^n-y_k^{n,*}\|^2 + 2\langle y_k^{n,*}-y_{k+1}^n,y_{k+1}^{n,*}-y_k^{n,*} \rangle + \|y_k^{n,*}-y_{k+1}^{n,*}\|^2.
\end{align}
\textbf{(1) When $n\geq 2$.}
By the mean-value theorem, for some $\hat{y}_{k+1}^{n-1}=a y_k^{n-1} + (1-a) y_{k+1}^{n-1}, a \in [0,1]$, the second term in \eqref{eq:yk+1-yk+1*} can be rewritten as
\begin{align}\label{eq:drift n>=2}
    \langle y_k^{n,*}-y_{k+1}^n,y_{k+1}^{n,*}-y_k^{n,*} \rangle 
    &= \langle y_k^{n,*}-y_{k+1}^n,\nabla y^{n,*}(\hat{y}_{k+1}^{n-1})^\top (y_{k+1}^{n-1}-y_k^{n-1}) \rangle \nonumber\\
    &= \langle y_k^{n,*}-y_{k+1}^n,\beta_{k,n-1}\nabla y^{n,*}(\hat{y}_{k+1}^{n-1})^\top h^{n-1}(y_k^{n-2},y_k^{n-1}) \rangle \nonumber\\
    &\quad+ \langle y_k^{n,*}-y_{k+1}^n,\beta_{k,n-1}\nabla y^{n,*}(\hat{y}_{k+1}^{n-1})^\top \psi_k^{n-1} \rangle.
\end{align}
The first term in the right-hand side (RHS) of \eqref{eq:drift n>=2} can be bounded as
\begin{align}\label{eq:driftI1 n>=2}
    &\langle y_k^{n,*}-y_{k+1}^n,\beta_{k,n-1}\nabla y^{n,*}(\hat{y}_{k+1}^{n-1})^\top h^{n-1}(y_k^{n-2},y_k^{n-1}) \rangle \nonumber\\
    &\leq L_{y,n}\beta_{k,n-1}\| y_k^{n,*}-y_{k+1}^n\| \| h^{n-1}(y_k^{n-2},y_k^{n-1})\| \nonumber\\
    &\leq L_{y,n}L_{h,n-1}\beta_{k,n-1}\| y_k^{n,*}-y_{k+1}^n\|\|y_k^{n-1}-y_k^{n-1,*}\| \nonumber\\
    &\leq \frac{2 L_{y,n}^2 L_{h,n-1}^2}{\lambda_{n-1}}\beta_{k,n-1}\| y_k^{n,*}-y_{k+1}^n\|^2 + \frac{\lambda_{n-1}}{8} \beta_{k,n-1}\|y_k^{n-1}-y_k^{n-1,*}\|^2
\end{align}
where the second inequality follows from
\begin{align}\label{eq:hn-1}
    \| h^{n-1}(y_k^{n-2},y_k^{n-1})\| 
    &= \| h^{n-1}(y_k^{n-2},y_k^{n-1})- h^{n-1}(y_k^{n-2},y_k^{n-1,*})\| \nonumber\\
    &\leq L_{h,n-1} \|y_k^{n-1}-y_k^{n-1,*}\|.
\end{align}
The second term in the RHS of \eqref{eq:drift n>=2} can be further decomposed into
\begin{align}\label{eq:driftI2start n>=2}
    &\langle y_k^{n,*}-y_{k+1}^n,\beta_{k,n-1}\nabla y^{n,*}(\hat{y}_{k+1}^{n-1})^\top \psi_k^{n-1} \rangle \nonumber\\
    &= \langle y_k^{n,*}-y_{k+1}^n,\beta_{k,n-1}(\nabla y^{n,*}(\hat{y}_{k+1}^{n-1})-\nabla y^{n,*}(y_k^{n-1}))^\top \psi_k^{n-1} \rangle \nonumber\\
    &\quad+ \langle y_k^{n,*}-y_{k+1}^n,\beta_{k,n-1}\nabla y^{n,*}(y_k^{n-1})^\top \psi_k^{n-1} \rangle.
\end{align}

Taking expectation on the first term in the RHS of \eqref{eq:driftI2start n>=2} leads to
\begin{align} \label{eq:driftI2 n>=2}
    &\E_k\langle y_k^{n,*}-y_{k+1}^n,\beta_{k,n-1}\big(\nabla y^{n,*}(\hat{y}_{k+1}^{n-1})-\nabla y^{n,*}(y_k^{n-1})\big)^\top \psi_k^{n-1} \rangle  \nonumber\\
    &\leq L_{y',n}\beta_{k,n-1}\E_k\big[\|y_k^{n,*}-y_{k+1}^n\|\|\hat{y}_{k+1}^{n-1}-y_k^{n-1}\|\| \psi_k^{n-1}\| \big] \nonumber\\
    &\stackrel{(a)}{\leq} L_{y',n}\beta_{k,n-1}\E_k\big[\|y_k^{n,*}-y_{k+1}^n\|\|y_{k+1}^{n-1}-y_k^{n-1}\|\| \psi_k^{n-1}\| \big] \nonumber\\
    &\stackrel{(b)}{\leq} L_{y',n}\beta_{k,n-1}^2 \Big(\E_k\big[\|y_k^{n,*}-y_{k+1}^n\|\|h^{n-1}(y_k^{n-2},y_k^{n-1})\|\| \psi_k^{n-1}\|\big] +  \E_k\big[\|y_k^{n,*}-y_{k+1}^n\| \|\psi_k^{n-1}\|^2\big] \Big) \nonumber\\
    &= L_{y',n}\beta_{k,n-1}^2 \Big(\E_k\big[\|y_k^{n,*}-y_{k+1}^n\|\|h^{n-1}(y_k^{n-2},y_k^{n-1})\|\E[\| \psi_k^{n-1}\||\mathcal{F}_k^{n}]\big] +  \E_k\big[\|y_k^{n,*}-y_{k+1}^n\| \|\psi_k^{n-1}\|^2\big] \Big) \nonumber\\
    &\stackrel{(c)}{\leq}  L_{y',n}\beta_{k,n-1}^2 \Big(\sigma_{n-1}\E_k\big[\|y_k^{n,*}-y_{k+1}^n\|\|h^{n-1}(y_k^{n-2},y_k^{n-1})\|\big] +  \sigma_{n-1}^2\E_k\|y_k^{n,*}-y_{k+1}^n\| \Big) \nonumber\\
    % \label{eq:driftI2- n>=2}
    &\!\leq\! L_{y',n}\beta_{k,n\!-\!1}^2 \Big(\sigma_{n\!-\!1}\E_k\big[\|y_k^{n,*}\!-\!y_{k+1}^n\|\|h^{n\!-\!1}(y_k^{n\!-\!2},y_k^{n\!-\!1})\|\big] +  \frac{\sigma_{n\!-\!1}^2}{2}\E_k\|y_k^{n,*}\!-\!y_{k+1}^n\|^2 + \frac{\sigma_{n\!-\!1}^2}{2} \Big)\!\!\! \nonumber\\
    &\stackrel{(d)}{\leq} L_{y',n}\sigma_{n\!-\!1}\beta_{k,n\!-\!1}^2 \Big(\frac{L_{h,n\!-\!1} \!+\!\sigma_{n\!-\!1}}{2}\E_k\|y_{k+1}^n\!-\!y_k^{n,*}\|^2 + \frac{ L_{h,n\!-\!1}}{2}\|y_k^{n\!-\!1}\!-\!y_k^{n\!-\!1,*}\|^2 + \frac{\sigma_{n\!-\!1}}{2} \Big),
\end{align}
where (a) is due to
\begin{align}
    \|\hat{y}_{k+1}^{n-1}-y_k^{n-1}\| = (1-a)\|y_k^{n-1}-y_{k+1}^{n-1}\| \leq \|y_k^{n-1}-y_{k+1}^{n-1}\|,
\end{align}
then (b) is due to
\begin{align}
    \|y_{k+1}^{n-1}-y_k^{n-1}\| \leq \beta_{k}^{n-1} \big(\|h^{n-1}(y_k^{n-2},y_k^{n-1})\|+\|\psi_k^{n-1}\| \big)
\end{align}
and (c) follows from Assumption \ref{assumption:noise} and Jensen's inequality:
\begin{align}\label{eq:standarddev}
    \E[\|\psi_k^n\|] = \E[\sqrt{\|\psi_k^n\|^2}] \leq \sqrt{\E\|\psi_k^n\|^2} \leq \sigma_{n},
\end{align}
the (d) follows from \eqref{eq:hn-1} and one-step Young's inequality:
\begin{align}
    \|y_k^{n,*}\!-\!y_{k+1}^n\|\|h^{n\!-\!1}(y_k^{n\!-\!2},y_k^{n\!-\!1})\| &\stackrel{\eqref{eq:hn-1}}{\leq} L_{h,n-1}\|y_k^{n,*}\!-\!y_{k+1}^n\|\|y_k^{n-1}-y_k^{n-1,*}\| \nonumber\\
    &\leq \frac{L_{h,n-1}}{2}\|y_k^{n,*}\!-\!y_{k+1}^n\|^2  + \frac{L_{h,n-1}}{2}\|y_k^{n-1}-y_k^{n-1,*}\|^2.
\end{align}

The second term in \eqref{eq:driftI2start n>=2} can be bounded as
\begin{align}\label{eq:driftI3 n>=2}
\E_k\langle y_k^{n,*}-y_{k+1}^n,\beta_{k-1,n}\nabla y^{n,*}(y_k^{n-1})^\top \psi_k^{n-1} \rangle 
    &= \E_k\big[\langle y_k^{n,*}-y_{k+1}^n,\beta_{k,n-1}\nabla y^{n,*}(x_k)^\top \E[\psi_k^{n-1} | \mathcal{F}_k^n] \rangle  \big] \nonumber\\
    &\leq L_{y,n}\beta_{k,n-1} \E_k [ \|y_k^{n,*}-y_{k+1}^n\|\|\E[\psi_k^{n-1} | \mathcal{F}_k^n]\| ] \nonumber\\
    &\stackrel{(a)}{\leq} \frac{L_{y,n} c_{n-1}}{2} \beta_{k,n-1} \big(\E_k\|y_k^{n,*}-y_{k+1}^n\|^2 + \beta_{k,n-1} \big)
\end{align}
where (a) follows from Assumption \ref{assumption:noise}.

Collecting and substituting the upper bounds in \eqref{eq:driftI1 n>=2}, \eqref{eq:driftI2 n>=2} and \eqref{eq:driftI3 n>=2} into \eqref{eq:drift n>=2} yields
\begin{align}\label{eq:yk+1n-yk+1n*I2 n>=2}
    &\E_k\langle y_k^{n,*}-y_{k+1}^n,y_{k+1}^{n,*}-y_k^{n,*} \rangle \nonumber\\
    &\leq \big(\big(\frac{L_{y,n} c_{n-1}}{2} + \frac{2 L_{y,n}^2 L_{h,n-1}^2}{\lambda_{n-1}} \big) \beta_{k,n-1} + L_{y',n}\sigma_{n-1}\frac{L_{h,n-1} \!+\!\sigma_{n-1}}{2}\beta_{k,n-1}^2\big)\E_k\|y_{k+1}^n-y_k^{n,*}\|^2 \nonumber\\
    &\quad\!+\!\big(\frac{\lambda_{n-1}}{8}\beta_{k,n-1}\!+\!\frac{L_{y',n}\sigma_{n-1} L_{h,n-1}}{2}\beta_{k,n-1}^2\big)\|y_k^{n-1}\!-\!y_k^{n-1,*}\|^2 \!+\! \frac{L_{y',n}\sigma_{n-1}^2\!+\!L_{y,n}c_{n-1}}{2}\beta_{k,n-1}^2.
\end{align}
The last term in \eqref{eq:yk+1-yk+1*} can be bounded as
\begin{align}\label{eq:yk+1-yk+1*last n>=2}
    \E_k\|y_k^{n,*}-y_{k+1}^{n,*}\|^2 &\leq L_{y,n}^2 \beta_{k,n-1}^2 \E_k\|h^{n-1}(y_k^{n-2}, y_k^{n-1})+\psi_k^{n-1}\|^2 \nonumber\\
    &\leq  2 L_{y,n}^2 \beta_{k,n-1}^2 \|h^{n-1}(y_k^{n-2}, y_k^{n-1})\|^2 + 2 L_{y,n}^2 \sigma_{n-1}^2 \beta_{k,n-1}^2 \nonumber\\
    &\stackrel{\eqref{eq:hn-1}}{\leq}  2 L_{y,n}^2 L_{h,n-1}^2 \beta_{k,n-1}^2 \|y_k^{n-1}-y_k^{n-1,*}\|^2 + 2 L_{y,n}^2\sigma_{n-1}^2 \beta_{k,n-1}^2.
\end{align}
Substituting the upper bounds in \eqref{eq:yk+1n-yk+1n*I2 n>=2} and \eqref{eq:yk+1-yk+1*last n>=2} into \eqref{eq:yk+1-yk+1*} yields (for $2\leq\! n \leq N$)
\begin{align}\label{eq:yk+1-yk+1*final n>=2}
    &\E_k\|y_{k+1}^n-y_{k+1}^{n,*}\|^2 \nonumber\\
    &\leq \big(1+\big(L_{y,n} c_{n-1} + \frac{4 L_{y,n}^2 L_{h,n-1}^2}{\lambda_{n-1}} \big) \beta_{k,n-1} + L_{y',n}\sigma_{n-1} (L_{h,n-1} \!+\!\sigma_{n-1})\beta_{k,n-1}^2\big)\E_k\|y_{k+1}^n-y_k^{n,*}\|^2 \nonumber\\
    &\quad\!+\!\frac{\lambda_{n-1}}{2}\beta_{k,n-1} \|y_k^{n-1}\!-\!y_k^{n-1,*}\|^2 \!+\! (L_{y',n}\sigma_{n-1}^2 \!+\! L_{y,n}c_{n-1} \!+\! 2 L_{y,n}^2\sigma_{n-1}^2)\beta_{k,n-1}^2 
\end{align}
where we have used the following condition of the step size to simplify the inequality:
\begin{align}
     (L_{y',n}\sigma_{n-1} L_{h,n-1}+2  L_{y,n}^2 L_{h,n-1}^2)\beta_{k,n-1}^2 \leq \frac{\lambda_{n-1}}{4}\beta_{k,n-1},  ~~~2 \leq n \leq N.
\end{align}
\textbf{(2) When $n=1$.}
The update of $y_k^1$ is correlated with its upper level variable $x_k$ instead of $y_k^{n-1}$ when $n\!\geq\!2$. And since the update of $x_k$ depends on all variables while the update of $y_k^{n-1}~(n\!\geq\!2)$ only depends on $y_k^{n-2}$, the analysis of $y_k^1$ is different from that of $y_k^n~(n\!>\!2)$. The difference therefore lies in analyzing \eqref{eq:yk+1-yk+1*}, which captures the dependence of lower level variable to its upper level variable.

By the mean-value theorem, for some $\hat{x}_{k+1}=a x_k + (1-a) x_{k+1}, a \in [0,1]$, the second term in \eqref{eq:yk+1-yk+1*} can be rewritten as
\begin{align}\label{eq:drift}
    \langle y_k^{1,*}-y_{k+1}^1,y_{k+1}^{1,*}-y_k^{1,*} \rangle 
    &= \langle y_k^{1,*}-y_{k+1}^1,\nabla y^{1,*}(\hat{x}_{k+1})^\top (x_{k+1}-x_k) \rangle \nonumber\\
    &= \langle y_k^{1,*}-y_{k+1}^1,\alpha_k\nabla y^{1,*}(\hat{x}_{k+1})^\top v(x_k,y_k^{1:N}) \rangle \nonumber\\
    &\quad+ \langle y_k^{1,*}-y_{k+1}^1,\alpha_k\nabla y^{1,*}(\hat{x}_{k+1})^\top \xi_k \rangle.
\end{align}

The first term in the RHS of \eqref{eq:drift} can be bounded as
\begin{align}
\label{eq:driftI1 1} 
    &\langle y_k^{1,*}-y_{k+1}^1,\alpha_k\nabla y^{1,*}(\hat{x}_{k+1})^\top v(x_k,y_k^{1:N}) \rangle \nonumber\\
    &\leq L_{y,1} \alpha_k\|y_k^{1,*}-y_{k+1}^1\|\| v(x_k,y_k^{1:N})\| \\
    &\stackrel{(a)}{\leq} L_{y,1}  \alpha_k \big(L_{v,y}\|y_k^{1,*}-y_{k+1}^1\|\sum_{n=1}^N L_y (n)\|y_k^n - y_k^{n,*}\| + L_v\|y_k^{1,*}-y_{k+1}^1\|\|x_k - x^*\| \big) \nonumber\\
    &\stackrel{(b)}{\leq} L_{y,1}  \alpha_k \big(\frac{L_{v,y}}{2}\|y_k^{1,*}\!-\!y_{k\!+\!1}^1\|^2 \!+\! \frac{L_{v,y}N}{2} \sum_{n=1}^N L^2_y (n)\|y_k^n \!-\! y_k^{n,*}\|^2 \!+\! \frac{L_{y,1} L_v^2}{\lambda_0}\|y_k^{1,*}\!-\!y_{k\!+\!1}^1\|^2 \!+\! \frac{\lambda_0}{4 L_{y,1}}\|x_k \!-\! x^*\|^2 \big) \nonumber\\
    \label{eq:driftI1 2}
    &\!=\! (\frac{L_{y,1} L_{v,y}}{2} \!+\! \frac{L_{y,1}^2 L_v^2}{\lambda_0}) \alpha_k \|y_{k\!+\!1}^1\!-\!y_k^{1,*}\|^2 \!+\! \frac{L_{y,1} L_{v,y} N}{2}\alpha_k \sum_{n=1}^N L^2_y (n)\|y_k^n \!-\! y_k^{n,*}\|^2 \!+\! \frac{\lambda_0}{4}\alpha_k\| x_k \!-\! x^*\|^2\!
\end{align}
and (a) follows from
\begin{align}\label{eq:vk}
    \| v(x_k,y_k^{1:N})\| 
    &= \| v(x_k,y_k^{1:N}) - v(x_k)+v(x_k) - \underbrace{v(x^*)}_{=0}\| \nonumber\\
    &\leq \| v(x_k,y_k^{1:N}) - v(x_k)\| + L_v \| x_k -x^*\| \nonumber\\
    &\leq L_{v,y} \sum_{n=1}^N L_y (n)\|y_k^n - y_k^{n,*}\| + L_v\| x_k -x^*\|,
\end{align}
where the first inequality follows from Assumption \ref{assumption:vglip} and the last inequality follows from Lemma \ref{lemma:yk+1n-yn*dots}; 
and (b) follows from Young's inequality:
\begin{subequations}
\begin{align}
    \|y_k^{1,*}-y_{k+1}^1\|\sum_{n=1}^N L_y (n)\|y_k^n - y_k^{n,*}\| &\leq \frac{1}{2}\|y_k^{1,*}-y_{k+1}^1\|^2 + \frac{1}{2}\big(\sum_{n=1}^N L_y (n)\|y_k^n - y_k^{n,*}\|\big)^2 \nonumber\\
    &\leq \frac{1}{2}\|y_k^{1,*}-y_{k+1}^1\|^2 + \sum_{n=1}^N\frac{N }{2} L^2_y (n)\|y_k^n - y_k^{n,*}\|^2,
\end{align}
and
\begin{align}
    L_v\|y_k^{1,*}-y_{k+1}^1\|\|x_k - x^*\| \leq \frac{L_{y,1} L_v^2}{\lambda_0}\|y_k^{1,*}\!-\!y_{k\!+\!1}^1\|^2 \!+\! \frac{\lambda_0}{4 L_{y,1}}\|x_k \!-\! x^*\|^2.
\end{align}
\end{subequations}

The second term in the RHS of \eqref{eq:drift} can be further decomposed as
\begin{align}\label{eq:driftI2start}
    &\E_k\langle y_k^{1,*}-y_{k+1}^1,\alpha_k\nabla y^{1,*}(\hat{x}_{k+1})^\top \xi_k \rangle \nonumber\\
    &= \E_k\langle y_k^{1,*}-y_{k+1}^1,\alpha_k\big(\nabla y^{1,*}(\hat{x}_{k+1})-\nabla y^{1,*}(x_k)\big)^\top \xi_k \rangle + \E_k\langle y_k^{1,*}-y_{k+1}^1,\alpha_k\nabla y^{1,*}(x_k)^\top \xi_k \rangle.
\end{align}
The first term in the RHS of \eqref{eq:driftI2start} can be bounded similarly to \eqref{eq:driftI2 n>=2}, with the upper level update term $\|x_{k+1}-x_k\|$ in place of $\|y_{k+1}^{n-1}-y_k^{n-1}\|~(n\!>\!2)$, that is
\begin{align}
    &\E_k\langle y_k^{1,*}-y_{k+1}^1,\alpha_k\big(\nabla y^{1,*}(\hat{x}_{k+1})-\nabla y^{1,*}(x_k)\big)^\top \xi_k \rangle \nonumber\\
    &=\E_k\langle y_k^{1,*}-y_{k+1}^1,\alpha_k\big(\nabla y^{1,*}(\hat{x}_{k+1})-\nabla y^{1,*}(x_k)\big)^\top \xi_k \rangle   \nonumber\\
    &\leq L_{y',1}\alpha_k\E_k\big[\|y_k^{1,*}-y_{k+1}^1\|\|\hat{x}_{k+1}-x_k\|\| \xi_k\| \big] \nonumber\\
    &\leq L_{y',1}\alpha_k\E_k\big[\|y_k^{1,*}-y_{k+1}^1\|\|x_{k+1}-x_k\|\| \xi_k\| \big] \nonumber\\
    &\leq L_{y',1}\alpha_k^2 \Big(\E_k\big[\|y_k^{1,*}-y_{k+1}^1\|\|v(x_k,y_k^{1:N})\|\| \xi_k\|\big] +  \E_k\big[\|y_k^{1,*}-y_{k+1}^1\| \|\xi_k\|^2\big] \Big)\nonumber\\
    &= L_{y',1}\alpha_k^2 \Big(\E_k\big[\|y_k^{1,*}-y_{k+1}^1\|\|v(x_k,y_k^{1:N})\|\E[\| \xi_k\||\mathcal{F}_k^1]\big] +  \E_k\big[\|y_k^{1,*}-y_{k+1}^1\| \E[\| \xi_k\|^2|\mathcal{F}_k^1]\big] \Big)\nonumber\\
    &\leq  L_{y',1}\alpha_k^2 \Big(\sigma_0\E_k\big[\|y_k^{1,*}-y_{k+1}^1\|\|v(x_k,y_k^{1:N})\|\big] + \sigma_0^2\E_k\|y_k^{1,*}-y_{k+1}^1\| \Big) \nonumber\\
    \label{eq:driftI2-}
    &\leq L_{y',1}\alpha_k^2 \Big(\sigma_0\E_k\big[\|y_k^{1,*}-y_{k+1}^1\|\|v(x_k,y_k^{1:N})\|\big] +  \frac{\sigma_0^2}{2}\E_k\|y_k^{1,*}-y_{k+1}^1\|^2 + \frac{\sigma_0^2}{2} \Big) \\
    \label{eq:driftI2}
    &\leq L_{y',1}\sigma_0\alpha_k^2 \Big(\frac{\sigma_0\!+\! L_{v,y} \!+\!L_v }{2}\E_k\|y_{k\!+\!1}^1-y_k^{1,*}\|^2 \!+\! \frac{ L_{v,y} N}{2}\sum_{n=1}^N \|y_k^n - y_k^{n,*}\|^2\!+\!\frac{L_v}{2}\|x_k - x^*\|^2 \!+\! \frac{\sigma_0}{2}\Big),\!
\end{align}
where the fourth inequality follows from Assumption \ref{assumption:noise} and the last inequality follows from similar derivations of the upper bound of $\|y_k^{1,*}-y_{k+1}^1\|\|v(x_k,y_k^{1:N})\|$ shown in \eqref{eq:driftI1 1}--\eqref{eq:driftI1 2}.

The second term in the RHS of \eqref{eq:driftI2start} can be bounded as
\begin{align}\label{eq:driftI3}
    \E_k\langle y_k^{1,*}-y_{k+1}^1,\alpha_k\nabla y^{1,*}(x_k)^\top \xi_k \rangle
    &= \E_k\big[\langle y_k^{1,*}-y_{k+1}^1,\alpha_k\nabla y^{1,*}(x_k)^\top \E[\xi_k | \mathcal{F}_k^1] \rangle  \big] \nonumber\\
    &\leq L_{y,1}\alpha_k \E_k [ \|y_k^{1,*}-y_{k+1}^1\|\|\E[\xi_k | \mathcal{F}_k^1]\| ] \nonumber\\
    &\leq \frac{L_{y,1} c_0}{2} \alpha_k \big(\E_k\|y_k^{1,*}-y_{k+1}^1\|^2 + \alpha_k \big).
\end{align}

Substituting the upper bounds in \eqref{eq:driftI1 2}, \eqref{eq:driftI2} and \eqref{eq:driftI3} into \eqref{eq:drift} yields
\begin{align}\label{eq:yk+1-yk+1*I2}
     &\E_k\langle y_k^{1,*}-y_{k+1}^1,y_{k+1}^{1,*}-y_k^{1,*} \rangle \nonumber\\
     &\leq \Big(\big(\frac{L_{y,1} L_{v,y}}{2} + \frac{L_{y,1}^2 L_v^2}{\lambda_0} +  \frac{L_{y,1} c_0}{2}\big)\alpha_k +  L_{y',1} \frac{\sigma_0^2+ (L_{v,y}\!+\!L_v) \sigma_0}{2}\alpha_k^2\Big)\E_k\|y_{k+1}^1-y_k^{1,*}\|^2 \nonumber\\
     &\quad+ \big(\frac{L_{y,1} L_{v,y} N}{2}\alpha_k + \frac{L_{y',1}\sigma_0 L_{v,y} N}{2}\alpha_k^2 \big)N \sum_{n=1}^N L^2_y (n)\|y_k^n - y_k^{n,*}\|^2 \nonumber\\
     &\quad+ \big( \frac{\lambda_0}{4}\alpha_k + \frac{L_{y',1}L_v\sigma_0}{2}\alpha_k^2 \big)\|x_k - x^*\|^2 + \frac{L_{y',1} \sigma_0^2 + L_{y,1} c_0}{2}\alpha_k^2.
\end{align}
The last term in \eqref{eq:yk+1-yk+1*} can be bounded as
\begin{align}\label{eq:yk+1-yk+1*last}
    &\E_k\|y_k^{1,*}-y_{k+1}^1\|^2 \nonumber\\
    &\leq L_{y,1}^2 \alpha_k^2 \E_k\|v(x_k,y_k^{1:N})+\xi_k\|^2 \nonumber\\
    &\leq  2 L_{y,1}^2 \alpha_k^2 \|v(x_k,y_k^{1:N})\|^2 + 2 L_{y,1}^2 \sigma_0^2 \alpha_k^2 \nonumber\\
    &\stackrel{\eqref{eq:vk}}{\leq} 4  L_{y,1}^2 \alpha_k^2 \big(L_{v,y}^2 N\sum_{n=1}^N L_y(n)^2\|y_k^n-y_k^{n,*}\|^2 \!+\! L_v^2\|x_k-x^*\|^2\big) \!+\! 2 L_{y,1}^2\sigma_0^2 \alpha_k^2.
\end{align}
Substituting the upper bounds in \eqref{eq:yk+1-yk+1*I2} and \eqref{eq:yk+1-yk+1*last} into \eqref{eq:yk+1-yk+1*} yields
\begin{align}\label{eq:yk+1-yk+1*final}
    &\E_k\|y_{k+1}^1-y_{k+1}^{1,*}\|^2 \nonumber\\
     &\leq \big(1 + L_{y,1} \big(L_{v,y} \!+\!2 L_{y,1}L_v^2 \lambda_0^{-1} \!+\! c_0 \big) \alpha_k + L_{y',1}\sigma_0 (L_{v,y}\!+\!L_v\!+\!\sigma_0 )\alpha_k^2\big)\E_k\|y_{k+1}^1-y_k^{1,*}\|^2 \nonumber\\
      &\quad+ \big(L_{y,1}L_{v,y} N \alpha_k + (L_{y',1}\sigma_0 L_{v,y} + 4 L_{y,1}^2 L_{v,y}^2) N\alpha_k^2\big)\sum_{n=1}^N L^2_y(n)\|y_k^n - y_k^{n,*}\|^2 \nonumber\\
      &\quad+ \big(\frac{\lambda_0}{2}\alpha_k + (L_{y',1}L_v\sigma_0 \!+\! 4 L_{y,1}^2 L_v^2)\alpha_k^2\big)\| x_k - x^*\|^2 + (L_{y',1}\sigma_0^2 \!+\! L_{y,1}c_0 \!+\! 2 L_{y,1}^2\sigma_0^2)\alpha_k^2.
\end{align}
This completes the analysis of lower-level sequences.

\subsection{Analysis of the main sequence}
% In this section, we provide a bound of the upper-level optimality gaps. 
Recall that we defined the shorthand notations $y_k^{n,*} = y^{n,*}(y^{n-1}_k)$ with $y_k^{1,*} = y^{1,*}(x_k)$; $y_k^{1:N}=(y_k^1,y_k^2,\ldots,y_k^N)$. For convenience, we write $\E[\cdot|\mathcal{F}_k]$ as $\E_k[\cdot]$.
In this section, we will analyze the main sequence and then establish the convergence rate.

 First we have
\begin{align}\label{eq:xk+1-x*}
    &\E_k\|x_{k+1}-x^*\|^2\nonumber\\
    &= \|x_k-x^*\|^2 + 2\alpha_k \E_k\langle x_k-x^*,v(x_k,y_k^{1:N}) + \xi_k \rangle + \E_k\|x_{k+1}-x_k\|^2 \nonumber\\
    &= \|x_k-x^*\|^2 + 2\alpha_k \E_k\langle x_k-x^*,v(x_k,y_k^{1:N})-v(x_k) \rangle + 2\alpha_k \langle x_k-x^*,v(x_k) \rangle \nonumber\\
    &\quad + 2\alpha_k \langle x_k-x^*, \E_k[\xi_k] \rangle+\alpha_k^2\E_k\|v(x_k,y_k^{1:N})+\xi_k\|^2.
\end{align}
By Lemma \ref{lemma:yk+1n-yn*dots}, the second term in \eqref{eq:xk+1-x*} can be bounded as
\begin{align}\label{eq:xk+1-x*I2}
    \langle x_k-x^*,v(x_k,y_k^{1:N}) \!-\! v(x_k) \rangle &\leq L_{v,y} \|x_k-x^*\|\sum_{n=1}^N L_y(n)\|y_k^n-y_k^{n,*}\| \nonumber\\
    &\leq \frac{\lambda_0}{8}\|x_k-x^*\|^2 + \frac{2 L_{v,y}^2 N}{\lambda_0}\sum_{n=1}^N L_y(n)^2\|y_k^n-y_k^{n,*}\|^2.
\end{align}
By the strong monotonicity of $v(x,y^*(x))$ in Assumption \ref{assumption:strongly v}, the third term in \eqref{eq:xk+1-x*} can be bounded as
\begin{align}\label{eq:xk+1-x*I3}
    \langle x_k-x^*,v(x_k) \rangle \leq -\lambda_0 \|x_k-x^*\|^2.
\end{align}
Using Assumption \ref{assumption:noise}, the fourth term in \eqref{eq:xk+1-x*} can be bounded as
\begin{align}\label{eq:xk+1-x*I4}
    \langle x_k-x^*, \E_k[\xi_k] \rangle \leq \frac{\lambda_0}{8}\|x_k - x^*\|^2 + \frac{2 c_0^2}{\lambda_0} \alpha_k.
\end{align}
The last term in \eqref{eq:xk+1-x*} can be bounded as
\begin{align}\label{eq:xk+1-x*last}
    \E_k\|v(x_k,y_k^{1:N})+\xi_k\|^2 
    &\leq 2\|v(x_k,y_k^{1:N})\|^2 + 2\sigma_0^2 \nonumber\\ 
    &\stackrel{\eqref{eq:vk}}{\leq } 4 L_v^2\|x_k-x^*\|^2 +  4 N L_{v,y}^2 \sum_{n=1}^N L_y(n)^2\|y_k^n-y_k^{n,*}\|^2 + 2\sigma_0^2.
\end{align}
Substituting the upper bounds in \eqref{eq:xk+1-x*I2}--\eqref{eq:xk+1-x*last} into \eqref{eq:xk+1-x*} yields
\begin{align}\label{eq:xk+1-x*final}
   \E_k\|x_{k+1}-x^*\|^2 
   &\leq \big(1 \!-\!\frac{3}{2}\lambda_0\alpha_k \!+\! 4 L_v^2\alpha_k^2)\|x_k-x^*\|^2 \!+\! 4\big( \frac{L_{v,y}^2}{\lambda_0}\alpha_k \!+\! L_{v,y}^2\alpha_k^2\big)N \sum_{n=1}^N L^2_y(n)\|y_k^n\!-\!y_k^{n,*}\|^2 \nonumber\\
   &\quad+ 2 \big(\sigma_0^2 + \frac{2 c_0^2}{\lambda_0} \big) \alpha_k^2.
\end{align}
\textbf{Establishing convergence.} For brevity, we fist define the following series
\begin{align}
    &C_0(1) \coloneqq L_{y,1} \big(L_{v,y} \!+\!2 L_{y,1}L_v^2 \lambda_0^{-1} \!+\! c_0 \big),~ C_1 (1) \coloneqq L_{y',1}\sigma_0 (L_v\!+\!L_{v,y}\!+\!\sigma_0 );\nonumber\\
    &C_0(n) \coloneqq L_{y,n} c_{n-1} + \frac{4 L_{y,n}^2 L_{h,n-1}^2}{\lambda_{n-1}},~C_1(n) \coloneqq L_{y',n}\sigma_{n-1} (L_{h,n-1} \!+\!\sigma_{n-1}), 2\leq n \leq N;\nonumber\\
    &C_2(n) \coloneqq (4\frac{L_{v,y}^2}{\lambda_0} +\frac{L_{y,1} L_{v,y} }{2}) N L^2_y(n),~ C_3(n) \coloneqq (L_{v,y}^2 +\frac{L_{y',1}\sigma_0 L_{v,y} }{2}) N L^2_y(n), \forall n.
\end{align}
Define a Lyapunov function $\mathcal{J}_k \coloneqq \|x_k-x^*\|^2 + \sum_{n=1}^N\|y_k^n-y_k^{n,*}\|^2$. Then we have
\begin{align}\label{eq:jk+1-jk-}
    \E_k[\mathcal{J}_{k+1}] - \mathcal{J}_k 
    &= \E_k\|x_{k+1}-x^*\|^2 - \|x_k-x^*\|^2 + \sum_{n=1}^N\|y_{k+1}^n-y_{k+1}^{n,*}\|^2 -\|y_k^n-y_k^{n,*}\|^2.
\end{align}
Substituting \eqref{eq:yk+1-yk+1*final n>=2}, \eqref{eq:yk+1-yk+1*final} and \eqref{eq:xk+1-x*final} into \eqref{eq:jk+1-jk-}, and then applying \eqref{eq:yk+1-yk*} yields
\begin{align}\label{eq:jk+1-jk}
    &\E_k[\mathcal{J}_{k+1}]-\mathcal{J}_k \nonumber\\
    &\leq \big(-\lambda_0\alpha_k +  \big(L_{y',1}L_v\sigma_0 +4  L_{y,1}^2 L_v^2+ 4 L_v^2 \big) \alpha_k^2\big)\|x_k-x^*\|^2 \nonumber\\
    &\quad\!+\!\sum_{n=1}^{N\!-\!1} \big((1\!+\!C_0(n)\beta_{k,n\!-\!1} \!+\!  C_1(n)\beta_{k,n\!-\!1}^2 )(1\!-\!\lambda_n \beta_{k,n}) \!-\!1 \!+\! \frac{\lambda_n}{2}\beta_{k,n} \!+\! C_2(n)\alpha_k \!+\! C_3(n)\alpha_k^2\big)\|y_k^n\!-\!y_k^{n,*}\|^2\nonumber\\
    &\quad\!+\! \big((1\!+\!C_0(N)\beta_{k,N\!-\!1} \!+\!  C_1(N)\beta_{k,N\!-\!1}^2 )(1\!-\!\lambda_N \beta_{k,N}) \!-\!1  \!+\! C_2(N)\alpha_k \!+\! C_3(N)\alpha_k^2\big)\|y_k^N\!-\!y_k^{N,*}\|^2 \nonumber\\
    &\quad+  \Theta(\alpha_k^2) + \Theta\big(\sum_{n=1}^N (1+\beta_{k,n-1}+\beta_{k,n-1}^2)\beta_{k,n}^2\big),
\end{align}
where we define $\beta_{k,0}\coloneqq\alpha_k$ to simplify the result. As a clarification, the second term in the last inequality disappears when $N\leq1$.
Let the step sizes satisfy
\begin{align}
\label{eq:step size a}
     &-\lambda_0\alpha_k +  \big(L_{y',1}L_v\sigma_0 +4  L_{y,1}^2 L_v^2+ 4 L_v^2 \big) \alpha_k^2 \leq -\frac{\lambda_0}{2}\alpha_k, \\
     \label{eq:step size b}
     &(1\!+\!C_0(n)\beta_{k,n\!-\!1} \!+\!  C_1(n)\beta_{k,n\!-\!1}^2 )(1\!-\!\lambda_n \beta_{k,n}) \!-\!1 \!+\! \frac{\lambda_n}{2}\beta_{k,n} \!+\! C_2(n)\alpha_k \!+\! C_3(n)\alpha_k^2 \leq \!-\frac{\lambda_0}{2}\alpha_k, 1\!\leq\! n\!\leq\! N-1, \\
     \label{eq:step size c}
     &(1\!+\!C_0(N)\beta_{k,n\!-\!1} \!+\!  C_1(N)\beta_{k,n\!-\!1}^2 )(1\!-\!\lambda_N \beta_{k,N}) \!-\!1  \!+\! C_2(N)\alpha_k \!+\! C_3(N)\alpha_k^2 \leq \!-\frac{\lambda_0}{2}\alpha_k, 
\end{align}
Note that \eqref{eq:step size a} always admits solution for small enough $\alpha_1$. Given $\beta_{k,N}$, applying Lemma \ref{lemma:step sizes} for $n=N,\ldots,1$ to \eqref{eq:step size c} and \eqref{eq:step size b} implies that there exist solutions for $\beta_{k,n}(\forall n)$.

Then by \eqref{eq:step size a}--\eqref{eq:step size c}, we have from \eqref{eq:jk+1-jk} that
\begin{align}\label{eq:jk+1-jkfinal}
    \E_k[\mathcal{J}_{k+1}] \leq \big(1-\frac{\lambda_0}{2}\alpha_k \big)\mathcal{J}_{k} +  \Theta(\alpha_k^2) + \Theta\big(\sum_{n=1}^N (1+\beta_{k,n-1}+\beta_{k,n-1}^2)\beta_{k,n}^2\big).
\end{align}
% Taking total expectation, rearranging and taking telescope sum on both sides of the above inequality gives
% \begin{align}
%     \frac{1}{K}\sum_{k=1}^K \frac{\lambda_0}{2}\E[\mathcal{J}_k] \leq \frac{1}{\alpha_k K} \mathcal{J}_1 + \Theta(\alpha_k) +  \Theta\big(\frac{1}{\alpha_k}\sum_{n=1}^N (1+\beta_{k,n-1}+\beta_{k,n-1}^2)\beta_{k,n}^2\big).
% \end{align}
Note that \eqref{eq:jk+1-jkfinal} implies a finite-time convergence rate of $\frac{1}{k}$ with the choice of step size. 
% Define an indicator function $\mathbb{I}_k$ that is $1$ when $\E_k[\mathcal{J}_{k+1}]-\mathcal{J}_k \geq 0$ and $0$ otherwise. Then \eqref{eq:jk+1-jkfinal} implies
% \begin{align}
%     \sum_{k=1}^\infty\E[\mathbb{I}_k(\mathcal{J}_{k+1}-\mathcal{J}_k)] \leq \sum_{k=1}^\infty\Theta\big( \alpha_k^2 + \sum_{n=1}^N (1+\beta_{k,n-1}+\beta_{k,n-1}^2 )\beta_{k,n}^2 \big) < \infty.
% \end{align}
% Then by the quasi-martingale convergence theorem, we have $\lim_{k\xrightarrow[]{}\infty}\mathcal{J}_k < \infty$ almost surely.
Applying Robbins-Siegmund's theorem stated in Lemma \ref{lemma:robbins-siegmund} to \eqref{eq:jk+1-jkfinal} gives $\sum_{k=1}^\infty \alpha_k \mathcal{J}_k < \infty$ and $\lim_{k\xrightarrow[]{}\infty}\mathcal{J}_k < \infty$ almost surely, which along with the fact that $\sum_{k=1}^\infty \alpha_k = \infty$ implies $\lim_{k\xrightarrow[]{}\infty}\mathcal{J}_k=0$, i.e. for any $n\in [N]$
\begin{align}
    \lim_{k\xrightarrow[]{}\infty}\|x_k- x^*\|^2=0, ~~~~~~\lim_{k\xrightarrow[]{}\infty}\|y_k^n - y_k^{n,*}\|^2 = 0, ~a.s.
\end{align}
Finally, as a direct result of Lemma \ref{lemma:error metic equiv}, we can directly obtain the same convergence theorem for the alternative error metric $\|x_k-x^*\|^2+\sum_{n=1}^N\|y_k^n-y^{n,*}\|^2$. This completes the proof.

\section{Proof of Theorem \ref{theorem:non-strongly-monotone-2}}\label{section:non strongly monotone appendix}
\subsection{Analysis of the lower-level sequences}
In this section, we provide a bound of the lower-level optimality gaps. Recall that we defined the shorthand notations $y_k^{n,*} = y^{n,*}(y^{n-1}_k)$ with $y_k^{1,*} = y^{1,*}(x_k)$; $y_k^{1:N}=(y_k^1,y_k^2,\ldots,y_k^N)$. For convenience, we write $\E[\cdot|\mathcal{F}_k]$ as $\E_k[\cdot]$.

It follows from \eqref{eq:yk+1-yk*} that
\begin{align}\label{eq:yk+1-yk*c}
\E_k\|y_{k+1}^n-y_k^{n,*}\|^2
    &\leq (1-\lambda_{n} \beta_{k,n} )\|y_k^n-y_k^{n,*}\|^2 + 2(\sigma_n^2 \!+\!   c_n^2 \lambda_{n}^{-1}) \beta_{k,n}^2.
\end{align}
\textbf{Bounding the drifting optimality gap.} For any $n \geq 1$, we have
\begin{align}\label{eq:yk+1-yk+1*c}
    \|y_{k+1}^n-y_{k+1}^{n,*}\|^2 
    &= \|y_{k+1}^n-y_k^{n,*}\|^2 + 2\langle y_k^{n,*}-y_{k+1}^n,y_{k+1}^{n,*}-y_k^{n,*} \rangle + \|y_k^{n,*}-y_{k+1}^{n,*}\|^2.
\end{align}
\textbf{(1) When $n=1$.}
By the mean-value theorem, for some $\hat{x}_{k+1}=a x_k + (1-a) x_{k+1}, a \in [0,1]$, the second term in \eqref{eq:yk+1-yk+1*c} can be rewritten as
\begin{align}\label{eq:driftc}
    \langle y_k^{1,*}-y_{k+1}^1,y_{k+1}^{1,*}-y_k^{1,*} \rangle 
    &= \langle y_k^{1,*}-y_{k+1}^1,\nabla y^{1,*}(\hat{x}_{k+1})^\top (x_{k+1}-x_k) \rangle \nonumber\\
    &= \langle y_k^{1,*}-y_{k+1}^1,\alpha_k\nabla y^{1,*}(\hat{x}_{k+1})^\top v(x_k,y_k^{1:N}) \rangle \nonumber\\
    &\quad+ \langle y_k^{1,*}-y_{k+1}^1,\alpha_k\nabla y^{1,*}(\hat{x}_{k+1})^\top \xi_k \rangle.
\end{align}
The first term in \eqref{eq:driftc} can be bounded as
\begin{align}
\label{eq:driftcI1 1} 
    &\langle y_k^{1,*}-y_{k+1}^1,\alpha_k\nabla y^{1,*}(\hat{x}_{k+1})^\top v(x_k,y_k^{1:N}) \rangle \nonumber\\
    &\leq L_{y,1} \alpha_k\|y_k^{1,*}-y_{k+1}^1\|\| v(x_k,y_k^{1:N})\| \\
    &\leq L_{y,1} \alpha_k \Big(L_{v,y}\|y_k^{1,*}-y_{k+1}^1\|\sum_{n=1}^N L_y(n)\|y_k^n - y_k^{n,*}\| + \|y_k^{1,*}-y_{k+1}^1\|\| v(x_k)\| \Big) \nonumber\\
    &\leq L_{y,1} \alpha_k \Big(\frac{L_{v,y}}{2}\|y_k^{1,*}-y_{k+1}^1\|^2 + \frac{L_{v,y} N}{2} \sum_{n=1}^N L^2_y(n)\|y_k^n - y_k^{n,*}\|^2 \nonumber\\
    &\quad\quad\quad\quad\quad+ 2L_{y,1}\|y_k^{1,*}-y_{k+1}^1\|^2 + \frac{1}{8L_{y,1}}\| v(x_k)\|^2 \Big) \nonumber\\
    \label{eq:driftcI1 2}
    &= L_{y,1} \big(\frac{L_{v,y}}{2} \!+\! 2 L_{y,1} \big) \alpha_k \|y_{k\!+\!1}^1\!-\!y_k^{1,*}\|^2 \!+\! \frac{L_{y,1}L_{v,y} N}{2} \alpha_k\sum_{n=1}^N L^2_y(n)\|y_k^n \!-\! y_k^{n,*}\|^2 \!+\! \frac{1}{8}\alpha_k\| v(x_k)\|^2,
\end{align}
where the second inequality follows from Lemma \ref{lemma:yk+1n-yn*dots}:
\begin{align}\label{eq:vkc}
    \| v(x_k,y_k^{1:N})\| &\leq\| v(x_k,y_k^{1:N}) - v(x_k)\|+\|v(x_k)\| \nonumber\\
    &\leq L_{v,y} \sum_{n=1}^N L_y(n)\|y_k^n-y_k^{n,*}\| + \|v(x_k)\|.
\end{align}
The second term in \eqref{eq:driftc} can be further decomposed as
\begin{align}\label{eq:driftcI2start}
    &\E_k\langle y_k^{1,*}-y_{k+1}^1,\alpha_k\nabla y^{1,*}(\hat{x}_{k+1})^\top \xi_k \rangle \nonumber\\
    &= \E_k\langle y_k^{1,*}\!-\!y_{k+1}^1,\alpha_k\big(\nabla y^{1,*}(\hat{x}_{k+1})\!-\!\nabla y^{1,*}(x_k)\big)^\top \xi_k \rangle \!+\! \E_k\langle y_k^{1,*}\!-\!y_{k+1}^1,\alpha_k\nabla y^{1,*}(x_k)^\top \xi_k \rangle.\!
\end{align}
The first term in \eqref{eq:driftcI2start} can be bounded as
\begin{align}    \label{eq:driftcI2}
    &\E_k\langle y_k^{1,*}-y_{k+1}^1,\alpha_k\big(\nabla y^{1,*}(\hat{x}_{k+1})-\nabla y^{1,*}(x_k)\big)^\top \xi_k \rangle  \\
    &\stackrel{\eqref{eq:driftI2-}}{\leq}L_{y',1} \sigma_0\alpha_k^2 \Big(\E_k\big[\|y_k^{1,*}-y_{k+1}^1\|\|v(x_k,y_k^{1:N})\|\big] +  \frac{\sigma_0}{2}\E_k\|y_k^{1,*}-y_{k+1}^1\|^2 + \frac{\sigma_0}{2} \Big) \nonumber\\
    &\leq L_{y',1} \sigma_0\alpha_k^2 \Big(\frac{L_{v,y} \!\!+\!\!\sigma_0 \!\!+\!\!1}{2}\E_k\|y_{k\!+\!1}^1\!-\!y_k^{1,*}\|^2 \!+\! \frac{ L_{v,y} N}{2}\sum_{n=1}^N L^2_y(n) \|y_k^n\!-\!y_k^{n,*}\|^2 \!+\! \frac{1}{2}\|v(x_k)\|^2 \!+\! \frac{\sigma_0}{2} \Big)\nonumber
\end{align}
where the last inequality follows from similar derivations of the upper bound of $\|y_k^{1,*}-y_{k+1}^1\|\|v(x_k,y_k^{1:N})\|$ shown in \eqref{eq:driftcI1 1}--\eqref{eq:driftcI1 2}.

The second term in \eqref{eq:driftcI2start} can be bounded as 
\begin{align}\label{eq:driftcI3}
    \E_k\langle y_k^{1,*}-y_{k+1}^1,\alpha_k\nabla y^{1,*}(x_k)^\top \xi_k \rangle
    &\stackrel{\eqref{eq:driftI3}}{\leq} \frac{L_{y,1} c_0}{2} \alpha_k \big(\E_k\|y_k^{1,*}-y_{k+1}^1\|^2 + \alpha_k \big).
\end{align}
Substituting the upper bounds in \eqref{eq:driftcI1 2}, \eqref{eq:driftcI2} and \eqref{eq:driftcI3} into \eqref{eq:driftc} yields
\begin{align}\label{eq:yk+1-yk+1*cI2}
      \langle y_k^{1,*}-y_{k+1}^1,y_{k+1}^{1,*}-y_k^{1,*} \rangle 
      &\leq \big(L_{y,1} \big(\frac{L_{v,y} \!+\! c_0}{2} \!+\! 2 L_{y,1} \big) \alpha_k \!+\! L_{y',1}\sigma_0 \frac{L_{v,y}\!+\!\sigma_0 \!+\! 1}{2}\alpha_k^2\big)\|y_{k+1}^1-y_k^{1,*}\|^2 \nonumber\\
      &\quad+ \frac{1}{2}\big(L_{y,1}L_{v,y} N \alpha_k +L_{y',1}\sigma_0 L_{v,y} N\alpha_k^2\big)\sum_{n=1}^N L^2_y(n)\|y_k^n - y_k^{n,*}\|^2 \nonumber\\
      &\quad+ \big(\frac{1}{8}\alpha_k \!+\! \frac{L_{y',1}\sigma_0}{2}\alpha_k^2\big)\| v(x_k)\|^2 \!+\! \frac{L_{y',1}\sigma_0^2 \!+\! L_{y,1}c_0}{2}\alpha_k^2.
\end{align}
The last term in \eqref{eq:yk+1-yk+1*c} can be bounded as
\begin{align}\label{eq:yk+1-yk+1*clast}
    &\E_k\|y_k^{1,*}-y_{k+1}^{1,*}\|^2 \nonumber\\
    &\leq L_{y,1}^2 \alpha_k^2 \E_k\|v(x_k,y_k^1)+\xi_k\|^2 \leq  2 L_{y,1}^2 \alpha_k^2 \|v(x_k,y_k^1)\|^2 + 2 L_{y,1}^2 \sigma_0^2 \alpha_k^2 \nonumber\\
    &\stackrel{\eqref{eq:vkc}}{\leq} 4 L_{v,y}^2 L_{y,1}^2 N\alpha_k^2 \sum_{n=1}^N L^2_y(n) \|y_k^n-y_k^{n,*}\|^2 + 4 L_{y,1}^2 \alpha_k^2 \|v(x_k)\|^2 + 2 L_{y,1}^2\sigma_0^2 \alpha_k^2.
\end{align}
Substituting the upper bounds in \eqref{eq:yk+1-yk+1*cI2} and \eqref{eq:yk+1-yk+1*clast} into \eqref{eq:yk+1-yk+1*c} yields
\begin{align}\label{eq:yk+1-yk+1*cfinal}
     &\E_k\|y_{k+1}^1-y_{k+1}^{1,*}\|^2 \nonumber\\
     &\leq \big(1 + L_{y,1} \big(L_{v,y} \!+\! c_0 \!+\! 4 L_{y,1} \big) \alpha_k + L_{y',1}\sigma_0 (L_{v,y}\!+\!\sigma_0 \!+\! 1)\alpha_k^2\big)\E_k\|y_{k+1}^1-y_k^{1,*}\|^2 \nonumber\\
      &\quad+ \big(L_{y,1}L_{v,y} N \alpha_k + (L_{y',1}\sigma_0 L_{v,y} + 4 L_{v,y}^2 L_{y,1}^2) N\alpha_k^2\big)\sum_{n=1}^N L^2_y(n)\|y_k^n - y_k^{n,*}\|^2 \nonumber\\
      &\quad+ \big(\frac{1}{4}\alpha_k + (L_{y',1}\sigma_0 \!+\! 4 L_{y,1}^2)\alpha_k^2\big)\| v(x_k)\|^2 + (L_{y',1}\sigma_0^2 \!+\! L_{y,1}c_0 \!+\! 2 L_{y,1}^2\sigma_0^2)\alpha_k^2.
\end{align}

\noindent\textbf{(2) When $n\geq 2$.}  The update of $y_k^n~(n\!\geq\!2)$ has no direct dependence on $x_k$, therefore the analysis is identical to that of Theorem \ref{theorem:strongly monotone}. It directly follows from \eqref{eq:yk+1-yk+1*last n>=2} that
\begin{align}\label{eq:yk+1-yk+1*cfinal n>=2}
    &\E_k\|y_{k+1}^n-y_{k+1}^{n,*}\|^2 \nonumber\\
    &\leq \big(1+\big(L_{y,n} c_{n-1} + \frac{4 L_{y,n}^2 L_{h,n-1}^2}{\lambda_{n-1}} \big) \beta_{k,n-1} + L_{y',n}\sigma_{n-1} (L_{h,n-1} \!+\!\sigma_{n-1})\beta_{k,n-1}^2\big)\E_k\|y_{k+1}^n-y_k^{n,*}\|^2 \nonumber\\
    &\quad+\frac{\lambda_{n-1}}{2}\beta_{k,n-1} \|y_k^{n-1}-y_k^{n-1,*}\|^2 + (L_{y',n}\sigma_{n-1}^2 \!+\! L_{y,n}c_{n-1} \!+\! 2 L_{y,n}^2\sigma_{n-1}^2)\beta_{k,n-1}^2
\end{align}
where we have imposed the following condition on the step size
\begin{align}
     (L_{y',n}\sigma_{n-1} L_{h,n-1}+2  L_{y,n}^2 L_{h,n-1}^2)\beta_{k,n-1}^2 \leq \frac{\lambda_{n-1}}{4}\beta_{k,n-1},  2 \leq n \leq N.
\end{align}
This completes the analysis of the lower-level sequences.

\subsection{Analysis of the main sequence}
In this section, we provide an analysis of the main sequence update, and then establish the finite-time convergence rate. Recall the shorthand notations $y_k^{n,*} = y^{n,*}(y^{n-1}_k)$ with $y_k^{1,*} = y^{1,*}(x_k)$.

 By the $L_v$-smoothness of $F(x)$, we have
\begin{align}\label{eq:Fk+1-Fk}
    &\E_k[F(x_{k+1})]-F(x_k) \nonumber\\
    &\geq \E_k\big\langle v(x_k), x_{k+1}-x_k \big\rangle - \frac{L_v}{2}\E_k\|x_{k+1}-x_k\|^2 \nonumber\\
    &= \E_k\big\langle v(x_k), \alpha_k v(x_k,y_k^{1:N}) \big\rangle + \E_k\big\langle v(x_k), \alpha_k \xi_k \big\rangle - \frac{L_v}{2}\E_k\|x_{k+1}-x_k\|^2.
\end{align}
Define $L_y (n) \coloneqq  \sum_{i=n}^{N} L_{y,i-1} L_{y,i-2}\dots L_{y,n} $ with $L_{y,n-1} L_{y,i-2}\dots L_{y,n} \coloneqq 1$. Using Lemma \ref{lemma:yk+1n-yn*dots}, the first term in \eqref{eq:Fk+1-Fk} can be bounded as
\begin{align}\label{eq:Fk+1-FkI1}
    \big\langle v(x_k), \alpha_k v(x_k,y_k^{1:N}) \big\rangle 
    &= \big\langle v(x_k), \alpha_k( v(x_k,y_k^{1:N}) - v(x_k) ) \big\rangle + \alpha_k\|v(x_k)\|^2 \nonumber\\
    &\geq -L_{v,y} \alpha_k \Big[\big\|v(x_k)\big\| \sum_{n=1}^N L_y(n) \|y_k^n - y_k^{n,*}\|\Big] + \alpha_k\|v(x_k)\|^2 \nonumber\\
    &\geq -\frac{\alpha_k}{4}\|v(x_k)\|^2 - L_{v,y}^2 N \alpha_k \sum_{n=1}^N L^2_y(n) \|y_k^n - y_k^{n,*}\|^2+ \alpha_k\|v(x_k)\|^2 \nonumber\\
    &= \frac{3\alpha_k}{4}\|v(x_k)\|^2 - L_{v,y}^2 N  \alpha_k \sum_{n=1}^N L^2_y(n)\|y_k^n - y_k^{n,*}\|^2.
\end{align}
The second term in \eqref{eq:Fk+1-Fk} can be bounded as
\begin{align}\label{eq:Fk+1-FkI2}
    \E_k\big\langle v(x_k), \alpha_k \xi_k \big\rangle
    &= \big\langle v(x_k), \alpha_k \E_k[\xi_k ] \big\rangle \nonumber\\
    &\geq -\frac{\alpha_k}{4}\| v(x_k)\|^2 - \alpha_k \| \E_k[\xi_k ]\|^2 \nonumber\\
    &\geq -\frac{\alpha_k}{4}\| v(x_k)\|^2 - c_0^2\alpha_k^2.
\end{align}
The last term in \eqref{eq:Fk+1-Fk} can be bounded as
\begin{align}\label{eq:Fk+1-Fklast}
    \E_k\|x_{k+1}-x_k\|^2 &\leq 2\alpha_k^2 \big(\|v(x_k,y_k^{1:N})\|^2 \!+\! \E_k\|\xi_k\|^2\big) \nonumber\\
    &\stackrel{\eqref{eq:vkc}}{\leq} 4 \alpha_k^2 \|v(x_k)\|^2 \!+\! 4 L_{v,y}^2 N\alpha_k^2 \sum_{n=1}^N L^2_y(n)\|y_k^n-y_k^{n,*}\|^2 \!+\! 2 \sigma_0^2 \alpha_k^2.
\end{align}
Substituting the bounds in \eqref{eq:Fk+1-FkI1}, \ref{eq:Fk+1-FkI2} and \eqref{eq:Fk+1-Fklast} into \eqref{eq:Fk+1-Fk} yields
\begin{align}\label{eq:Fk+1-Fkfinal}
    &\E_k[F(x_{k+1})] - F(x_k) \nonumber\\
    &\geq (\frac{\alpha_k}{2} \!-\! 2 L_v\alpha_k^2) \|v(x_k)\|^2 \!-\! N(L_{v,y}^2\alpha_k \!+\! 2 L_v L_{v,y}^2 \alpha_k^2) \sum_{n=1}^N L^2_y(n)\|y_k^n-y_k^{n,*}\|^2 \!-\! (L_v\sigma_0^2\!+\! c_0^2) \alpha_k^2.
\end{align}
\textbf{Establishing convergence.} For brevity, we fist define the following series
\begin{align}
    &C_4(1) \coloneqq  L_{y,1} \big(L_{v,y} \!+\! c_0 \!+\! 4 L_{y,1} \big),~C_5 (1)\coloneqq  L_{y',1}\sigma_0 \big(L_{v,y}\!+\!\sigma_0 \!+\! 1\big);\nonumber\\
    &C_4(n) \coloneqq L_{y,n} c_{n-1} + 4 L_{y,n}^2 L_{h,n-1}^2 \lambda_{n-1}^{-1},~C_5(n) \coloneqq L_{y',n}\sigma_{n-1} (L_{h,n-1} \!+\!\sigma_{n-1}), 2\leq n \leq N;\nonumber\\
    &C_6(n)\coloneqq \big(L_{y,1}L_{v,y}+L_{v,y}^2\big) N L^2_y(n) ,~C_7(n)\coloneqq \big(L_{y',1}\sigma_0 L_{v,y} \!+\! 4 L_{v,y}^2 L_{y,1}^2 \!+\! 2L_v L_{v,y}^2 \big) N L^2_y(n) , \forall n.
\end{align}
Define a Lyapunov function $\mathcal{L}_k \coloneqq -F(x_k) + \sum_{n=1}^N\|y_k^n-y_k^{n,*}\|^2$. Then we have
\begin{align}\label{eq:lk+1-lk-}
    \E_k[\mathcal{L}_{k+1}]-\mathcal{L}_k 
    &= F(x_k) - \E_k[F(x_{k+1})] + \sum_{n=1}^N \E_k\|y_{k+1}^n-y_{k+1}^{n,*}\|^2 - \|y_k^n-y_k^{n,*}\|^2.
\end{align}
Substituting \eqref{eq:yk+1-yk+1*cfinal}, \eqref{eq:yk+1-yk+1*cfinal n>=2} and \eqref{eq:Fk+1-Fkfinal} into \eqref{eq:lk+1-lk-}, and then applying \eqref{eq:yk+1-yk*} yields
\begin{align}\label{eq:lk+1-lk}
    &\E_k[\mathcal{L}_{k+1}]-\mathcal{L}_k \nonumber\\
    &\leq \big(\!-\!\frac{1}{4}\alpha_k + (L_{y',1}\sigma_0 \!+\! 4 L_{y,1}^2 \!+\! 2 L_v)\alpha_k^2\big)\| v(x_k)\|^2 \nonumber\\
    &\quad\!+\! \sum_{n=1}^{N\!-\!1} \big( (1\!+\!C_4(n)\beta_{k,n\!-\!1} \!+\! C_5(n)\beta_{k,n\!-\!1}^2)(1\!-\!\lambda_n \beta_{k,n}) \!-\! 1 \!+\! \frac{\lambda_{n}}{2}\beta_{k,n} \!+\! C_6(n)\alpha_k \!+\! C_7(n)\alpha_k^2 \big)\|y_k^{n} \!-\! y_k^{n,*}\|^2 \nonumber\\
    &\quad\!+\! \big( (1\!+\!C_4(N)\beta_{k,N\!-\!1} \!+\! C_5(N)\beta_{k,N\!-\!1}^2)(1\!-\!\lambda_N \beta_{k,N}) \!-\! 1  \!+\! C_6(N)\alpha_k \!+\! C_7(N)\alpha_k^2 \big)\|y_k^{N} \!-\! y_k^{N,*}\|^2 \nonumber\\
    &\quad+ \Theta(\alpha_k^2) + \Theta\Big(\sum_{n=1}^N (1+\beta_{k,n-1}+\beta_{k,n-1}^2)\beta_{k,n}^2\Big).
\end{align}
As a clarification, the second term in the last inequality is $0$ when $N=1$. We have also used $\beta_{k,0}= \alpha_k$.
Consider the following choice of step sizes
\begin{align}
\label{eq:step size aa}
    &\!-\frac{1}{4}\alpha_k + (L_{y',1}\sigma_0 \!+\! 4 L_{y,1}^2 \!+\! 2 L_v)\alpha_k^2 \leq -\frac{1}{8}\alpha_k, \\
    \label{eq:step size bb}
    &(1 \!+\! C_1(n)\beta_{k,n-1} \!+\! C_2(n)\beta_{k,n-1}^2)(1\!-\!\lambda_n \beta_{k,n}) \!-\! 1 \!+\! \frac{\lambda_{n}}{2}\beta_{k,n} \!+\! C_3(n)\alpha_k \!+\! C_4(n)\alpha_k^2 \leq -\lambda_n \alpha_k, n \!\leq\! N\!-\!1, \\
    \label{eq:step size cc}
    &(1\!+\!C_1(N)\beta_{k,N-1} \!+\! C_2(N)\beta_{k,N-1}^2)(1-\lambda_N \beta_{k,N}) - 1  \!+\! C_3(N)\alpha_k + C_4(N)\alpha_k^2\leq -\lambda_N \alpha_k.
\end{align}
Note that \eqref{eq:step size aa} always admits solution for small enough $\alpha_1$. Given $\beta_{k,N}$, applying Lemma \ref{lemma:step sizes} for $n=N,\ldots,1$ to \eqref{eq:step size cc} and \eqref{eq:step size bb} tells that there exist solutions for $\beta_{k,n}(\forall n)$.

With \eqref{eq:step size aa}--\eqref{eq:step size cc}, it follows from \eqref{eq:lk+1-lk} that
\begin{align}\label{eq:lk+1-lkfinal}
    &\E_k[\mathcal{L}_{k+1}]-\mathcal{L}_k \nonumber\\
    &\!\leq\!  -\frac{\alpha_k}{8}\|v(x_k)\|^2 \!-\! \sum_{n=1}^N\lambda_n \alpha_k \|y_k^n-y_k^{n,*}\|^2 \!+\! \Theta(\alpha_k^2) \!+\! \Theta\Big(\sum_{n=1}^N (1\!+\!\beta_{k,n-1}\!+\!\beta_{k,n-1}^2)\beta_{k,n}^2\Big).
\end{align}
Furthermore, taking expectation on both sides of \eqref{eq:lk+1-lkfinal} then summing over $k=1,\ldots,K$ yields
\begin{align}\label{eq:rate}
    &\sum_{k=1}^K\alpha_k\E\Big[\frac{1}{8}\|v(x_k)\|^2 + \lambda_n \|y_k^n-y_k^{n,*}\|^2 \Big]\nonumber\\
    &\leq  \mathcal{L}_1-\E[\mathcal{L}_{K+1}] + \Theta\Big(\sum_{k=1}^K\alpha_k^2 \Big) + \Theta\Big(\sum_{k=1}^K\sum_{n=1}^N (1+\beta_{k,n-1}+\beta_{k,n-1}^2)\beta_{k,n}^2\Big)\nonumber\\
    &\leq \mathcal{L}_1 + C_F+  \Theta\Big(\sum_{k=1}^K\alpha_k^2 \Big) + \Theta\Big(\sum_{k=1}^K\sum_{n=1}^N (1+\beta_{k,n-1}+\beta_{k,n-1}^2)\beta_{k,n}^2\Big).
\end{align}
The inequality \eqref{eq:rate} implies a convergence rate of $\mathcal{O}(\frac{1}{\sqrt{K}})$ with step sizes $\alpha_k = \Theta(\frac{1}{\sqrt{K}})$ and $\beta_k=\Theta(\frac{1}{\sqrt{K}})$. This completes the proof.

\section{Proof of Lemma \ref{lemma:bilevel} ~and~ Corollary \ref{corollary:bilevel}}

To prove the corollary, it suffices to prove Lemma \ref{lemma:bilevel} and then directly apply Theorem \ref{theorem:strongly monotone} and \ref{theorem:non-strongly-monotone-2}. We direct the readers interested in why we can relax the assumptions in \citep{chen2021tighter} to the proof of Theorem \ref{theorem:strongly monotone} and \ref{theorem:non-strongly-monotone-2}. In particular, we provide a refined technique on bounding the drifting optimality gap in \eqref{eq:yk+1-yk+1*} and \eqref{eq:yk+1-yk+1*c}, which is crucial in alleviating the assumption. 

\begin{proof}
We start to verify the Assumptions by order.

\noindent\textbf{(1) Conditions \ref{item:strong convex}~and~\ref{item:g lip 2} $\xrightarrow[]{}$ Assumption \ref{assumption:y*}.} Since $g(x,y)$ is strongly-convex w.r.t. $y$, there exists a unique $y^*(x)$ such that $h(x,y^*(x))=-\nabla_y g(x,y^*(x))=0$. 

By \cite[Lemma 2.2]{ghadimi2018approximation}, we have
\begin{align}
    \|y^*(x) - y^*(x')\| \leq L_y\|x-x'\|,~L_y = \frac{L_{xy}}{\lambda_1}.
\end{align}
By \cite[Lemma 2]{chen2021tighter}, we have
\begin{align}
    \|\nabla y^*(x) - \nabla y^*(x')\| \leq L_{y'}\|x-x'\|,~L_{y'} = \frac{l_{xy}+l_{xy}L_y}{\lambda_1} + \frac{L_{xy}(l_{yy}+l_{yy}L_y)}{\lambda_1^2}.
\end{align}
\textbf{(2) Conditions \ref{item:strong convex}--\ref{item:f lip} $\xrightarrow[]{}$ Assumption \ref{assumption:vglip}.} 
% The Lipschitz continuity of $v(x)$ is directly implied by \ref{item:F smooth}. 
By \cite[Lemma 2.2]{ghadimi2018approximation}, we have
\begin{subequations}
\begin{align}
    \|v(x,y)-v(x,y')\| &\leq L_{v,y} \|y-y'\|,~L_{v,y}=l_{fx} + \frac{l_{fy}L_{xy}}{\lambda_1}+l_y(\frac{l_{xy}}{\lambda_1}+ \frac{L_h L_{xy}}{\lambda_1^2}) \\
    \|v(x)-v(x')\| &\leq L_v \|y-y'\|,~L_v = \frac{L_{xy}(L_{v,y}+l_{fy}')}{\lambda_1} + l_{fx} + l_y \big( \frac{l_{xy}l_y}{\lambda_1} + \frac{l_{yy}L_{xy}}{\lambda_1^2} \big). 
\end{align}
\end{subequations}
Lastly, it follows from condition \ref{item:g lip 2} that $\|h(x,y)-h(x,y')\| \leq L_h \|y-y'\|$.

\noindent\textbf{(3) Condition \ref{item:noise bilevel} $\xrightarrow[]{}$ Assumption \ref{assumption:noise}; \ref{item:strong convex} $\xrightarrow[]{}$ Assumption \ref{assumption:strongly g}; \ref{item:F RSI} $\xrightarrow[]{}$ Assumption \ref{assumption:strongly v}; \ref{item:F} $\xrightarrow[]{}$ Assumption \ref{assumption:integral}.} These conditions directly imply their corresponding Assumption \ref{assumption:noise} when $N=1$.
\end{proof}

\section{Proof of Theorem \ref{theorem:ac}}\label{section:ac appendix}
Throughout this section, we omit all the index $n$ since $N=1$. We also write $y^*(x_k)$ in short as $y_k^*$. In this section, we first give a proof of Lemma \ref{lemma:ac} and then a proof of Theorem \ref{theorem:ac}.
\subsection{Proof of Lemma \ref{lemma:ac}}

\begin{proof}
 We will check the assumptions by order.

\noindent\textbf{(1) Condition \ref{item:A ac}--\ref{item:ergodic} $\xrightarrow[]{}$Assumption \ref{assumption:y*}.} Under condition \ref{item:A ac}, we have $y^*(x) = -A_x^{-1}b_x$. With Lemma \ref{lemma:A-A'} and $\|A_x^{-1}\|\leq \sigma^{-1}, \|b_x\|\leq 1$, applying Lemma \ref{lemma:product lip} to $y^*(x)$ implies that it is Lipschitz continuous with modulus $L_y \coloneqq (\sigma^{-1}+2\sigma^{-2})L_\mu'$.

We next verify the Lipschitz continuity of $\nabla y^*(x)$. 
For a vector $x\in \mathbb{R}^{d}$ and a differentiable mapping $f:\mathbb{R}^d\mapsto \mathbb{R}^{d_1 \times d_2}$, we denote $[x]_i$ as the $i$th element of $x$ and we use $\nabla_i f(x) \coloneqq \frac{\partial f(x)}{\partial [x]_i}$. Then we have
\begin{align}\label{eq:ny*}
    \nabla_i y^*(x) = A_x^{-1}\nabla_i A_x A_x^{-1} b_x - A_x^{-1} \nabla_i b_x = -A_x^{-1}\nabla_i A_x y^*(x) - A_x^{-1} \nabla_i b_x.
\end{align}

By Lemma \ref{lemma:product lip}, to prove $\nabla_i y^*(x)$ is Lipschitz continuous w.r.t. $x$, it suffices to prove $\nabla_i A_x$, $y^*(x)$, $\nabla_i b_x$ and $A_x^{-1}$ are bounded (in norm) and Lipschitz continuous. First we have
\begin{align}\label{eq:AbAbound}
     \|A_x^{-1}\| \leq \sigma^{-1},\|y^*(x)\| \leq \|A_x^{-1}\|\|b_x\|\leq\sigma^{-1}.
\end{align}
And by Lemma \ref{lemma:A-A'}, we have
\begin{align}\label{eq:Ax-Ax'}
    \|A_x^{-1} - A_{x'}^{-1}\| \leq  2\sigma^{-2}  L_\mu' \|x-x'\|.
\end{align}
Thus it suffices to prove $\nabla_i A_x$ and $\nabla_i b_x$ are bounded in norm and Lipschitz continuous.

We start by
\begin{align}
    \nabla_i b_x = \E_{s\sim \mu_{\pi_x}, a\sim\pi_x(\cdot|s)}[\nabla_i \log \pi_x(a|s) G_x (s,a)],
\end{align}
where $G_x(s,a) \coloneqq \E_{\pi_x}[\sum_{t=0}^\infty \big(r(s_t,a_t)\phi(s_t)-b_x\big)|s_0=s,a_0=a]$. By letting $\hat{r}(s,a,s')=r(s,a)\phi(s)$ in Lemma \ref{lemma:Gx}, we have
\begin{align}\label{eq:gx w}
    \|G_x(s,a)\| \leq C_G,~ \|G_x(s,a)-G_{x'}(s,a)\| \leq L_G\|x-x'\|,
\end{align}
where $C_G \coloneqq 2 + \frac{\rho\kappa}{1-\rho}$ and $L_G \coloneqq L_\mu'+\frac{\rho\kappa L_\pi|\mathcal{A}|}{1-\rho}+\big(\frac{\kappa}{1-\rho}+1\big)^2(L_\pi |\mathcal{A}| + L_\mu ) + L_\mu$.
Then we have $\|\nabla_i b_x\|$ is bounded as
\begin{align}\label{eq:nibx upper}
    \|\nabla_i b_x\| \leq C_\pi\E_{s\sim \mu_x, a\sim\pi_x(\cdot|s)}[\|G_x (s,a)\|]
    \leq  C_\pi C_G,
\end{align}
Now we start to prove the Lipschitz continuity of $\nabla_i b_x$. First we have
\begin{align}\label{eq:nib-nib'}
    &\|\nabla_i b_x - \nabla_i b_{x'}\| \nonumber\\
    &\leq \big\|\E_{s\sim \mu_{\pi_x}, a\sim\pi_x}[\nabla_i \log \pi_x(a|s) G_x (s,a)]- \E_{s\sim \mu_{\pi_{x'}}, a\sim\pi_{x'}}[\nabla_i \log \pi_x(a|s) G_x (s,a)]\big\| \nonumber\\
    &\quad+\E_{s\sim \mu_{\pi_{x'}}, a\sim\pi_{x'}}\big\|\nabla_i \log \pi_x(a|s) G_x (s,a)-\nabla_i \log \pi_{x'}(a|s) G_{x'} (s,a)\big\| \nonumber\\
    &\leq  \|\mu_{\pi_x}\cdot \pi_x - \mu_{\pi_{x'}}\cdot\pi_{x'}\|_{TV} \sup \|\nabla \log \pi_x(a|s) G_x (s,a)\|\nonumber\\
    &\quad+\E_{s\sim \mu_{\pi_{x'}}, a\sim\pi_{x'}}\|\nabla_i \log \pi_x(a|s) G_x (s,a)-\nabla_i \log \pi_{x'}(a|s) G_{x'} (s,a)\| \nonumber\\
    &\leq C_G L_\mu'\|x-x'\| + (C_\pi L_G + L_\pi C_G)\|x-x'\| \coloneqq L_b'\|x-x'\|,
\end{align}
where the $\mu_{\pi_x}\cdot \pi_x$ denotes the probability measure specified by the probability function $(\mu_{\pi_x}\cdot \pi_x) (s,a)=\mu_{\pi_x}(s)\pi_x(a|s)$.
In the second inequality, we apply Lemma \ref{lemma:mu-mu'} to the first term; and for the second term, we apply Lemma \ref{lemma:product lip} along with \eqref{eq:gx w} and condition \ref{item:policy ac}.

For $\nabla_i A_x$, we have
\begin{align}
    \nabla_i A_x = \E_{s\sim \mu_{\pi_x}, a\sim\pi_x}[\nabla_i \log \pi_x(a|s) G_x (s,a)],
\end{align}
where we slightly abuse the notation and define $G_x(s,a) \coloneqq \E_{\pi_x}\big[\sum_{t=0}^\infty \big( \phi(s_t)\big(\gamma \phi(s_{t+1})-\phi(s_t)\big)^\top-A_x \big)|s_0=s,a_0=a\big]$. Observing that $\nabla_i A_x$ has similar structure as that of $\nabla_i b_x$, we can apply the same technique and obtain
\begin{align}\label{eq:nA nA'}
    \|\nabla_i A_x\|
    &\leq  C_\pi C_G', \nonumber\\
    \|\nabla_i A_x - \nabla_i A_{x'}\|
    &\leq C_G' L_\mu'\|x-x'\| + (C_\pi L_G' + L_\pi C_G')\|x-x'\|\coloneqq L_A' \|x-x'\|,
\end{align}
where $C_G' \coloneqq 4 + \frac{2\rho\kappa}{1-\rho}$ and $L_G' \coloneqq 2 L_\mu'+\frac{\rho\kappa L_\pi|\mathcal{A}|}{1-\rho}+\big(\frac{\kappa}{1-\rho}+1\big)^2(L_\pi |\mathcal{A}| + L_\mu ) + L_\mu$.

Finally, applying Lemma \ref{lemma:product lip} to \eqref{eq:ny*} with \eqref{eq:AbAbound}, \eqref{eq:Ax-Ax'}, \eqref{eq:nibx upper}, \eqref{eq:nib-nib'} and \eqref{eq:nA nA'} yields
\begin{align}
    \|\nabla_i y^*(x) - \nabla_i y^*(x')\| \leq L_{y'}\|x-x'\|,
\end{align}
where $L_{y'}\coloneqq 2\sigma^{-3}L_\mu' C_\pi C_G'+L_A'\sigma^{-2}+L_y C_\pi C_G' \sigma^{-1}+2\sigma^{-2}L_\mu'C_\pi C_G + \sigma^{-1}L_b'$.

\noindent\textbf{(2) Condition \ref{item:A ac}~and~\ref{item:policy ac} $\xrightarrow[]{}$Assumption \ref{assumption:vglip}~and~\ref{assumption:strongly g}.} We first check Assumption \ref{assumption:vglip}. In AC, we have $v(x)=v(x,y^*(x))=\nabla F(x)$. By \citep[Lemma 3.2]{zhang2019global}, there exists a constant $L_v\coloneqq \frac{L_\pi'}{(1-\gamma)^2}+\frac{(1+\gamma)C_\pi}{(1-\gamma)^2}$ such that
\begin{align}
    \|\nabla F(x) - \nabla F(x')\| \leq L_v \|x-x'\|.
\end{align}
Then we have
\begin{align}
    \|v(x,y) - v(x,y')\| &= \|\E[(\gamma \phi(s') - \phi(s))^\top (y-y') \nabla \log \pi_x(a|s)]\| \leq 2 C_\pi \|y-y'\|, \nonumber\\
    \|h(x,y)-h(x,y')\| &= \|A_x (y-y')\|\leq 2\|y-y'\|.
\end{align}
This completes the verification of Assumption \ref{assumption:vglip}. Lastly, Assumption \ref{assumption:strongly g} is directly implied by the inequality $\langle y-y', A_x (y-y') \rangle \leq -\lambda_1 \|y-y'\|^2$ in condition \ref{item:A ac}. 

\noindent\textbf{(3) Assumption \ref{assumption:integral} holds.} It is clear that $|F(x)| \leq \frac{1}{1-\gamma}$.

\noindent\textbf{(4) Proving \eqref{eq:asm 3 ac}.} It is easy to check that $\E[\xi_k|\mathcal{F}_k^{1}]=0,~\E[\psi_k|\mathcal{F}_k]=0$. Next we have
\begin{align}
\|\xi_k\|^2
&\leq 2\E\|(r(s,a)+(\gamma \phi(s')-\phi(s))^\top y_k) \nabla \log \pi_{x_k}(a|s)]\|^2 \nonumber\\
&\quad+  2\|(r(\bar{s}_k,\bar{a}_k)+(\gamma \phi(\bar{s}_k)-\phi(\bar{s}_k))^\top y_k) \nabla \log \pi_{x_k}(\bar{a}_k|\bar{s}_k)\|^2\nonumber\\
&\leq 8 C_\pi^2 + 16 C_\pi^2 \|y_k\|^2 \nonumber\\
&\leq 8 C_\pi^2 + 32 C_\pi^2 \|y_k^*\|^2 + 32 C_\pi^2 \|y_k-y_k^*\|^2  \nonumber\\
&\leq 8 C_\pi^2(1+4\sigma^{-2}) + 32 C_\pi^2 \|y_k-y_k^*\|^2 \coloneqq \sigma_0^2 + \bar{\sigma}_0^2 \|y_k-y_k^*\|^2,
\end{align}
where to get the last inequality we have used $\|y_k^*\| = \|A_{x_k}^{-1}b_{x_k}\| \leq \sigma^{-1}$. Similarly we have
\begin{align}
    \|\psi_k\|^2 
    &\leq 2\E\|\phi(s)(\gamma \phi(s') - \phi(s))^\top y_k + r(s,a)\phi(s)\|^2 \nonumber\\
    &\quad+ 2\|\phi(s_k)(\gamma \phi(s_k') - \phi(s_k))^\top y_k+ r(s_k,a_k)\phi(s_k)\|^2 \nonumber\\
    &\leq 16 \|y_k\|^2 + 8 \nonumber\\
    &\leq 32 \|y_k-y_k^*\|^2 + 32 \sigma^{-2} + 8 \coloneqq \sigma_1^2 + \bar{\sigma}_1^2 \|y_k-y_k^*\|^2.
\end{align}
This completes the proof.
\end{proof}

\subsection{Proof of Theorem \ref{theorem:ac}}
In this section, we will provide a proof of theorem \ref{theorem:ac}. 

\begin{proof}
 The proof will be similar to that of Theorem \ref{theorem:non-strongly-monotone-2}, and only the steps that are different due to the adaptation of Assumption \ref{assumption:noise} to AC will be shown here.

In the following proof, we omit the index $n$ since $N=1$. We also write $\E[\cdot|\mathcal{F}_k]$ in short as $\E_k[\cdot]$ and $y^*(x_k)$ as $y_k^*$.

\noindent\textbf{Contraction of the critic optimality gap.} First we have
\begin{align}\label{eq:yk+1-yk*0 ac}
    \E_k\|y_{k+1}-y_k^*\|^2 
    &= \|y_k-y_k^*\|^2 + 2\beta_k \E_k\langle y_k-y_k^*,h(x_k,y_k)+\psi_k\rangle + \E_k\|y_{k+1}-y_k\|^2,
\end{align}
The second term in \eqref{eq:yk+1-yk*0 ac} can be bounded as
\begin{align}\label{eq:yk+1-yk*0 ac I2}
    \E_k\langle y_k-y_k^*,h(x_k,y_k)+\psi_k\rangle
    &= \langle y_k-y_k^*,h(x_k,y_k)  \rangle+\langle y_k-y_k^*,\E_k[\psi_k]\rangle \nonumber\\
    &\leq -\lambda_1 \|y_k-y_k^*\|^2.
\end{align}
where the last inequality follows from the strong monotonicity of $h(x,y)$ and $\E_k[\psi_k]=0$ verified in Lemma \ref{lemma:ac}.

The third term in \eqref{eq:yk+1-yk*0 ac} can be bounded as
\begin{align}\label{eq:yk+1-yk*0 ac I3}
    \E_k\|y_{k+1}-y_k\|^2
    &= \beta_k^2 \E_k\|h(x_k,y_k)+\psi_k\|^2 \nonumber\\
    &= \beta_k^2 (\|h(x_k,y_k)\|^2 + \E_k\|\psi_k\|^2) \nonumber\\
    &\leq \beta_k^2 (\|h(x_k,y_k)\|^2 + \sigma_1^2 + \bar{\sigma}_1^2 \|y_k-y_k^*\|^2) \nonumber\\
    &\leq (L_h^2 + \bar{\sigma}_1^2)\beta_k^2 \|y_k-y_k^*\|^2 + \sigma_1^2 \beta_k^2,
\end{align}
where the second last inequality follows from \eqref{eq:asm 3 ac} and the last inequality follows from Assumption \ref{assumption:vglip} which gives
\begin{align}
   \|h(x,y)\|=\|h(x,y)-\underbrace{h(x,y^*(x))}_{=0}\|\leq L_h \|y-y^*(x)\|.
\end{align}
Collecting the upper bounds in \eqref{eq:yk+1-yk*0 ac I2} and \eqref{eq:yk+1-yk*0 ac I3} yields
\begin{align}\label{eq:yk+1-yk* ac}
\E_k\|y_{k+1}-y_k^*\|^2
    &\leq (1-2\lambda_1 \beta_k + (L_h^2 + \bar{\sigma}_1^2) \beta_k^2)\|y_k-y_k^*\|^2 + \sigma_1^2 \beta_k^2 \nonumber\\
    &\leq (1-\lambda_1 \beta_k )\|y_k-y_k^*\|^2 + \sigma_1^2 \beta_k^2,
\end{align}
where the last inequality is due to the choice of step size that satisfies $(L_h^2 + \bar{\sigma}_1^2) \beta_k^2\leq \lambda_1 \beta_k$.

\noindent\textbf{Bounding the drifting optimality gap.} Next we start to bound the second term in \eqref{eq:driftc} as follows
\begin{align}\label{eq:driftc ac I2}
     &\E_k\langle y_k^*-y_{k+1},\alpha_k\nabla y^*(\hat{x}_{k+1})^\top \xi_k \rangle \nonumber\\
    &= \E_k\langle y_k^*-y_{k+1},\alpha_k\big(\nabla y^*(\hat{x}_{k+1})-\nabla y^*(x_k)\big)^\top \xi_k \rangle + \E_k\langle y_k^*-y_{k+1},\alpha_k\nabla y^*(x_k)^\top \E_k[\xi_k|\mathcal{F}_k^1] \rangle \nonumber\\
    &= \E_k\langle y_k^*-y_{k+1},\alpha_k\big(\nabla y^*(\hat{x}_{k+1})-\nabla y^*(x_k)\big)^\top \xi_k \rangle \nonumber\\
    &\leq \alpha_k\E_k\big[\| y_k^*-y_{k+1}\|\|\nabla y^*(\hat{x}_{k+1})-\nabla y^*(x_k)\|\| \xi_k\|\big] \nonumber\\
    &\leq \sigma_0\alpha_k\E_k\big[\| y_k^*\!-\!y_{k+1}\|\|\nabla y^*(\hat{x}_{k+1})\!-\!\nabla y^*(x_k)\|\big] \nonumber\\
    &\quad+ \bar{\sigma}_0\alpha_k\E_k\big[\| y_k^*\!-\!y_{k+1}\|\|\nabla y^*(\hat{x}_{k+1})\!-\!\nabla y^*(x_k)\|\| y_k-y_k^*\|\big].\!
\end{align}
where the second inequality follows from $\E_k[\xi_k|\mathcal{F}_k^1]=0$ shown in Lemma \ref{lemma:ac}, and the last inequality follows from \eqref{eq:asm 3 ac}.

The first term in \eqref{eq:driftc ac I2} can be bounded as
\begin{align}\label{eq:driftc ac I2 1}
    &\E_k\big[\| y_k^*\!-\!y_{k+1}\|\|\nabla y^*(\hat{x}_{k+1})\!-\!\nabla y^*(x_k)\|\big] \nonumber\\
    &\leq L_{y'}\E_k[\| y_k^*\!-\!y_{k+1}\|\|x_{k+1}-x_k\|]\nonumber\\
    &\leq L_{y'}\alpha_k\big(\E_k[\| y_k^*\!-\!y_{k+1}\|\|v(x_k,y_k)\|] + \E_k[\| y_k^*\!-\!y_{k+1}\|\|\xi_k\|]\big) \nonumber\\
    &\leq \frac{1}{2}L_{y'}\alpha_k\big(\E_k\| y_k^*\!-\!y_{k+1}\|^2+\|v(x_k,y_k)\|^2 + \E_k\| y_k^*\!-\!y_{k+1}\|^2 + \|\xi_k\|^2\big) \nonumber\\
    &\leq \frac{1}{2}L_{y'}\alpha_k\big(2\E_k\| y_k^*\!-\!y_{k+1}\|^2+\|v(x_k,y_k)\|^2 + \sigma_1^2 + \bar{\sigma}_1^2 \|y_k-y_k^*\|^2\big).
\end{align}
where the first inequality follows from Lemma \ref{lemma:ac} and the last inequality follows from \eqref{eq:asm 3 ac}.

The second term in \eqref{eq:driftc ac I2} can be bounded as
\begin{align}\label{eq:driftc ac I2 2}
    \E_k\big[\| y_k^*\!-\!y_{k+1}\|\|\nabla y^*(\hat{x}_{k+1})\!-\!\nabla y^*(x_k)\|\| y_k-y_k^*\|\big]
    &\leq 2L_y\E_k\big[\| y_k^*\!-\!y_{k+1}\|\| y_k-y_k^*\|\big] \nonumber\\
    &\leq L_y\E_k\| y_k^*\!-\!y_{k+1}\|^2 + L_y\|y_k-y_k^*\|^2.
\end{align}
Substituting \eqref{eq:driftc ac I2 1} and \eqref{eq:driftc ac I2 2} into \eqref{eq:driftc ac I2}, then substituting \eqref{eq:driftc ac I2} and \eqref{eq:driftcI1 1} into \eqref{eq:driftc} gives
\begin{align}\label{eq:drift ac I2}
    \E_k\langle y_k^*-y_{k+1},y_{k+1}^*-y_k^* \rangle 
      &\leq \big(L_y \big(\frac{L_{v,y}}{2} \!+\! 2 L_y\!+\!\bar{\sigma}_0 \big) \alpha_k + L_{y'}\sigma_0 \alpha_k^2\big)\E_k\|y_{k+1}-y_k^*\|^2 \nonumber\\
      &\quad+ \frac{1}{2}\big(L_y(L_{v,y}+\bar{\sigma}_0) \alpha_k + L_{y'}\sigma_0 \bar{\sigma}_1^2\alpha_k^2 \big)\|y_k - y_k^*\|^2 \nonumber\\
      &\quad+ \big(\frac{1}{8}\alpha_k + \frac{1}{2}L_{y'}\sigma_0\alpha_k^2\big)\| v(x_k)\|^2 + \frac{1}{2}L_{y'}\sigma_0 \sigma_1^2 \alpha_k^2.
\end{align}
The last term in \eqref{eq:yk+1-yk+1*c} is bounded as
\begin{align}\label{eq:drift ac I3}
    \E_k\|y_{k+1}^*-y_k^*\|^2 &\leq L_y^2 \alpha_k^2 \E_k\|v(x_k,y_k)+\xi_k\|^2 = L_y^2 \alpha_k^2 (\|v(x_k,y_k)\|^2 + \E_k\|\xi_k\|^2)\nonumber\\
    &\leq L_y^2\alpha_k^2 (\|v(x_k,y_k)\|^2 + \sigma_0^2 + \bar{\sigma}_0^2 \|y_k-y_k^*\|^2),
\end{align}
where the last inequality follows from \eqref{eq:asm 3 ac}. Substituting \eqref{eq:drift ac I2} and \eqref{eq:drift ac I3} into \eqref{eq:yk+1-yk+1*c} gives
\begin{align}\label{eq:yk+1-yk+1* ac}
    \E_k\|y_{k+1} - y_{k+1}^*\|^2 
    &\leq \big(1 + L_y \big(L_{v,y} \!+\! 4 L_y\!+\!2\bar{\sigma}_0 \big) \alpha_k + 2 L_{y'}\sigma_0 \alpha_k^2\big)\E_k\|y_{k+1}-y_k^*\|^2 \nonumber\\
      &\quad+ \big(L_y(L_{v,y}+\bar{\sigma}_0) \alpha_k + (L_{y'}\sigma_0 \bar{\sigma}_1^2 + L_y^2\bar{\sigma}_0^2 )\alpha_k^2 \big)\|y_k - y_k^*\|^2 \nonumber\\
      &\quad+ \big(\frac{1}{4}\alpha_k + (L_{y'}\sigma_0+L_y^2)\alpha_k^2\big)\| v(x_k,y_k*)\|^2 + (\sigma_0 \sigma_1^2+L_y^2\sigma_0^2) \alpha_k^2.
\end{align}
\textbf{Analysis of main sequence}. The second term in \eqref{eq:Fk+1-Fk} is instead bounded as
\begin{align}\label{eq:Fk+1-FkI2 ac}
    \E_k\langle v(x_k,y_k),\alpha_k \xi_k\rangle = \langle v(x_k,y_k),\alpha_k \E_k[\xi_k]\rangle = 0.
\end{align}
Then the last term in \eqref{eq:Fk+1-Fk} is instead bounded as
\begin{align}\label{eq:Fk+1-Fklast ac}
    \E_k\|x_{k+1}-x_k\|^2 = \alpha_k^2 (\|v(x_k,y_k)\|^2+\E_k\|\xi_k\|^2) \leq  \alpha_k^2 (\|v(x_k,y_k)\|^2+\sigma_0^2+\bar{\sigma}_0^2\|y_k-y_k^*\|^2).
\end{align}
Substituting the bounds in \eqref{eq:Fk+1-FkI1}, \eqref{eq:Fk+1-FkI2 ac} and \eqref{eq:Fk+1-Fklast ac} into \eqref{eq:Fk+1-Fk} yields
\begin{align}\label{eq:Fk+1-Fkfinal ac}
    \E_k[F(x_{k+1})] \!-\! F(x_k) &\!\geq\! (\frac{3\alpha_k}{4} \!-\! \frac{L_v}{2}\alpha_k^2) \|v(x_k,y_k^*)\|^2 \!-\! (L_{v,y}^2\alpha_k \!+\! \frac{L_{v,y}}{2}\bar{\sigma}_0^2 \alpha_k^2) \|y_k-y_k^*\|^2 \!-\! \frac{L_v\sigma_0^2}{2} \alpha_k^2.
\end{align}
\textbf{Establishing convergence.} Recall that the Lyapunov function $\mathcal{L}_k = - F(x_k) + \|y_k-y_k^*\|^2$. With the bounds in \eqref{eq:yk+1-yk* ac}, \eqref{eq:yk+1-yk+1* ac} and \eqref{eq:Fk+1-Fkfinal ac}, we have
\begin{align}\label{eq:ac final}
    \E_k[\mathcal{L}_{k+1}]-\mathcal{L}_k 
    &\leq \big(-\frac{1}{2}\alpha_k + (\frac{L_v}{2}+L_{y'}\sigma_0 L_y^2)\alpha_k^2 \big)\|v(x_k,y_k^*)\|^2 \nonumber\\
    &\quad+ \big((1+C_0' \alpha_k + C_1' \alpha_k^2)(1-\lambda_1 \beta_k) -1 + C_2' \alpha_k + C_3' \alpha_k^2 \big)\|y_k-y_k^*\|^2 \nonumber\\
    &\quad+ \Theta(\alpha_k^2 + (1+\alpha_k+\alpha_k^2)\beta_k^2),
\end{align}
where $C_0' \coloneqq L_y \big(L_{v,y} \!+\! 4 L_y\!+\!2\bar{\sigma}_0 \big)$, $C_1' \coloneqq 2 L_{y'}\sigma_0$, $C_2' \coloneqq L_y(L_{v,y}+\bar{\sigma}_0) + L_{v,y}^2$, $C_3' \coloneqq L_{y'}\sigma_0 \bar{\sigma}_1^2 + \frac{L_{v,y}}{2}\bar{\sigma}_0^2$.
Notice that \eqref{eq:ac final} takes a similar form to that of \eqref{eq:lk+1-lk} ($N=1$). 

If the step sizes are chosen such that
\begin{align}
    -\frac{1}{2}\alpha_k + (\frac{L_v}{2}+L_{y'}\sigma_0 L_y^2)\alpha_k^2 \leq -\frac{1}{4}\alpha_k, \nonumber\\
    (1+C_0' \alpha_k + C_1' \alpha_k^2)(1-\lambda_1 \beta_k) -1 + C_2' \alpha_k + C_3' \alpha_k^2 \leq -\lambda_1 \alpha_k,
\end{align}
then it follows from the derivation after \eqref{eq:lk+1-lk} that Theorem \ref{theorem:non-strongly-monotone-2} holds for AC update.
\end{proof}

\subsection{Supporting lemma}

\begin{Lemma}\citep[Lemma 3]{zou2019sarsa}\label{lemma:mu-mu'}
Define $(\mu_{\pi_x} \cdot \pi_x) (s,a) \coloneqq \mu_{\pi_x}(s)\pi_x(a|s)$. Under conditions \ref{item:ergodic} and \ref{item:policy ac}, it holds that
\begin{align}
    \|\mu_{\pi_x}-\mu_{\pi_{x'}}\|_{TV} \leq L_\mu \|x-x'\|,~\|\mu_{\pi_x} \cdot \pi_x-\mu_{\pi_{x'}} \cdot \pi_{x'}\|_{TV} \leq L_\mu' \|x-x'\|.
\end{align}
where $L_\mu \coloneqq  2 L_\pi |\mathcal{A}|(\log_{\rho}\kappa^{-1}+\frac{1}{1-\rho})$ and $L_\mu'\coloneqq L_\mu + 2 L_\pi |\mathcal{A}|$.
\end{Lemma}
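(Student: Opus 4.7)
The plan is to exploit stationarity of $\mu_{\pi_x}$ under the induced Markov kernel $P_x(s'\mid s)\coloneqq \sum_a \mathcal{P}(s'\mid s,a)\pi_x(a\mid s)$, together with the geometric ergodicity in condition \ref{item:ergodic}, and to quantify how perturbing the policy perturbs a single kernel application. The starting point is a one-step Lipschitz estimate
\begin{align*}
\|P_x(\cdot\mid s) - P_{x'}(\cdot\mid s)\|_{TV} \;\le\; \tfrac{1}{2}\sum_a |\pi_x(a\mid s)-\pi_{x'}(a\mid s)| \;\le\; \tfrac{L_\pi|\mathcal{A}|}{2}\,\|x-x'\|,
\end{align*}
which follows directly from condition \ref{item:policy ac}(i) and $\sum_{s'}\mathcal{P}(s'\mid s,a)=1$.

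For the first inequality of the lemma, I would use $\mu_{\pi_x}P_x^t=\mu_{\pi_x}$ and $\mu_{\pi_{x'}}P_{x'}^t=\mu_{\pi_{x'}}$ to split
\begin{align*}
\mu_{\pi_{x'}} - \mu_{\pi_x} \;=\; \underbrace{\mu_{\pi_{x'}} - \mu_{\pi_x}P_{x'}^t}_{\text{ergodic drift}} \;+\; \underbrace{\mu_{\pi_x}(P_{x'}^t - P_x^t)}_{\text{policy drift}}.
\end{align*}
The ergodic drift is $\le \kappa\rho^t$ by applying condition \ref{item:ergodic} to the $P_{x'}$-chain started from $\mu_{\pi_x}$. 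The policy drift is telescoped as $P_{x'}^t-P_x^t=\sum_{s=0}^{t-1} P_{x'}^{s}(P_{x'}-P_x)P_x^{t-1-s}$, and each summand is bounded using the one-step estimate together with the fact that total variation does not increase under a Markov kernel, yielding a summand bound of order $\tfrac{L_\pi|\mathcal{A}|}{2}\|x-x'\|$. Choosing $t^\star\sim \log_\rho \kappa^{-1}$ then balances the $\kappa\rho^{t^\star}$ residual against the telescoped contribution, which is of order $t^\star\cdot L_\pi|\mathcal{A}|\|x-x'\|$, and produces the linear bound with the stated constant $L_\mu = 2L_\pi|\mathcal{A}|\bigl(\log_\rho \kappa^{-1}+\tfrac{1}{1-\rho}\bigr)$ once the $\tfrac{1}{1-\rho}$ factor is absorbed via a geometric sum over the telescope indices (using, if a tighter bound is desired, contractivity of $P_x$ on zero-mass signed measures derived from condition \ref{item:ergodic} via Jordan decomposition).

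For the joint-measure bound I would decompose
\begin{align*}
\mu_{\pi_x}\cdot\pi_x - \mu_{\pi_{x'}}\cdot\pi_{x'} \;=\; (\mu_{\pi_x}-\mu_{\pi_{x'}})\cdot\pi_x \;+\; \mu_{\pi_{x'}}\cdot(\pi_x-\pi_{x'}),
\end{align*}
apply the just-proved first inequality to the first term (yielding $L_\mu\|x-x'\|$), and bound the second via
$\tfrac12\sum_{s,a}\mu_{\pi_{x'}}(s)|\pi_x(a\mid s)-\pi_{x'}(a\mid s)|\le \tfrac{L_\pi|\mathcal{A}|}{2}\|x-x'\|\cdot\sum_s\mu_{\pi_{x'}}(s)$, which after accounting for the TV normalization yields the additive $2L_\pi|\mathcal{A}|\|x-x'\|$ contribution and hence $L_\mu'=L_\mu+2L_\pi|\mathcal{A}|$.

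The main obstacle is controlling the telescope: the naive bound grows linearly in $t$ because the policy-drift sum has $t$ terms each of size $O(\|x-x'\|)$, yet the ergodic drift only decays like $\kappa\rho^t$. The crux of the argument is therefore the balancing of $t$, which is what produces the $\log_\rho \kappa^{-1}$ factor in $L_\mu$; the cleaner alternative is to upgrade condition \ref{item:ergodic} to a contraction on zero-mass signed measures, $\|\nu' P_x^k\|_{TV}\le \kappa\rho^k\|\nu'\|_{TV}$, after which each telescope term acquires its own geometric damping and the sum converges as $t\to\infty$ without explicit balancing.
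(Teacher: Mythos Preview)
The paper does not prove this lemma itself; it quotes it from \citep[Lemma~3]{zou2019sarsa}. Your strategy (one-step kernel Lipschitz bound, stationarity plus telescoping, then a product decomposition for the joint measure) is the standard one, and the joint-measure part is fine.

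There is, however, a real gap in your argument for the first inequality. From the naive telescope you obtain
\[
\|\mu_{\pi_x}-\mu_{\pi_{x'}}\|_{TV}\ \le\ \kappa\rho^{t}\ +\ t\cdot L_\pi|\mathcal{A}|\,\|x-x'\|,
\]
and taking $t^\star\sim\log_\rho\kappa^{-1}$ gives $\kappa\rho^{t^\star}\approx 1$, a constant independent of $\|x-x'\|$: no Lipschitz bound follows from this choice. What you label the ``cleaner alternative'' is not optional but the essential ingredient. Jordan decomposition together with condition~\ref{item:ergodic} yields $\|\nu P^k\|_{TV}\le\min(1,\kappa\rho^k)\|\nu\|_{TV}$ for any zero-mass signed $\nu$, so each telescope term carries its own damping factor and the sum is bounded, uniformly in $t$, by $L_\pi|\mathcal{A}|\|x-x'\|\sum_{k\ge0}\min(1,\kappa\rho^k)\le L_\pi|\mathcal{A}|\|x-x'\|\bigl(\lceil\log_\rho\kappa^{-1}\rceil+\tfrac{1}{1-\rho}\bigr)$; then let $t\to\infty$ to kill the ergodic drift. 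Equivalently, write $\mu_{\pi_x}-\mu_{\pi_{x'}}=(\mu_{\pi_x}-\mu_{\pi_{x'}})P_{x'}^t+\mu_{\pi_x}(P_x^t-P_{x'}^t)$, apply contractivity to the first piece to get the self-bounding inequality $d\le d\,\kappa\rho^t+tL_\pi|\mathcal{A}|\|x-x'\|$, and pick $t$ with $\kappa\rho^t\le\tfrac12$. Two minor points: the paper uses the un-normalized $\ell_1$ convention for $\|\cdot\|_{TV}$ (cf.\ \eqref{eq:p1Yx bound}), so the $\tfrac12$ in your one-step estimate and your constant tracking for $L_\mu'$ need adjusting; and if you telescope as $P_x^t-P_{x'}^t=\sum_{s}P_x^s(P_x-P_{x'})P_{x'}^{t-1-s}$ instead, stationarity $\mu_{\pi_x}P_x^s=\mu_{\pi_x}$ collapses the sum to $\sum_{k}\mu_{\pi_x}(P_x-P_{x'})P_{x'}^{k}$, which makes the geometric damping transparent.
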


\begin{Lemma}\label{lemma:A-A'}
Define $\mu_{\pi_x} \cdot \pi_x (s,a) \coloneqq \mu_{\pi_x}(s)\pi_x(a|s)$. Under conditions \ref{item:ergodic} and \ref{item:policy ac}, the following inequalities hold
\begin{align}
    \|A_x - A_{x'}\| \leq 2  L_\mu' \|x-x'\|,~\|A_x^{-1} - A_{x'}^{-1}\| \leq  2\sigma^{-2}  L_\mu' \|x-x'\|,~\|b_x -b_{x'}\| \leq L_\mu' \|x-x'\|.
\end{align}
where $L_\mu' =  2 L_\pi |\mathcal{A}|(1+\log_{\rho}\kappa^{-1}+\frac{1}{1-\rho})$.
\end{Lemma}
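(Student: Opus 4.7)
The plan is to split the statement into three bounds and handle each by expressing the objects as expectations over an $x$-dependent probability measure and then invoking Lemma \ref{lemma:mu-mu'} as the single source of $x$-dependence. Throughout, condition \ref{item:A ac} gives $\|\phi(s)\|\leq 1$ and condition \ref{item:ergodic}--\ref{item:policy ac} are precisely what Lemma \ref{lemma:mu-mu'} requires.

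First I would prove $\|A_x - A_{x'}\| \leq 2L_\mu'\|x-x'\|$. Writing $A_x = \E_{(s,a,s')\sim \mu_{\pi_x}\cdot\pi_x\cdot\mathcal{P}}[\phi(s)(\gamma\phi(s')-\phi(s))^\top]$, the matrix $A_x - A_{x'}$ is the integral of the same fixed matrix-valued function $F(s,a,s')\coloneqq \phi(s)(\gamma\phi(s')-\phi(s))^\top$ against the signed measure $\mu_{\pi_x}\cdot\pi_x\cdot\mathcal{P} - \mu_{\pi_{x'}}\cdot\pi_{x'}\cdot\mathcal{P}$. Since $\mathcal{P}$ is independent of $x$, the total variation of this signed measure equals $\|\mu_{\pi_x}\cdot\pi_x - \mu_{\pi_{x'}}\cdot\pi_{x'}\|_{TV}$, which is bounded by $L_\mu'\|x-x'\|$ by Lemma \ref{lemma:mu-mu'}. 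Meanwhile $\|F(s,a,s')\|\leq \|\phi(s)\|(\gamma\|\phi(s')\|+\|\phi(s)\|)\leq 1+\gamma\leq 2$, so pairing a bounded integrand with a TV-bounded signed measure yields the claim.

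The bound $\|b_x-b_{x'}\|\leq L_\mu'\|x-x'\|$ is identical in spirit: $b_x$ is an expectation under $\mu_{\pi_x}\cdot\pi_x$ of $r(s,a)\phi(s)$, whose norm is bounded by $1$ thanks to $r(s,a)\in[0,1]$ and $\|\phi(s)\|\leq 1$, so the same TV argument gives the $L_\mu'$ constant without the factor of two. For the final inequality $\|A_x^{-1}-A_{x'}^{-1}\|\leq 2\sigma^{-2}L_\mu'\|x-x'\|$ I would use the resolvent identity
\begin{equation}
A_x^{-1} - A_{x'}^{-1} = A_x^{-1}(A_{x'}-A_x)A_{x'}^{-1},
\end{equation}
which is valid because both matrices are invertible with $\|A_x^{-1}\|,\|A_{x'}^{-1}\|\leq \sigma^{-1}$ by condition \ref{item:A ac}. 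Substituting the bound on $\|A_x-A_{x'}\|$ from the first step immediately yields the stated constant.

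There is no real obstacle: the only subtlety is recognizing that the factor $\mathcal{P}(s'|s,a)$ shared between both joint distributions means one only needs to control the TV distance between $\mu_{\pi_x}\cdot\pi_x$ and $\mu_{\pi_{x'}}\cdot\pi_{x'}$, which has already been done in the preceding Lemma \ref{lemma:mu-mu'}. Everything else is a bounded-integrand TV estimate and a one-line matrix identity.
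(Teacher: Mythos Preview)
Your proposal is correct and follows essentially the same route as the paper: a TV-distance argument via Lemma \ref{lemma:mu-mu'} applied to the bounded integrands $r(s,a)\phi(s)$ and $\phi(s)(\gamma\phi(s')-\phi(s))^\top$, together with the matrix identity $A_x^{-1}-A_{x'}^{-1}=A_x^{-1}(A_{x'}-A_x)A_{x'}^{-1}$ and the bound $\|A_x^{-1}\|\leq\sigma^{-1}$. Your explicit observation that the shared factor $\mathcal{P}$ reduces the problem to the TV distance between $\mu_{\pi_x}\cdot\pi_x$ and $\mu_{\pi_{x'}}\cdot\pi_{x'}$ is a point the paper leaves implicit.
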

\begin{proof}
 First we have
\begin{align}
    \|b_x - b_{x'}\| \leq \|\mu_{\pi_x} \cdot \pi_x-\mu_{\pi_{x'}} \cdot \pi_{x'}\|_{TV} \sup_{s,a}\|r(s,a)\phi(s)\| \leq  L_\mu' \|x-x'\|,
\end{align}
where the last inequality follows from Lemma \ref{lemma:mu-mu'}.
And similarly
\begin{align}
    \|A_x - A_{x'}\| \leq 2  L_\mu' \|x-x'\|.
\end{align}
Finally, we have
\begin{align}
    &\|A_x^{-1} - A_{x'}^{-1}\| 
    = \|A_{x'}^{-1}(A_x - A_{x'})A_x^{-1}\|\leq \sigma^{-2}\|A_x - A_{x'}\| \nonumber\\
    &\!\leq\! \sigma^{-2} \|\mu_{\pi_x} \cdot \pi_x-\mu_{\pi_{x'}} \cdot \pi_{x'}\|_{TV} \sup_{s,s'}\|\phi(s)(\gamma \phi(s')-\phi(s))\|\!\leq\! 2\sigma^{-2}  L_\mu' \|x-x'\|,
\end{align}
where the last inequality follows from Lemma \ref{lemma:mu-mu'}.
This completes the proof.
\end{proof}

\begin{Lemma}\label{lemma:Gx}
Suppose conditions \ref{item:A ac}--\ref{item:ergodic} hold. With mapping $\hat{r}:\mathcal{S}\times \mathcal{A}\times\mathcal{S}\mapsto \mathbb{R}^{d\times d'}$ such that $\|r(s,a,s')\|\leq C_r$ for any $(s,a,s')$, define 
\begin{align}
    &G_x(s,a)\coloneqq\E_{\substack{a_t\sim\pi_x(\cdot|s_t)\\s_{t+1}\sim\mathcal{P}(\cdot|s_t,a_t)}}\big[\sum_{t=0}^\infty \big(\hat{r}(s_t,a_t,s_{t+1})-\bar{r}_x\big)\big|s_0=s,a_0=a\big],\nonumber\\
    &{\rm with}~ \bar{r}_x \coloneqq \E_{\substack{s\sim\mu_{\pi_x},a\sim\pi_x(\cdot|s)\\s'\sim\mathcal{P}(\cdot|s,a)}}[\hat{r}(s,a,s')],
\end{align}
Then there exists a constant $L_G$ such that for any $(s,a)\in\mathcal{S}\times\mathcal{A}$ and $x,x'\in\mathbb{R}^d$, the following inequalities hold
\begin{align}
    \|G_x(s,a)-G_{x'}(s,a)\| &\leq L_G\|x-x'\|, \nonumber\\
    \|G_x(s,a)\|&\leq 2 C_r + \frac{C_r\rho\kappa}{1-\rho}.
\end{align}
\end{Lemma}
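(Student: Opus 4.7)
The plan is to establish both bounds via a Markov-chain mixing argument, expanding
\[
    G_x(s,a) = \sum_{t=0}^\infty \bigl(\E_{\pi_x}[\hat{r}(s_t,a_t,s_{t+1})\mid s_0\!=\!s, a_0\!=\!a] - \bar{r}_x\bigr)
\]
and controlling each summand via condition~\ref{item:ergodic}. For the uniform bound, rewrite the $t$th term as $\E_{\nu_t^x}[\hat{r}] - \E_{\nu_\infty^x}[\hat{r}]$, where $\nu_t^x$ (resp.\ $\nu_\infty^x$) is the joint law of $(s_t,a_t,s_{t+1})$ starting from $(s,a)$ (resp.\ from stationarity). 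Since the conditional laws of $a_t\mid s_t$ and $s_{t+1}\mid (s_t,a_t)$ are identical under the two measures, $\|\nu_t^x - \nu_\infty^x\|_{TV}$ inherits the mixing rate of the $s_t$-marginal and is at most $\kappa\rho^t$. Bounding the $t=0$ summand trivially by $2C_r$ and summing the geometric tail for $t\geq 1$ yields $\|G_x(s,a)\|\leq 2C_r + \tfrac{C_r\rho\kappa}{1-\rho}$.

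For the Lipschitz bound, I would split the sum at a truncation horizon $T$. The tail $\sum_{t>T}\|\Delta_t\|$ (where $\Delta_t$ denotes the $t$th-summand difference between $x$ and $x'$) is controlled by applying the uniform bound to both chains, giving $\tfrac{2C_r\kappa\rho^T}{1-\rho}$. For $t\leq T$, I decompose
\[
    \Delta_t = \bigl(\E_{\pi_x}[\hat{r}\mid s_0,a_0] - \E_{\pi_{x'}}[\hat{r}\mid s_0,a_0]\bigr) - (\bar{r}_x - \bar{r}_{x'}),
\]
bound the stationary-reward gap $\|\bar{r}_x-\bar{r}_{x'}\|\leq C_r L_\mu'\|x-x'\|$ directly via Lemma~\ref{lemma:mu-mu'}, and bound the trajectory-expectation difference by telescoping the $t$-step kernel difference $P_x^t - P_{x'}^t = \sum_{k=0}^{t-1} P_x^{t-1-k}(P_x - P_{x'})P_{x'}^k$, using that each per-step operator difference has TV at most $\tfrac{1}{2}L_\pi|\mathcal{A}|\|x-x'\|$ by condition~\ref{item:policy ac}.

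The main obstacle is that a naive telescoping bound picks up a factor $t$ from the sum over $k$, which, balanced against the exponential tail, would force a $\log(1/\|x-x'\|)$ into the constant. To bypass this, I exploit that each summand $(P_x-P_{x'})P_{x'}^k\delta_{(s_0,a_0)}$ is a \emph{mean-zero} signed measure, and that applying $P_x^{t-1-k}$ contracts its TV norm at rate $\kappa\rho^{t-1-k}$: this follows from a Hahn--Jordan decomposition together with condition~\ref{item:ergodic} applied to the positive and negative parts, which both approach $\mu_{\pi_x}$ at rate $\kappa\rho^{t-1-k}$. Summing the geometric series in $k$ then yields a uniform-in-$t$ Lipschitz bound on $\|P_x^t-P_{x'}^t\|_{TV}$ of order $\tfrac{\kappa L_\pi|\mathcal{A}|}{1-\rho}\|x-x'\|$. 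Choosing $T$ on the order of the effective mixing time $\tfrac{\kappa}{1-\rho}+1$ to balance the $(T+1)$ uniform finite-sum contributions against the $\rho^T$ tail, and absorbing the $L_\mu$ factor from Lemma~\ref{lemma:mu-mu'} that appears when comparisons are routed through stationary distributions, produces the displayed $L_G$, whose $\bigl(\tfrac{\kappa}{1-\rho}+1\bigr)^2$ factor precisely reflects the product of the mixing-time truncation and the uniform per-step contraction.
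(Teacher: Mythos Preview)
Your uniform bound argument is correct and essentially matches the paper's. The gap is in the Lipschitz step. Your tail estimate $\sum_{t>T}\|\Delta_t\|\le\tfrac{2C_r\kappa\rho^T}{1-\rho}$, obtained by triangle inequality on the two chains separately, is a constant carrying \emph{no} factor of $\|x-x'\|$. Pairing it with the finite-sum piece $(T{+}1)\cdot O(\|x-x'\|)$ therefore yields at best $A\|x-x'\|+B$ with $B>0$ for any fixed $T$, which is not a Lipschitz bound; letting $T$ grow so that $\rho^T\sim\|x-x'\|$ only recovers $O\bigl(\|x-x'\|\log(1/\|x-x'\|)\bigr)$. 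Your Hahn--Jordan contraction does give the correct uniform-in-$t$ estimate $\|P_x^t-P_{x'}^t\|_{TV}\lesssim\tfrac{\kappa L_\pi|\mathcal{A}|}{1-\rho}\|x-x'\|$, but this bound is not summable in $t$, so no truncation scheme can convert it into the claimed Lipschitz inequality.

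The paper sidesteps term-by-term summation: it rewrites $G_x$ through the deviation matrix $Y_x:=\sum_{t\ge0}(P_{\pi_x}^t-P_{\pi_x}^\infty)$, invokes the fundamental-matrix identity $Y_x+P_{\pi_x}^\infty=(I-P_{\pi_x}+P_{\pi_x}^\infty)^{-1}$, and bounds $\|Y_x-Y_{x'}\|_\infty$ via the resolvent formula $A^{-1}-B^{-1}=A^{-1}(B-A)B^{-1}$. The $\bigl(\tfrac{\kappa}{1-\rho}+1\bigr)^2$ factor is exactly the product of two copies of the operator bound $\|(I-P_{\pi_x}+P_{\pi_x}^\infty)^{-1}\|_\infty\le\tfrac{\kappa}{1-\rho}+1$, not a truncation depth. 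If you prefer to repair your telescoping route, telescope $Q_x^t-Q_{x'}^t$ with $Q_x:=P_{\pi_x}-P_{\pi_x}^\infty$ instead (note $Q_x^t=P_{\pi_x}^t-P_{\pi_x}^\infty$ for $t\ge1$): now both $\|Q_{x'}^k\|_\infty$ and $\|Q_x^{t-1-k}\|_\infty$ decay like $\kappa\rho^{(\cdot)}$, so each telescoped summand is $O(\kappa^2\rho^{t-1}\|x-x'\|)$, and summing first over $k$ then over $t$ gives the desired Lipschitz bound without any truncation.
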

\begin{proof}
 % Given $\pi_x$, define the following matrices
% For convenience, we define $\mathbf{Pr}_{\pi_x}_t(\cdot|s_0,a_0)\coloneqq \mathbf{Pr}_{\pi_x}(s_t=\cdot|s_0,a_0)$, which is the distribution of the $t$th state on the Markov chain induced by the policy $\pi_x$ given $s_0,a_0$. 
 We write $G_x(s_0,a_0)$ as:
\begin{align}\label{eq:Gx 0}
    G_x(s_0,a_0) 
    &= \E_{s_1\sim\mathcal{P}}[\hat{r}(s_0,a_0,s_1)]\!-\!\bar{r}_x\!+\!\sum_{t=1}^\infty \Big(\!\!\!\sum_{(s,a)\in\mathcal{S}\times\mathcal{A}}\!\!\!\!\!\mathbf{Pr}_{\pi_x}(s_t\!=\!s|s_0,a_0)\pi_x(a|s)\E_{s'\sim\mathcal{P}}[\hat{r}(s,a,s')] \nonumber\\
    &\quad- \!\!\!\sum_{(s,a)\in\mathcal{S}\times\mathcal{A}}\!\!\!\mu_{\pi_x}(s)\pi_x(a|s)\E_{s'\sim\mathcal{P}}[\hat{r}(s,a,s')] \Big).
    % &=\bar{r}(s,a)-b_x+\sum_{t=0}^\infty \sum_{s,a}[p_1 P_{\pi_x}^t]_s \pi_x(a|s) \bar{r}(s,a) - \sum_{s,a}[p_1 P_{\pi_x}^\infty]_s\pi_x(a|s)\bar{r}(s,a) \nonumber\\
    % &= \bar{r}(s,a)-b_x+\sum_{t=0}^\infty\sum_{s,a}[p_1 (P_{\pi_x}^t-P_{\pi_x}^\infty)]_s \pi_x(a|s) \bar{r}(s,a) \nonumber\\
    % &=\bar{r}(s,a)-b_x+\sum_{s,a}[p_1 Y_x]_s \pi_x(a|s) \bar{r}(s,a) 
\end{align}
Given $(s_0,a_0)$, define the vector $p_1 \coloneqq [\mathcal{P}(s^{(0)}|s_0,a_0), \mathcal{P}(s^{(1)}|s_0,a_0),\ldots,\mathcal{P}(s^{(|\mathcal{S}|)}|s_0,a_0)]$ where $s^{(0)},\ldots,s^{(|\mathcal{S}|)}$ are states in $\mathcal{S}$. 
Given $\pi_x$, define the following state transition matrix
\begin{align}
    P_{\pi_x}\coloneqq \begin{bmatrix} \mathcal{P}_{\pi_x}(s^{(0)}|s^{(0)}) & \mathcal{P}_{\pi_x}(s^{(1)}|s^{(0)}) &\dots & \mathcal{P}_{\pi_x}(s^{(|\mathcal{S}|)}|s^{(0)}) \\ \vdots \\ \mathcal{P}_{\pi_x}(s^{(0)}|s^{(|\mathcal{S}|)}) & \mathcal{P}_{\pi_x}(s^{(1)}|s^{(|\mathcal{S}|)}) &\dots & \mathcal{P}_{\pi_x}(s^{(|\mathcal{S}|)}|s^{(|\mathcal{S}|)})\end{bmatrix},
\end{align}
where $\mathcal{P}_{\pi_x}(s'|s) = \sum_{a\in\mathcal{A}}\mathcal{P}(s'|s,a)\pi_x(a|s)$. Then it is clear that we can write the probability function $\mathbf{Pr}_{\pi_x}(s_t=\cdot|s_0,a_0)$ as its vector form $p_1 P_{\pi_x}^{t-1}$.
We slightly abuse the notation and use $[p_1 P_{\pi_x}^t]_s=\mathbf{Pr}_{\pi_x}(s_t=s|s_0,a_0)$. Then \eqref{eq:Gx 0} can be rewritten as
\begin{align}
    G_x(s_0,a_0) 
    &= \E_{s_1\sim\mathcal{P}}[\hat{r}(s_0,a_0,s_1)]-\bar{r}_x+\sum_{t=0}^\infty \Big(\sum_{(s,a)\in\mathcal{S}\times\mathcal{A}}[p_1 P_{\pi_x}^t]_s\pi_x(a|s)\E_{s'\sim\mathcal{P}}[\hat{r}(s,a,s')] \nonumber\\
    &\quad- \sum_{s,a}[p_1 P_{\pi_x}^\infty]_s\pi_x(a|s)\E_{s'\sim\mathcal{P}}[\hat{r}(s,a,s')] \Big) \nonumber\\
    &= \E_{s_1\sim\mathcal{P}}[\hat{r}(s_0,a_0,s_1)]-\bar{r}_x+\sum_{t=0}^\infty \sum_{(s,a)}([p_1 P_{\pi_x}^t]_s-[p_1 P_{\pi_x}^\infty]_s)\pi_x(a|s)\E_{s'\sim\mathcal{P}}[\hat{r}(s,a,s')] \nonumber\\
    &= \E_{s_1\sim\mathcal{P}}[\hat{r}(s_0,a_0,s_1)]-\bar{r}_x+ \sum_{(s,a)}[p_1 Y_x]_s\pi_x(a|s)\E_{s'\sim\mathcal{P}}[\hat{r}(s,a,s')],
\end{align}
where $Y_x \coloneqq \sum_{t=0}^\infty (P_{\pi_x}^t-P_{\pi_x}^\infty)$.
Then $\|G_x(s,a)\|$ can be bounded as follows
\begin{align}
    \|G_x (s,a)\| 
    &\leq 2 C_r+C_r\sum_{s,a} |[p_1 Y_x]_s| \pi_x(a|s) \nonumber\\
    &\leq  2 C_r+C_r\sum_{s} |[p_1 Y_x]_s| \nonumber\\
    &\leq 2 C_r + \frac{C_r\rho\kappa}{1-\rho}\coloneqq C_G,
\end{align}
where the last inequality follows from condition \ref{item:ergodic} and
\begin{align}\label{eq:p1Yx bound}
    \sum_{s}|[p_1 Y_x]_s|
    &\leq \sum_{t=1}^\infty \sum_{s}|\mathbf{Pr}_{\pi_x}(s_t=s|s_0,a_0)-\mu_{\pi_x}(s)|\nonumber\\ &=\sum_{t=1}^\infty \|\mathbb{P}_{\pi_x}(s_t\in\cdot|s_0,a_0)-\mu_{\pi_x}(\cdot)\|_{TV} \leq \frac{\rho\kappa}{1-\rho}.
\end{align}
Then we have
\begin{align}\label{eq:gx-gx' 0}
    &\|G_x(s,a) - G_{x'}(s,a)\| \nonumber\\
    &\leq \|\bar{r}_x - \bar{r}_{x'}\| +  C_r\sum_{s,a} \big|[p_1 Y_x]_s \big| \big\|\pi_x(a|s)-\pi_{x'}(a|s)\big\| + C_r\sum_{s,a}\big| [p_1 (Y_x - Y_{x'})]_s \big|\pi_{x'}(a|s) \nonumber\\
    &\leq \|\bar{r}_x - \bar{r}_{x'}\|+\sum_{s}\big|[p_1 Y_x]_s\big| L_\pi |\mathcal{A}| \|x-x'\| +  \|p_1 (Y_x - Y_{x'})\|_1 \nonumber\\
    &\leq\|\bar{r}_x - \bar{r}_{x'}\| +\frac{\rho\kappa L_\pi|\mathcal{A}|}{1-\rho}\|x-x'\| + \|Y_x - Y_{x'}\|_\infty.
\end{align}
where the last inequality follows from \eqref{eq:p1Yx bound}. The first term in \eqref{eq:gx-gx' 0} can be bounded as
\begin{align}\label{eq:rx-rx'}
    \|\bar{r}_x - \bar{r}_{x'}\| \leq  \|\mu_x \cdot \pi_x-\mu_{x'}\cdot\pi_{x'}\|_{TV}\sup_{s,a,s'}\|r(s,a,s')\| \leq C_r L_\mu',
\end{align}
where the last inequality follows from Lemma \ref{lemma:mu-mu'}.
% \textcolor{blue}{In the following proof, the analysis of the sensitivity of Yx was borrowed from \citep[Lemma 10]{guo2021theoretical}, but the result cannot be directly applied because they assume softmax policy param, and the result is in the middle of a long proof. should give credit to \citep{guo2021theoretical} here?}
By \citep[Theorem 2.5]{moshe1984perturbation}, we have $Y_x+P_{\pi_x}^\infty=(I-P_{\pi_x}+P_{\pi_x}^\infty)^{-1}$. First note that
\begin{align}\label{eq:I-P+P}
    \|(I-P_{\pi_x}+P_{\pi_x}^\infty)^{-1}\|_{\infty}
    &\leq \|Y_x\|_\infty+\|P_{\pi_x}^\infty\|_\infty \nonumber\\
    &\leq \sum_{t=0}^\infty\|P_{\pi_x}^t-P_{\pi_x}^\infty\|_\infty + 1 \nonumber\\
    &= \sum_{t=0}^\infty \max_{s_0\in\mathcal{S}}\sum_{s}|\mathbf{Pr}_{\pi_x}(s_t=s|s_0)-\mu_{\pi_x}(s)| + 1 \nonumber\\
    &\leq \frac{\kappa}{1-\rho}+1,
\end{align}
where the last inequality follows from condition \ref{item:ergodic}.
We also have
\begin{align}\label{eq:I-P+P'}
    &\|(I-P_{\pi_x}+P_{\pi_x}^\infty)^{-1} - (I-P_{\pi_{x'}}+P_{\pi_{x'}}^\infty)^{-1}\|_\infty \nonumber\\
    &\leq \|(I-P_{\pi_x}+P_{\pi_x}^\infty)^{-1}\|_\infty\|P_{\pi_x}-P_{\pi_{x'}}+P_{\pi_{x'}}^\infty-P_{\pi_x}^\infty\|_\infty\|(I-P_{\pi_{x'}}+P_{\pi_{x'}}^\infty)^{-1}\|_\infty \nonumber\\
    &\stackrel{\eqref{eq:I-P+P}}{\leq} \big(\frac{\kappa}{1-\rho}+1\big)^2\big(\|P_{\pi_x}-P_{\pi_{x'}}\|_\infty+\|P_{\pi_{x'}}^\infty-P_{\pi_x}^\infty\|_\infty\big) \nonumber\\
    &\leq \big(\frac{\kappa}{1-\rho}+1\big)^2(L_\pi |\mathcal{A}| + L_\mu )\|x-x'\|.
\end{align}
where in the last inequality we have used
\begin{align}
    \|P_{\pi_x}-P_{\pi_{x'}}\|_\infty &= \max_{s}\sum_{s'}|\sum_{a}\pi_x(a|s)\mathcal{P}(s'|s,a)-\sum_{a}\pi_{x'}(a|s)\mathcal{P}(s'|s,a)| \nonumber\\
    &= \max_{s}|\sum_{a}\pi_x(a|s)-\sum_{a}\pi_{x'}(a|s)|\sum_{s'}\mathcal{P}(s'|s,a) \nonumber\\
    &\quad\leq  \max_{s}\sum_{a}|\pi_x(a|s)-\pi_{x'}(a|s)| \leq L_\pi |\mathcal{A}|\|x-x'\|, \nonumber\\
    \label{eq:pinf-pinf'}
    \|P_{\pi_{x'}}^\infty-P_{\pi_{x'}}^\infty\|_\infty &= \|\mu_{\pi_x}-\mu_{\pi_{x'}}\|_{TV} \leq L_\mu \|x-x'\|~~\text{(Lemma \ref{lemma:mu-mu'})}.
\end{align}
With \eqref{eq:I-P+P'} and \eqref{eq:pinf-pinf'}, we can write
\begin{align}\label{eq:Yx-Yx' bound}
    \|Y_x - Y_{x'}\|_\infty 
    &\leq \|P_{\pi_x}^\infty-P_{\pi_{x'}}^\infty\|_\infty+\|(I-P_{\pi_x}+P_{\pi_x}^\infty)^{-1} - (I-P_{\pi_{x'}}+P_{\pi_{x'}}^\infty)^{-1}\|_\infty \nonumber\\
    &\leq \big(\big(\frac{\kappa}{1-\rho}+1\big)^2(L_\pi |\mathcal{A}| + L_\mu ) + L_\mu\big)\|x-x'\|.
\end{align}
Substituting \eqref{eq:rx-rx'} and \eqref{eq:Yx-Yx' bound} into \eqref{eq:gx-gx' 0} gives
\begin{align}
    \|G_x(s,a) \!-\! G_{x'}(s,a)\| \!\leq\! \Big(C_rL_\mu' \!+ \!\frac{\rho\kappa L_\pi|\mathcal{A}|}{1-\rho}\!+\!\big(\frac{\kappa}{1-\rho}\!+\!1\big)^2(L_\pi |\mathcal{A}| \!+\! L_\mu ) \!+\! L_\mu\Big)\|x-x'\|.\!
\end{align}
This completes the proof.
\end{proof}

\section{Proof of Lemma \ref{lemma:sc} and Corollary \ref{corollary:sc}}\label{section:sco appendix}
Here we prove Lemma \ref{lemma:sc} which along with the generic Theorem \ref{theorem:strongly monotone}~and~\ref{theorem:non-strongly-monotone-2} implies Corollary \ref{corollary:sc}.
\begin{proof}
 We will verify the assumptions by order.

\noindent\textbf{(1) Condition \ref{item:f sc} $\xrightarrow[]{}$ Assumption \ref{assumption:y*}.} Note that $y^{n,*}(y^{n-1}) = f^{n-1}(y^{n-1})$, then \ref{item:f sc} directly implies Assumption \ref{assumption:y*} holds.

\noindent\textbf{(2) Condition \ref{item:f sc} $\xrightarrow[]{}$ Assumption \ref{assumption:vglip}.} First note
\begin{align}
    v(x) 
    &= v\big(x,y^{1,*}(x), y^{2,*}(y^{1,*}(x)),\dots,y^{N,*}(\dots y^{2,*}(y^{1,*}(x))\dots)\big) \nonumber\\
    &=v\big(x,f^0(x), f^1(f^0(x)),\dots,f^{N-1}(\dots f^1(f^0(x))\dots)\big) \nonumber\\
    &= \nabla f^0 (x) \nabla f^1 (f^0(x))\cdots \nabla f^N(f^{N-1}(\dots f^1(f^0(x))\dots)).
\end{align}
By Lemma \ref{lemma:product lip}, in order for $v(x)$ to be Lipschitz continuous, it suffices to let $\nabla f^n(x)$ be bounded and Lipschitz continuous for every $n=0,1,\ldots,N$. This is satisfied under condition \ref{item:f sc}. 

Now in order for $v(x,y^1,y^2,\ldots,y^N)$ be Lipschitz continuous w.r.t. $y^1,y^2,\ldots,y^N$, it again suffices to let $\nabla f^n(x)$ be bounded and Lipschitz continuous for every $n=0,1,\ldots,N$, which is satisfied under condition \ref{item:f sc}. 

Finally, the Lipschitz continuity of $h^n(y^{n-1},y^n)$ w.r.t. $y^n$ is directly implied by condition \ref{item:f sc}.

\noindent\textbf{(3) Condition \ref{item:noise sc} ~and~ \ref{item:ind}$\xrightarrow[]{}$ Assumption \ref{assumption:noise}.} First we have
\begin{align}
    \E[\xi_k | \mathcal{F}_k^1] 
    &= -v(x_k,y_k^1,\dots,y_k^N) \!+\! \E[\nabla f^0 (x_k;\hat{\zeta}_k^0) \cdots \nabla f^N(y_k^N;\zeta_k^N)| \mathcal{F}_k^1]  \nonumber\\
    & = -v(x_k,y_k^1,\dots,y_k^N) \!+\! \E[\nabla f^0 (x_k;\hat{\zeta}_k^0) \cdots \nabla f^N(y_k^N;\zeta_k^N)| \mathcal{F}_k]\nonumber\\
    &= -v(x_k,y_k^1,\dots,y_k^N) \!+\! \nabla f^0 (x_k) \cdots \nabla f^N(y_k^N) = 0,
\end{align}
where we have used the condition that $\hat{\zeta}_k^0,\zeta_k^0,\zeta_k^1,\dots,\zeta_k^N$ are conditionally independent of each other given $\mathcal{F}_k$. The same goes for $\psi_k^n$ that
\begin{align}
    \E[\psi_k^n | \mathcal{F}_k^{n+1}] 
    &= -h^n(y_k^{n-1},y_k^n) + \E[f^{n-1}(y_k^{n-1},\zeta_k^{n-1})| \mathcal{F}_k] - y_k^n = 0.
\end{align}
The bounded variance condition directly implies that $\E[\|\psi_k^n\|^2 | \mathcal{F}_k^{n+1}] < \infty$. Now for $\xi_k$ we have
\begin{align}
   &\E[ \|\xi_k\|^2 | \mathcal{F}_k^1] \nonumber\\
    &= \E[\|\xi_k\|^2 | \mathcal{F}_k] \nonumber\\
    &= \E_k\|\nabla f^0 (x) \nabla f^1 (y^1)\cdots \nabla f^N(y^N)\!-\! \nabla f^0 (x_k;\hat{\zeta}_k^0) \cdots \nabla f^N(y_k^N;\zeta_k^N)\|^2  \nonumber\\
    &=  \E_k\|\nabla f^0 (x_k;\hat{\zeta}_k^0)\|^2 ... \E_k\|\nabla f^N(y_k^N;\zeta_k^N)\|^2 \!-\! \|\nabla f^0 (x)\|^2\|\nabla f^1 (y^1)\|^2...\|\nabla f^N(y^N)\|^2 ,
\end{align}
which is bounded by a constant since under contion \ref{item:noise sc}, we have $ \E_k\|\nabla f^n (x_k;\zeta_k^n)\|^2 < \infty$ for any $n$.

\noindent\textbf{(4) Condition \ref{item:v monotone sc} $\xrightarrow[]{}$ Assumption \textit{\ref{assumption:strongly v}}, \ref{item:F sc} $\xrightarrow[]{}$ Assumption \ref{assumption:integral}.} These assumptions are directly implied by the conditions.

\noindent\textbf{(5) Verifying Assumption \ref{assumption:strongly g}.} By plugging in $y^{n,*}(y^{n-1}) = f^{n-1}(y^{n-1})$, it is immediate that Assumption \ref{assumption:strongly g} holds with $\lambda_n=1$ for any $n\in[N]$.
\end{proof}

\section{Proof of Lemma \ref{lemma:mampg} and Theorem \ref{theorem:mampg}}\label{section:mampg appendix}
Here we prove Lemma \ref{lemma:mampg} which along with the generic Theorem \ref{theorem:non-strongly-monotone-2} implies Theorem \ref{theorem:mampg}.
\begin{proof}
 We now check the assumptions by order.

\noindent\textbf{(1) Condition \ref{item:policy}$\xrightarrow[]{}$ Assumption \ref{assumption:y*}.} First we have $y^{n,*}(y^{n-1})=f^{n-1}(y^{n-1})$. 
In order for the concatenation $f^{n-1}(y^{n-1})$ to be Lipschitz continuous and smooth, we only need each block $f_i^{n-1}(y_i^{n-1})$ to be Lipschitz continuous and smooth. 
Recall that $f_i^{n-1}(y_i^{n-1})=y_i^{n-1} + \eta \nabla F_i (y_i^{n-1})$. The Lipschitz continuity of $f_i^{n-1}(y_i^{n-1})$ is guaranteed by the Lipschitz smoothness of $F_i (\cdot)$, which is well established in the literature \citep{zhang2019global}. Thus we only need to check the Lipschitz smoothness of $f_i^{n-1}(y_i^{n-1})$, that is, the Lipschitz continuity of $\nabla^2 F_i (y_i^{n-1})$. By \citep{fallah2021convergence}, the policy hessian is given by
\begin{align}
    \nabla^2 F (x) = \E_{\zeta \sim p(\cdot|x)}\Big[ \underbrace{g(x;\zeta)\sum_{t=0}^H \nabla \log\pi_x (a_t |s_t)^\top + \nabla g(x;\zeta)}_{H(x;\zeta)}\Big],
\end{align}
where $\zeta = (s_0,a_0,...,s_H,a_H)$ and $p(\zeta|x)=\rho(s_0)\pi_x (a_0|s_0)\Pi_{t=0}^{H-1} \mathcal{P}(s_{t+1}|a_t,s_t) \pi_x(a_{t+1}|s_{t+1})$; $g(x;\zeta) \coloneqq \sum_{h=0}^H \nabla\log\pi_x(a_h|s_h)\sum_{t=h}^H \gamma^t r(s_t,a_t)$, and we omit $i$ since the result holds for general case. 

For any $x,x'\in\mathbb{R}^{d_0}$, we have
\begin{align}\label{eq:phessian lip}
    &\|\nabla^2 F (x) - \nabla^2 F (x')\| \nonumber\\
    &\leq \big\|\E_{\zeta \sim p(\cdot|x)}[H(x;\zeta)] - \E_{\zeta \sim p(\cdot|x')}[H(x;\zeta)] \big\|+ \big\|\E_{\zeta \sim p(\cdot|x')}[H(x;\zeta)] - \E_{\zeta \sim p(\cdot|x')}[H(x';\zeta)]\big\| \nonumber\\
    &\leq \big\|\E_{\zeta \sim p(\cdot|x)}[H(x;\zeta)] - \E_{\zeta \sim p(\cdot|x')}[H(x;\zeta)] \big\|
    + \E_{\zeta \sim p(\cdot|x')}\big\|H(x;\zeta) - H(x';\zeta)\big\|.
\end{align}
We consider the second term first. By Lemma \ref{lemma:product lip}, in order for $H(x;\zeta)$ to be Lipschitz continuous w.r.t. $x$, it suffices to prove: i) $\nabla \log \pi_x (a|s)$ is bounded and Lipschitz continuous; ii) $g(x;\zeta)$ is bounded and Lipschitz continuous; iii) $\nabla g(x;\zeta)$ is Lipschitz continuous. First, i) is directly implied by condition \ref{item:policy}. We then prove ii) as follows
\begin{align}
    \|g(x;\zeta)\| &\leq \sum_{h=0}^H \big\|\nabla\log\pi_x(a_h|s_h) \big\|\big|\sum_{t=h}^H \gamma^t r(s_t,a_t) \big| \leq \frac{C_\pi}{(1-\gamma)^2} \\
     \|g(x;\zeta)-g(x';\zeta)\| &= \sum_{h=0}^H \big\|\nabla\log\pi_x(a_h|s_h)- \nabla\log\pi_{x'}(a_h|s_h)\big\|\sum_{t=h}^H |\gamma^t r(s_t,a_t)| \nonumber\\
     &\leq \frac{L_\pi'}{(1-\gamma)^2}\|x-x'\|.
\end{align}
Next we prove iii) as follows
\begin{align}
    \|\nabla g(x;\zeta) - \nabla g(x';\zeta)\|
    &\leq \sum_{h=0}^H \big\|\nabla^2\log\pi_x(a_h|s_h)- \nabla^2\log\pi_{x'}(a_h|s_h)\big\|\sum_{t=h}^H |\gamma^t r(s_t,a_t)| \nonumber\\
     &\leq \frac{L_\pi''}{(1-\gamma)^2}\|x-x'\|.
\end{align}
By Lemma \ref{lemma:product lip}, we know i), ii) and iii) imply the Lipschitz continuity of $H(x;\zeta)$, i.e. it holds that
\begin{align}\label{eq:phessian lip 2}
    \|H(x;\zeta)-H(x';\zeta)\| \leq \frac{L_\pi''+2 H C_\pi L_\pi'}{(1-\gamma)^2}\|x-x'\|.
\end{align}

The first term in \eqref{eq:phessian lip} can be bounded as
\begin{align}\label{eq:TV}
    \big\|\E_{\zeta \sim p(\cdot|x)}[H(x;\zeta)] - \E_{\zeta \sim p(\cdot|x')}[H(x;\zeta)]\big\|
    &\leq \sup_{\zeta} \big\|H(x;\zeta)\big\| \sum_{\zeta}|p(\zeta|x)-p(\zeta|x')| \nonumber\\
    &\stackrel{\eqref{eq:H upper}}{\leq}\frac{H C_\pi^2 + L_\pi'}{(1-\gamma)^2} \sum_{\zeta}|p(\zeta|x)-p(\zeta|x')| \nonumber\\
    &\leq \frac{H C_\pi^2 + L_\pi'}{(1-\gamma)^2} (H+1) |\mathcal{A}| L_\pi \|x-x'\|.
\end{align}
where the second inequality follows from
\begin{align}\label{eq:H upper}
    \big\|H(x;\zeta)\big\| \leq \|g(x;\zeta)\|\sum_{t=0}^H \|\nabla \log\pi_x (a_t |s_t)\| + \|\nabla g(x;\zeta)\| \leq \frac{H C_\pi^2}{(1-\gamma)^2} + \frac{L_\pi'}{(1-\gamma)^2}.
\end{align}
% And the last inequality in \eqref{eq:TV} follows from
% \begin{align}
%     &\big\|p(\cdot|x)-p(\cdot|x')\big\|_{TV} \nonumber\\
%     &= \sum_{s_0,a_0,...,s_{H-1}}|\rho(s_0)\pi_x(a_0|s_0)\Pi_{t=0}^{H-2}\mathcal{P}(s_{t+1}|s_t,a_t)\pi_x(a_{t+1}|s_{t+1})\sum_{s_H}\mathcal{P}(s_H|s_{H-1},a_{H-1})\sum_{a_H}\pi_x(a_H|s_H)
% \end{align}
Substituting \eqref{eq:phessian lip 2} and \eqref{eq:TV} into \eqref{eq:phessian lip} yields
\begin{align}
    \|\nabla^2 F (x) - \nabla^2 F (x')\| \leq \Big(\frac{L_\pi''+2 H C_\pi L_\pi'}{(1-\gamma)^2} + \frac{H C_\pi^2 + L_\pi'}{(1-\gamma)^2} (H+1) |\mathcal{A}| L_\pi \Big)\|x-x'\|.
\end{align}
This implies that $f_i^{n-1}(\cdot)$ is $\eta\big(\frac{L_\pi''+2 H C_\pi L_\pi'}{(1-\gamma)^2} + \frac{H C_\pi^2 + L_\pi'}{(1-\gamma)^2} (H+1) |\mathcal{A}| L_\pi \big)$-Lipschitz smooth for $n\in[N]$.

\noindent\textbf{(2) Conditions \ref{item:noise mampg}~and~\ref{item:traj}$\xrightarrow[]{}$ Assumption \ref{assumption:noise}}. It is clear that conditions \ref{item:noise mampg}~and~\ref{item:traj} imply condition \ref{item:noise sc}~and~\ref{item:ind}. Thus by Lemma \ref{lemma:sc}, Assumption \ref{assumption:noise} is satisfied. 

We will specify the estimators that satisfy condition \ref{item:noise mampg} as follow. 
% To estimate $f_i^N(x)=F_i(x)$, one can use:
% \begin{align}
%      f_i^N(x;\zeta_i^0) \coloneqq \sum_{t=0}^H \gamma^t r_i(s_t,a_t),
% \end{align}
% where $\zeta_i^0 = (s_0,a_0,...,s_H,a_H)$ is generated under policy $\pi_x$, transition distribution $\mathcal{P}_i$ and initial distribution $\rho_i$. Now we will justify that this estimator satisfies condition \ref{item:noise mampg}:
% \begin{align}
%     \E_{\zeta_i^0}[f_i^0(x;\zeta_i^0)] &= f_i^0(x), \nonumber\\
%     \E_{\zeta_i^0}[\|f_i^0(x;\zeta_i^0)-f_i^0(x)\|^2] &=  \E_{\zeta_i^0}\|f_i^0(x;\zeta_i^0)\|^2 - \|f_i^0(x)\|^2 \leq \E_{\zeta_i^0}\|f_i^0(x;\zeta_i^0)\|^2 \leq \frac{1}{(1-\gamma)^2}.
% \end{align}
To estimate $f_i^n(y)$ ($n=0,1,...,N-1$), one can use:
\begin{align}
  f_i^n(y;\zeta_i^n) \coloneqq y + \eta  \sum_{h=0}^H \nabla \log \pi_y (a_h|s_h)\sum_{t=h}^H \gamma^t r_i(s_t,a_t),~ n=0,1,...,N-1,
\end{align}
where $\zeta_i^n = (s_0,a_0,...,s_H,a_H)$ is generated under policy $\pi_y$, transition distribution $\mathcal{P}_i$ and initial distribution $\rho_i$. The estimator satisfies condition \ref{item:noise mampg}:
\begin{align}\label{eq:fin estimate}
    \E_{\zeta_i^n}[f_i^n(y;\zeta_i^n)] &= y + \eta  \nabla F_i(y) = f_i^n(y), \nonumber\\
    \E_{\zeta_i^n}[\|f_i^n(y;\zeta_i^n)-f_i^n(x)\|^2] &\leq \E_{\zeta_i^n}\|\sum_{h=0}^H \nabla \log \pi_y (a_h|s_h)\sum_{t=h}^H \gamma^t r_i(s_t,a_t)\|^2 \leq \frac{C_\pi^2}{(1-\gamma)^4}.
\end{align}
To estimate $\nabla f_i^n(y)$ ($n=0,1,...,N-1$), one can use:
\begin{align}
    \nabla f_i^n(y;\zeta_i^n) \coloneqq I + \eta H(y;\zeta_i^n),~ n=0,1,...,N-1,
\end{align}
where $\zeta_i^n = (s_0,a_0,...,s_H,a_H)$ is generated under policy $\pi_y$, transition distribution $\mathcal{P}_i$ and initial distribution $\rho_i$. The estimator satisfies condition \ref{item:noise mampg}:
\begin{align}
     \E_{\zeta_i^n}[\nabla f_i^n(y;\zeta_i^n)] &= I + \eta\nabla^2 F_i(y) = \nabla f_i^n(y), \nonumber\\
     \E_{\zeta_i^n}\|\nabla f_i^n(y;\zeta_i^n) -  \nabla f_i^n(y)\|^2 &\leq \E_{\zeta_i^n}\|\nabla f_i^n(y;\zeta_i^n)\|^2 \stackrel{\eqref{eq:H upper}}{\leq} 2 + 2 \eta^2 \frac{(HC_\pi^2 + L_\pi')^2}{(1-\gamma)^4}.
\end{align}
To estimate $\nabla f_i^N(x)$, one can use
\begin{align}
    \nabla f_i^N(x;\zeta_i^N) \coloneqq \sum_{h=0}^H \nabla \log \pi_x (a_h|s_h)\sum_{t=h}^H \gamma^t r_i(s_t,a_t),
\end{align}
where $\zeta_i^n = (s_0,a_0,...,s_H,a_H)$ is generated under policy $\pi_y$, transition kernel $\mathcal{P}_i$ and initial distribution $\rho_i$. This estimator satisfies Conditions \ref{item:noise mampg}, following the similar lines in \eqref{eq:fin estimate}.

\noindent\textbf{(3) Verifying Assumption \ref{assumption:strongly g} and \ref{assumption:integral}}. Assumption \ref{assumption:strongly g} is satisfied with $\lambda_n=1$ by directly plugging in $y^{n,*}(y^{n-1})=f^{n-1}(y^{n-1})$. Assumption \ref{assumption:integral} is satisfied by observing that
\begin{align}
    F(x) =  \frac{1}{M}\sum_{i=1}^M F_i (\Tilde{x}_i^N(x)) \leq \frac{1}{1-\gamma},
\end{align}
where we have used the fact that $F_i(x) \leq \frac{1}{1-\gamma}$ for any $x$.
\end{proof}

\section{Technical lemmas}
\begin{Lemma}\label{lemma:yk+1n-yn*dots}
Suppose Assumption \ref{assumption:y*} ~and~ \ref{assumption:vglip} hold. Recall we defined $L_y (n) =  \sum_{i=n}^{N} L_{y,i-1} L_{y,i-2}\dots L_{y,n} $ with $L_{y,n-1} L_{y,n-2}\dots L_{y,n} = 1$ for any $n \in [N]$. Then it holds that
\begin{align}
    \big\|v(x_k,y_k^{1:N}) \!-\! v(x_k)\big\| \!\leq\! L_{v,y} \sum_{n=1}^N L_y(n) \|y_k^n \!-\! y^{n,*}( y_k^{n\!-\!1})\|.
\end{align}
\end{Lemma}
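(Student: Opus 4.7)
The plan is a pure triangle-inequality unrolling that exploits both parts of Assumption \ref{assumption:y*} and \ref{assumption:vglip}. First, I would introduce the auxiliary ``fully nested'' fixed points $y_k^{1,\star} := y^{1,*}(x_k)$ and, recursively, $y_k^{n,\star} := y^{n,*}(y_k^{n-1,\star})$ for $n=2,\dots,N$, so that by definition $v(x_k)=v\bigl(x_k,y_k^{1,\star},\dots,y_k^{N,\star}\bigr)$. Applying the Lipschitz bound in Assumption \ref{assumption:vglip} directly gives
\begin{equation*}
\bigl\|v(x_k,y_k^{1:N}) - v(x_k)\bigr\|
\;\le\; L_{v,y}\sum_{n=1}^{N}\bigl\|y_k^n - y_k^{n,\star}\bigr\|,
\end{equation*}
reducing the claim to bounding $\sum_n\|y_k^n - y_k^{n,\star}\|$ by $\sum_n L_y(n)\,\|y_k^n - y^{n,*}(y_k^{n-1})\|$.

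Next, writing $\delta_n := \|y_k^n - y^{n,*}(y_k^{n-1})\|$ and $e_n := \|y_k^n - y_k^{n,\star}\|$, I would derive the one-step recursion by inserting the intermediate point $y^{n,*}(y_k^{n-1})$:
\begin{equation*}
e_n \;\le\; \bigl\|y_k^n - y^{n,*}(y_k^{n-1})\bigr\| + \bigl\|y^{n,*}(y_k^{n-1}) - y^{n,*}(y_k^{n-1,\star})\bigr\|
     \;\le\; \delta_n + L_{y,n}\,e_{n-1},
\end{equation*}
where for $n=1$ the second term vanishes since $y_k^0 = x_k = y_k^{0,\star}$ (so $e_1=\delta_1$), and for $n\ge 2$ the bound uses \eqref{eq:y*lip} from Assumption \ref{assumption:y*}.

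Unrolling this recursion yields $e_n \le \sum_{m=1}^{n}\bigl(\prod_{j=m+1}^{n} L_{y,j}\bigr)\delta_m$ with the empty product equal to $1$. Summing over $n\in[N]$ and swapping the order of summation gives
\begin{equation*}
\sum_{n=1}^{N} e_n \;\le\; \sum_{m=1}^{N}\Bigl(\sum_{i=m}^{N}\prod_{j=m+1}^{i} L_{y,j}\Bigr)\delta_m,
\end{equation*}
and the inner coefficient is exactly $L_y(m)$ as defined preceding the lemma statement (the telescoped product $L_{y,i-1}\cdots L_{y,m}$ with the $i=m$ term equal to $1$ by convention). Combining this with the Lipschitz reduction in the first step yields the stated inequality.

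The main bookkeeping obstacle is the index accounting in the unrolling, i.e.\ verifying that the coefficient of $\delta_m$ after the double sum collapses exactly to the prescribed $L_y(m)$; beyond that, the argument is entirely a triangle-inequality chain and uses no probabilistic structure.
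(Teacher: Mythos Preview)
Your approach is essentially identical to the paper's: apply the Lipschitz bound on $v$ in the $y$-variables to reduce to bounding $\sum_n e_n$, derive the one-step recursion $e_n \le \delta_n + (\text{Lipschitz const})\cdot e_{n-1}$ via the triangle inequality and Assumption~\ref{assumption:y*}, unroll it, and swap the order of summation. On the bookkeeping point you flagged: the paper's recursion actually carries the constant $L_{y,n-1}$ (rather than the $L_{y,n}$ that Assumption~\ref{assumption:y*} literally gives), which is what makes the coefficient collapse to $L_y(m)$ exactly as defined---this appears to be a harmless index slip in the paper, and your version with $L_{y,n}$ yields the same bound up to a relabeling of the constants.
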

 \begin{proof}
 By the Lipschitz continuity of $v(x,y^1,\dots,y^N)$ w.r.t. $y^1,\dots,y^N$, we have
 \begin{align}\label{eq:v-vk}
      \|v(x_k,y_k^{1:N}) - v(x_k)\|
      & \leq L_{v,y} \sum_{n=1}^N \|y_k^n - y^{n,*}(\dots y^{2,*}(y^{1,*}(x_k))\dots) \|.
 \end{align}
 For any $n \geq 2$, we have
 \begin{align}
      &\|y_k^n - y^{n,*}(\dots y^{2,*}(y^{1,*}(x_k))\dots) \| \nonumber\\
      &\leq \|y_k^n - y^{n,*}( y_k^{n-1})\| + \| y^{n,*}( y_k^{n-1})- y^{n,*}(\dots y^{2,*}(y^{1,*}(x_k))\dots) \| \nonumber\\
      &\leq\|y_k^n - y^{n,*}( y_k^{n-1})\| + L_{y,n-1}\|y_k^{n-1} -y^{n-1,*}(\dots y^{2,*}(y^{1,*}(x_k))\dots) \|.
 \end{align}
 Keep unraveling yields
 \begin{align}\label{eq:yk+1n-yn*dots}
     \|y_k^n - y^{n,*}(\dots y^{2,*}(y^{1,*}(x_k))\dots) \| 
     &\leq \sum_{j=1}^{n}  L_{y,n-1} L_{y,n-2}\dots L_{y,j}\|y_k^j - y^{j,*}( y_k^{j-1})\|,
 \end{align}
where $ L_{y,n-1} L_{y,n-2}\dots L_{y,n} \coloneqq 1$.
Substituting \eqref{eq:yk+1n-yn*dots} into the first inequality completes the proof.
 \end{proof}
 
 \begin{Lemma}\label{lemma:step sizes}
With any positive $\lambda_1$ and non-negative constants $\lambda_0$, $\lambda_2<\lambda_1$ and $C_1,...,C_4$, consider the following inequality about the step size $\beta_{k,n-1}$:
\begin{align}\label{eq:step size 0}
    (1+C_1 \beta_{k,n-1} + C_2 \beta_{k,n-1}^2)(1-\lambda_1 \beta_{k,n}) -1 + \lambda_2 \beta_{k,n} + C_3\alpha_k + C_4\alpha_k^2 \leq -\lambda_0 \alpha_k.
\end{align}
Suppose all step sizes are in the same time-scale. Then given any $\beta_{k,n}$, if $\alpha_k\leq\beta_{k,n-1}\leq 1$, the above inequality always admits solutions for $\beta_{k,n-1}$.
\end{Lemma}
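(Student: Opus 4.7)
The plan is to expand the product on the left side of \eqref{eq:step size 0} and rearrange so that all $\beta_{k,n-1}$-dependent quantities sit on one side. Using $(1+C_1\beta_{k,n-1}+C_2\beta_{k,n-1}^2)(1-\lambda_1\beta_{k,n})=1-\lambda_1\beta_{k,n}+(C_1\beta_{k,n-1}+C_2\beta_{k,n-1}^2)(1-\lambda_1\beta_{k,n})$ and cancelling the stand-alone $-\lambda_1\beta_{k,n}$ with $+\lambda_2\beta_{k,n}$, the target inequality becomes the equivalent condition
\begin{equation*}
(C_1\beta_{k,n-1}+C_2\beta_{k,n-1}^2)(1-\lambda_1\beta_{k,n})\;\leq\;(\lambda_1-\lambda_2)\beta_{k,n}-(\lambda_0+C_3)\alpha_k-C_4\alpha_k^2,
\end{equation*}
which reduces the question to finding $\beta_{k,n-1}\in[\alpha_k,1]$ making the left-hand side no larger than the right-hand side.

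Next I would exploit the ``same time-scale'' hypothesis by writing $\beta_{k,n}=b\alpha_k$ for a constant $b>0$ whose value one is free to fix when designing the step-size schedule. Choosing $b$ large enough that $(\lambda_1-\lambda_2)b>C_1+\lambda_0+C_3$ (together with an initial step size small enough to absorb the $O(\alpha_k^2)$ remainder) forces the right-hand side above to be bounded below by $(C_1+\delta)\alpha_k$ for some $\delta>0$. Since $\lambda_1\beta_{k,n}<1$ for small $\alpha_k$, the left-hand side is a continuous, nonnegative, strictly increasing function of $\beta_{k,n-1}$ that vanishes at $\beta_{k,n-1}=0$, so by the intermediate value theorem solutions exist in $(0,1]$.

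Finally, to verify the range constraint $\beta_{k,n-1}\geq\alpha_k$, I would substitute the trial choice $\beta_{k,n-1}=\alpha_k$ and check directly that $(C_1\alpha_k+C_2\alpha_k^2)(1-\lambda_1\beta_{k,n})\leq C_1\alpha_k+C_2\alpha_k^2$ is dominated by the lower bound on the right-hand side established in the previous step, yielding an admissible $\beta_{k,n-1}\in[\alpha_k,1]$. The only subtlety, and the main ``obstacle'' such as it is, is a bookkeeping one: one must ensure that the threshold on $b$ and on $\alpha_1$ chosen here is compatible with the other step-size constraints \eqref{eq:step size a}--\eqref{eq:step size c} of Theorem \ref{theorem:strongly monotone}. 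This is handled by applying the lemma recursively from $n=N$ down to $n=1$ so that each $\beta_{k,n-1}$ is calibrated to the already-fixed $\beta_{k,n}$, producing a consistent set of time-scale constants for the entire multi-sequence update.
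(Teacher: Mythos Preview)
Your argument is correct and follows essentially the same route as the paper: expand and rearrange \eqref{eq:step size 0}, then use the same-time-scale hypothesis (a fixed ratio between $\alpha_k$, $\beta_{k,n-1}$, $\beta_{k,n}$) together with a small enough initial step size to make the inequality hold. The paper streamlines the algebra slightly by first linearizing via $C_2\beta_{k,n-1}^2\le C_2\beta_{k,n-1}$ and $C_4\alpha_k^2\le C_4\alpha_k$ (using $\beta_{k,n-1},\alpha_k\le 1$), which yields the explicit sufficient condition $\beta_{k,n-1}\le (\lambda_1-\lambda_2)\beta_{k,n}/(\lambda_0+C_1+C_2+C_3+C_4)$; you instead keep the quadratic and verify the endpoint $\beta_{k,n-1}=\alpha_k$ directly, which is equivalent. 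One minor remark: the intermediate value theorem step is unnecessary, since you are after an inequality rather than an equation and your final paragraph already exhibits an admissible point; the IVT paragraph can be dropped without loss.
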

\begin{proof}
First we have
\begin{align}
    C_2 \beta_{k,n-1}^2 \leq C_2\beta_{k,n-1},~~C_4 \alpha_k^2 \leq C_4\alpha_k.
\end{align}
With the above inequality, we can simplify \eqref{eq:step size 0} to
  \begin{equation}\label{eq:step size 1}
    (1+(C_1+C_2)\beta_{k,n-1})(1-\lambda_1 \beta_{k,n}) + \lambda_2\beta_{k,n} \leq 1-(\lambda_0+C_3+C_4)\alpha_k.
\end{equation}
By $\lambda_2\beta_{k,n} \!\leq\! (1\!+\!(C_1\!+\!C_2)\beta_{k,n\!-\!1})\lambda_2\beta_{k,n}$, the sufficient condition of \eqref{eq:step size 0} is
\begin{equation}\label{eq:step size 2}
(1+(C_1+C_2)\beta_{k,n-1})(1-\lambda' \beta_{k,n}) \!\leq\! 1-(\lambda_0+C_3+C_4)\alpha_k.
\end{equation}
where $\lambda'=\lambda_1-\lambda_2 > 0$.
Next we show that \eqref{eq:step size 2} holds. 
% for $\beta_{k,n-1}$ and $\alpha_k$ sufficiently small compared to $\beta_{k,n}$. 
With $\alpha_k\leq\beta_{k,n-1}$, rearranging and simplifying \eqref{eq:step size 2} gives
\begin{equation}
    \beta_{k,n-1} \leq \lambda'\frac{\beta_{k,n}}{\lambda_0+C_1+C_2+C_3+C_4},
\end{equation}
which can be satisfied if $\beta_{k,n-1},\beta_{k,n}$ are in the same scale, and $\beta_{1,n-1}$ is small relative to $\beta_{1,n}$.
\end{proof}

\begin{Lemma}[Robbins-Siegmund {\citep[Theorem 2.3.5]{gadet2017stochastic}}]\label{lemma:robbins-siegmund}
Consider a sequence of $\sigma$-algebras $\{\mathcal{F}_k\}_{k\geq1}$ and four integrable non-negative sequences $\{U_k\}, \{V_k\}, \{\tau_k\}, \{\delta_k\}$ that satisfy
\begin{enumerate}
    \item [i)] $U_k, V_k, \tau_k, \delta_k$ are $\mathcal{F}_k$-measurable.
    \item [ii)] $\Pi_{k\geq 1} (1+\tau_k) < \infty$ and $\sum_{k \geq 1}\E[\beta_k]<\infty$.
    \item [iii)] For $k\geq1$, $\E[V_{k+1}|\mathcal{F}_k] \leq V_k(1+\tau_k) + \delta_k - U_{k+1}$.
\end{enumerate}
Then it holds that
\begin{enumerate}
    \item [1)] $V_k \xrightarrow[]{k\xrightarrow[]{}\infty}V_{\infty}<\infty$ and $\sup_{k\geq 1}\E[V_k]< \infty$.
    \item [2)] $\sum_{k\geq 1}\E[U_k]<\infty$ and $\sum_{k\geq 1}U_k < \infty$ a.s.
\end{enumerate}
\end{Lemma}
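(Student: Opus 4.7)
The plan is to run the classical Robbins–Siegmund argument: absorb the multiplicative drift $(1+\tau_k)$ by rescaling, then cancel the additive pieces via a telescoping compensator so as to exhibit an $\mathcal{F}_k$-supermartingale, and finally invoke Doob's a.s.\ convergence theorem before translating back to $V_k$ and $U_k$. Concretely, set $A_0\coloneqq 1$ and $A_k\coloneqq\prod_{j=1}^{k}(1+\tau_j)$. Since $\tau_j\ge 0$ is $\mathcal{F}_j$-measurable, $A_k$ is $\mathcal{F}_k$-measurable, non-decreasing, bounded below by $1$, and by hypothesis (ii) converges a.s.\ to a finite limit $A_\infty$. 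Dividing (iii) by $A_k$ and setting $\widetilde V_k\coloneqq V_k/A_{k-1}$ gives
\[
\E[\widetilde V_{k+1}\mid\mathcal{F}_k]\;\le\;\widetilde V_k+\delta_k/A_k-\E[U_{k+1}\mid\mathcal{F}_k]/A_k,
\]
which has eliminated the multiplicative factor.

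I would then compensate additively by setting
\[
Z_k\;\coloneqq\;\widetilde V_k\;+\;\sum_{j=1}^{k-1}\frac{U_{j+1}}{A_j}\;-\;\sum_{j=1}^{k-1}\frac{\delta_j}{A_j},
\]
and a one-line telescoping calculation using the displayed recursion shows $\E[Z_{k+1}\mid\mathcal{F}_k]\le Z_k$, so $(Z_k)$ is an $\mathcal{F}_k$-supermartingale. It is not non-negative, but satisfies $Z_k\ge -\sum_{j\ge 1}\delta_j/A_j\ge -\sum_{j\ge 1}\delta_j$, whose expectation equals $\sum_j\E[\delta_j]<\infty$ by Tonelli and (ii). Hence $\sup_k\E[Z_k^-]<\infty$, and Doob's supermartingale convergence theorem yields $Z_k\to Z_\infty$ a.s.\ to a finite limit.

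From here the a.s.\ halves of (1) and (2) fall out cleanly. The series $\sum_j\delta_j/A_j<\infty$ a.s., and $k\mapsto\sum_{j<k}U_{j+1}/A_j$ is non-decreasing, so a.s.\ convergence of $Z_k$ forces both $\widetilde V_k$ and $\sum_j U_{j+1}/A_j$ to converge a.s.\ to finite limits. Multiplying by $A_{k-1}\le A_\infty$ recovers $V_k\to V_\infty<\infty$ a.s., and the inequality $A_j\le A_\infty$ gives $\sum_k U_k\le A_\infty\sum_j U_{j+1}/A_j<\infty$ a.s. For the expectation bounds, taking expectations in the supermartingale inequality and iterating produces
\[
\E[\widetilde V_k]\;+\;\sum_{j=1}^{k-1}\E[U_{j+1}/A_j]\;\le\;\E[V_1]+\sum_{j\ge 1}\E[\delta_j]\;<\;\infty,
\]
so $\sup_k\E[\widetilde V_k]<\infty$ and $\sum_k\E[U_k/A_{k-1}]<\infty$.

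The main obstacle is the last step: lifting these rescaled bounds to $\sup_k\E[V_k]<\infty$ and $\sum_k\E[U_k]<\infty$, because the random normalizer $A_{k-1}$ is only a.s.\ finite and not uniformly bounded, so it cannot in general be pulled outside the expectation. In the applications in the main text $\tau_k$ is deterministic (essentially of order $\alpha_k^2$), in which case $A_k$ is deterministic and uniformly bounded by $A_\infty$, and the transfer is immediate: $\E[V_k]\le A_\infty\E[\widetilde V_k]$ and $\E[U_k]\le A_{k-1}\E[U_k/A_{k-1}]$. In the fully general statement one would instead apply Fatou to $V_k=\widetilde V_k\, A_{k-1}$ using the a.s.\ limits obtained above, which is where the most care is needed.
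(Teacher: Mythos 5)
The paper does not prove Lemma \ref{lemma:robbins-siegmund} at all --- it is quoted verbatim from \citet[Theorem 2.3.5]{gadet2017stochastic} and used as a black box in the proof of Theorem \ref{theorem:strongly monotone} --- so there is no in-paper argument to compare yours against. Your sketch is the standard supermartingale proof of Robbins--Siegmund and it is essentially correct: the rescaling by $A_k=\prod_{j\le k}(1+\tau_j)$ is legitimate because $A_k$ is $\mathcal{F}_k$-measurable and can be moved through the conditional expectation; the compensated process $Z_k$ is a supermartingale with $\sup_k\E[Z_k^-]\le\sum_j\E[\delta_j]<\infty$ (here you correctly read the paper's condition (ii), whose ``$\sum_k\E[\beta_k]<\infty$'' is a typo for $\sum_k\E[\delta_k]<\infty$); and the a.s.\ conclusions follow because the monotone partial sums $\sum_{j<k}U_{j+1}/A_j$ must converge rather than diverge, since $\widetilde V_k\ge 0$ and $\sum_j\delta_j/A_j<\infty$ a.s. The one place you are right to hesitate is the transfer of the $L^1$ bounds from $\widetilde V_k$ and $U_k/A_{k-1}$ back to $V_k$ and $U_k$ when $\tau_k$ is genuinely random: Fatou alone gives inequalities in the wrong direction for $\sup_k\E[V_k]<\infty$, and the general statement really needs the product $\prod_k(1+\tau_k)$ to be deterministic or uniformly bounded (or an integrability condition on it). This caveat is harmless here: in the only place the paper invokes the lemma, namely \eqref{eq:jk+1-jkfinal}, the recursion has $\tau_k=0$ and deterministic $\delta_k=\Theta(\alpha_k^2)+\Theta(\sum_n(1+\beta_{k,n-1}+\beta_{k,n-1}^2)\beta_{k,n}^2)$, and only the a.s.\ conclusions $\sum_k\alpha_k\mathcal{J}_k<\infty$ and $\mathcal{J}_k\to\mathcal{J}_\infty$ are used, so your argument fully covers the paper's needs.
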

%  Lemma \ref{lemma:robbins-siegmund} is the Robbins-Siegmund theorem, the proof of which can be found in \citep[Theorem 2.3.5]{gadet2017stochastic}.
 
 \begin{Lemma}\label{lemma:error metic equiv}
 Suppose Assumption \ref{assumption:y*} holds. Then there exists a positive constant $C_N$ such that
 \begin{align}
 \|x_k-x^*\|^2 + \sum_{n=1}^N \|y_k^n-y^{n,*}\|^2 \leq C_N\big(\|x_k-x^*\|^2 + \sum_{n=1}^N \|y_k^n-y^{n,*}(y_k^{n-1})\|^2\big).
 \end{align}
% Moreover, if there exists a constant $c_v$ that that $\|v(x)\|\leq c_v \|x-x^*\|$ for any $x$, we have
%  \begin{align}
%  \|v(x_k)\|^2 + \sum_{n=1}^N \|y_k^n-y^{n,*}\|^2 \leq c\big(\|x_k-x^*\|^2 + \sum_{n=1}^N \|y_k^n-y^{n,*}(y_k^{n-1})\|^2\big), \nonumber\\
%  \end{align}
\end{Lemma}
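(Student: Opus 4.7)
The plan is to relate the two error metrics by inserting the drifting fixed points $y^{n,*}(y_k^{n-1})$ between $y_k^n$ and the true limit $y^{n,*}$, and then to iteratively peel off one layer at a time using the Lipschitz property of $y^{n,*}(\cdot)$ from Assumption \ref{assumption:y*}. Using the identity $y^{n,*} = y^{n,*}(y^{n-1,*})$ for the true fixed point and the triangle inequality,
\begin{align*}
\|y_k^n - y^{n,*}\|
&\leq \|y_k^n - y^{n,*}(y_k^{n-1})\| + \|y^{n,*}(y_k^{n-1}) - y^{n,*}(y^{n-1,*})\|\\
&\leq \|y_k^n - y^{n,*}(y_k^{n-1})\| + L_{y,n}\|y_k^{n-1} - y^{n-1,*}\|.
\end{align*}

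The next step is to unroll this recursion down through $n-1, n-2, \ldots, 1$, terminating at $y_k^0 = x_k$ and $y^{0,*} = x^*$. With the shorthand $P_{j,n} := \prod_{i=j+1}^n L_{y,i}$ (with the empty product $P_{n,n} := 1$), this gives
\begin{align*}
\|y_k^n - y^{n,*}\| \leq \sum_{j=1}^n P_{j,n}\,\|y_k^j - y^{j,*}(y_k^{j-1})\| + P_{0,n}\,\|x_k - x^*\|.
\end{align*}
Squaring both sides with the Cauchy–Schwarz-type bound $(\sum_{j=0}^n a_j)^2 \le (n+1)\sum_{j=0}^n a_j^2$ and then summing over $n \in [N]$, every drift term $\|y_k^j - y^{j,*}(y_k^{j-1})\|^2$ appears with a coefficient bounded by $(N+1)\sum_{n\ge j} P_{j,n}^2$, and likewise for $\|x_k-x^*\|^2$.

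Collecting the coefficients and adding the trivial $\|x_k - x^*\|^2$ term to both sides yields the claim with an explicit constant of the form
\begin{align*}
C_N \;=\; 1 + (N+1)\max_{0 \le j \le N}\sum_{n=j}^N P_{j,n}^2,
\end{align*}
which is a polynomial in $N$ depending only on $\{L_{y,n}\}_{n=1}^N$. There is no real obstacle here; the argument is a routine triangle-inequality telescoping that exploits only the zeroth-order Lipschitz bound \eqref{eq:y*lip}, and the only bookkeeping to do is tracking the products $P_{j,n}$ when passing from the linear bound to the squared bound.
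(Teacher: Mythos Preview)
Your proposal is correct and follows essentially the same approach as the paper: both arguments insert $y^{n,*}(y_k^{n-1})$, apply the Lipschitz bound \eqref{eq:y*lip} to obtain a recursion in $n$, unroll it down to $\|x_k-x^*\|$, and then pass to squares via a Cauchy--Schwarz-type inequality. The only cosmetic difference is that the paper first sums the norms and then squares the total (reusing the unrolling from Lemma~\ref{lemma:yk+1n-yn*dots}), whereas you square each $\|y_k^n-y^{n,*}\|$ individually before summing; both routes produce a valid constant $C_N$ polynomial in $N$.
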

\begin{proof}
First note that under Assumption \ref{assumption:y*}, we have
\begin{align}\label{eqn.195}
  \sum_{n=1}^N \|y_k^n-y^{n,*}\|
  &= \sum_{n=1}^N \|y_k^n-y^{n,*}(\dots y^{2,*}(y^{1,*}(x^*))\|.
\end{align}
To bound the RHS of \eqref{eqn.195}, we can follow the derivation of \eqref{eq:v-vk}--\eqref{eq:yk+1n-yn*dots} with $x_k=x^*$ and obtain
 \begin{align}
 \sum_{n=1}^N \|y_k^n-y^{n,*}\|
  &=\sum_{n=1}^N \|y_k^n-y^{n,*}(\dots y^{2,*}(y^{1,*}(x^*))\| \nonumber\\
  & \leq L_y(1) \|y_k^1 - y^{1,*}(x^*)\| + \sum_{n=2}^N L_y(n) \|y_k^n \!-\! y^{n,*}( y_k^{n\!-\!1})\|,
 \end{align}
where $\{L_y(n)\}_{n=1}^N$ is a series of constants specified in Lemma \ref{lemma:yk+1n-yn*dots}. From the last inequality, we have
\begin{align}\label{eq:ykn-ykn* equiv}
    \sum_{n=1}^N \|y_k^n-y^{n,*}\| 
     &\leq L_y(1)\|y^{1,*}(x_k) \!-\! y^{1,*}(x^*)\| \!+\! L_y(1) \|y_k^1 - y^{1,*}(x_k)\| \!+\! \sum_{n=2}^N L_y(n) \|y_k^n \!-\! y^{n,*}( y_k^{n\!-\!1})\| \nonumber\\
     &\leq L_y(1) L_{y,1}\|x_k-x^*\| \!+\! \sum_{n=1}^N L_y(n) \|y_k^n \!-\! y^{n,*}( y_k^{n\!-\!1})\|.
\end{align}
Then we have
\begin{align}
   \|x_k-x^*\|^2 + \sum_{n=1}^N \|y_k^n-y^{n,*}\|^2 
    &\leq \Big(\|x_k-x^*\| + \sum_{n=1}^N \|y_k^n-y^{n,*}\| \Big)^2 \\
    &\stackrel{\eqref{eq:ykn-ykn* equiv}}{\leq}2(1+L_y(1) L_{y,1})^2\|x_k-x^*\|^2 \!+\! 2N\sum_{n=1}^N L_y^2(n) \|y_k^n \!-\! y^{n,*}( y_k^{n\!-\!1})\|^2.\nonumber
\end{align}
Choosing $C_N=2\max\{(1+L_y(1) L_{y,1})^2,N L^2_y(1),...,N L^2_y(n)\}$ completes the proof.
\end{proof}

\begin{Lemma}[Lipschitz continuity of a product.]\label{lemma:product lip}
Define $f_i:\mathbb{R}^{d}\mapsto\mathbb{R}^{d_i \times d_{i+1}}$. If there exist positive constants $L_1,L_2,...,L_n$ and $C_1,C_2,...,C_n$ such that for any $x,x' \in \mathbb{R}^d$ it holds that
\begin{align}
    \|f_i(x)-f_i(x')\| \leq L_i \|x-x'\|, ~\|f_i(x)\|\leq C_i,~\forall i\in[n].
\end{align}
Then it holds that
\begin{align}
    \|f_1(x)f_2(x)...f_n(x)-f_1(x')f_2(x')...f_n(x')\| \leq \sum_{j=1}^n C_1 C_2 ... L_j ... C_n \|x-x'\|.
\end{align}
\end{Lemma}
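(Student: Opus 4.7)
The plan is to prove this by a standard telescoping decomposition of the difference of products, combined with submultiplicativity of the matrix (operator) norm. This is a clean, routine argument; the only subtlety is bookkeeping the indices so that the bound $C_1 C_2 \cdots L_j \cdots C_n$ (with $L_j$ replacing $C_j$ in the $j$th summand) emerges naturally.

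First I would write the telescoping identity
\begin{equation*}
\prod_{i=1}^n f_i(x) - \prod_{i=1}^n f_i(x') = \sum_{j=1}^n \Bigl(\prod_{i=1}^{j-1} f_i(x)\Bigr)\bigl(f_j(x) - f_j(x')\bigr)\Bigl(\prod_{i=j+1}^n f_i(x')\Bigr),
\end{equation*}
where empty products are interpreted as the identity (of the appropriate dimension). This identity is verified by a one-line induction on $n$: the $n$th term subtracts $\prod_{i=1}^{n-1} f_i(x)\cdot f_n(x')$ and the $j<n$ terms reduce to the analogous telescoping identity for $n-1$ with $f_{n-1}(x)$ replaced by $f_{n-1}(x)f_n(x')$.

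Next I would apply the triangle inequality followed by submultiplicativity of the operator norm to each summand, obtaining
\begin{equation*}
\Bigl\|\prod_{i=1}^{j-1} f_i(x)\Bigr\| \cdot \|f_j(x)-f_j(x')\| \cdot \Bigl\|\prod_{i=j+1}^n f_i(x')\Bigr\| \le \Bigl(\prod_{i=1}^{j-1} C_i\Bigr) L_j \|x-x'\| \Bigl(\prod_{i=j+1}^n C_i\Bigr),
\end{equation*}
using the bound $\|f_i(\cdot)\|\le C_i$ for the outer products and the Lipschitz condition on the middle factor. Summing over $j=1,\ldots,n$ yields exactly the claimed bound $\sum_{j=1}^n C_1\cdots L_j \cdots C_n \|x-x'\|$.

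There is no real obstacle here: the argument is essentially a matrix-valued analogue of the discrete Leibniz rule. The only thing to be careful about is that the norm must be submultiplicative for arbitrary rectangular matrices (which holds for the operator norm induced by the Euclidean norm, as well as for the Frobenius norm up to a dimensional constant), and that the dimensions $d_i \times d_{i+1}$ line up so the products are well-defined; both are implicit in the lemma's setup.
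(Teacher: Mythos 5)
Your proposal is correct and follows exactly the same telescoping decomposition used in the paper: insert intermediate terms that swap one factor from $x$ to $x'$ at a time, then bound each summand via the triangle inequality, submultiplicativity, and the hypotheses $\|f_i(\cdot)\|\le C_i$ and the Lipschitz condition on $f_j$. No substantive differences.
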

\begin{proof}
 We can decompose the product as
\begin{align}
    &\|f_1(x)f_2(x)...f_n(x)-f_1(x')f_2(x')...f_n(x')\|  \nonumber\\
    &= \|f_1(x)f_2(x)...f_n(x)-f_1(x')f_2(x)...f_n(x) + f_1(x')f_2(x)...f_n(x)-f_1(x')f_2(x')...f_n(x) \nonumber\\
    &\quad+\cdots+f_1(x')f_2(x')...f_n(x)-f_1(x')f_2(x')...f_n(x')\| \nonumber\\
    &\leq C_2...C_n\|f_1(x)-f_1(x')\| \!+\! C_1C_3..C_n\|f_2(x)-f_2(x')\| \!+\! \cdots \!+\! C_1C_2..C_{n-1}\|f_n(x)-f_n(x')\| \nonumber\\
    &\leq \sum_{j=1}^n C_1 C_2 ... L_j ... C_n \|x-x'\|.
\end{align}
\vspace{-0.2cm}
This completes the proof.
\end{proof}
\vspace{-1cm}
\end{document}